\newtheorem{Theorem}{Theorem}
\newtheorem{Theorem*}{Theorem}
\newtheorem{Claim*}[Theorem]{Claim}
\newtheorem{Corollary}[Theorem]{Corollary}
\newtheorem{CounterExample*}{$\overline{\hbox{\bf Example}}$}
\newtheorem{Example}[Theorem]{Example}
\newtheorem{Example*}[Theorem]{Example}
\newtheorem{Intuition*}[Theorem]{Intuition}
\newtheorem{Joke*}[Theorem]{Joke}
\newtheorem{Lemma}[Theorem]{Lemma}
\newtheorem{Lemma*}[Theorem]{Lemma}
\newtheorem{Open problem}[Theorem]{Open problem}
\newtheorem{Question*}[Theorem]{Question}
\newtheorem{Remark}[Theorem]{Remark}
\newtheorem{Fact}[Theorem]{Fact}
\def \bSubexa    {\begin{subexa}}
\newcommand{\ignore}[1]{}
\newcommand{\II}{\mathbb{I}} 
\newcommand{\EE}{\mathbb{E}}
\newcommand{\RR}{\mathbb{R}}
\def \cA     {{\cal A}}
\def \cC     {{\cal C}}
\def \cF     {{\cal F}}
\def \cG     {{\cal G}}
\def \cO     {{\cal O}}
\def \cP     {{\cal P}}
\def \cX     {{\cal X}}
\newcommand{\ie}{\textit{i.e.,}\xspace}  
\newcommand{\iid}{\textit{i.i.d.}} 
\definecolor{light}{gray}{.75}
\def \upto  {{,}\ldots{,}}
\newcommand{\ed}{\stackrel{\mathrm{def}}{=}}
\def\ignore#1{}
\newcommand{\bi}{\begin{itemize}}
\newcommand{\ei}{\end{itemize}}
\def\orpro{\mathop{\mathchoice
   {\vee\kern-.49em\raise.7ex\hbox{$\cdot$}\kern.4em}
   {\vee\kern-.45em\raise.63ex\hbox{$\cdot$}\kern.2em}
   {\vee\kern-.4em\raise.3ex\hbox{$\cdot$}\kern.1em}
   {\vee\kern-.35em\raise2.2ex\hbox{$\cdot$}\kern.1em}}\limits}
\def\andpro{\mathop{\mathchoice
 {\wedge\kern-.46em\lower.69ex\hbox{$\cdot$}\kern.3em}
 {\wedge\kern-.46em\lower.58ex\hbox{$\cdot$}\kern.25em}
 {\wedge\kern-.38em\lower.5ex\hbox{$\cdot$}\kern.1em}
 {\wedge\kern-.3em\lower.5ex\hbox{$\cdot$}\kern.1em}}\limits}
\def\simge{\mathrel{%
   \rlap{\raise 0.511ex \hbox{$>$}}{\lower 0.511ex \hbox{$\sim$}}}}
\def\simle{\mathrel{
   \rlap{\raise 0.511ex \hbox{$<$}}{\lower 0.511ex \hbox{$\sim$}}}}
\newcommand{\wmups}[1]{\hat{\boldsymbol\mu}_{#1}}
\newcommand{\wmupss}[2]{\hat\mu_{#1,#2}}
\newcommand{\wmupij}{\wmupss ij}
\newcommand{\wmupi}{\wmups i}
\newcommand{\wmupone}{\wmups 1}
\newcommand{\wmupk}{\wmups k}
\newcommand{\classmeans}{\boldsymbol\mu}
\newcommand{\mups}[1]{\boldsymbol\mu_{#1}}
\newcommand{\mupss}[2]{\mu_{#1,#2}}
\newcommand{\mupij}{\mupss ij}
\newcommand{\mupji}{\mupss ji}
\newcommand{\muponei}{\mupss 1i}
\newcommand{\muptwoi}{\mupss 2i}
\newcommand{\mupi}{\mups i}
\newcommand{\mupione}{\mups {i+1}}
\newcommand{\mupone}{\mups 1}
\newcommand{\muptwo}{\mups 2}
\newcommand{\ws}[1]{w_{#1}}
\newcommand{\wi}{\ws i}
\newcommand{\wone}{\ws 1}
\newcommand{\wtwo}{\ws 2}
\newcommand{\wk}{\ws k}
\newcommand{\wws}[1]{\hat{w}_{#1}}
\newcommand{\wwi}{\wws i}
\newcommand{\wwone}{\wws 1}
\newcommand{\wwtwo}{\wws 2}
\newcommand{\wwk}{\wws k}
\newcommand{\sigpss}[2]{\sigma_{#1,#2}}
\newcommand{\sigpji}{\sigpss ji}
\newcommand{\sigponei}{\sigpss 1i}
\newcommand{\sigptwoi}{\sigpss 2i}
\newcommand{\pvecs}[1]{\mathbf{p}_{#1}}
\newcommand{\pvecone}{\pvecs 1}
\newcommand{\pvectwo}{\pvecs 2}
\newcommand{\pveci}{\pvecs i}
\newcommand{\pvecj}{\pvecs j}
\newcommand{\pveck}{\pvecs k}
\newcommand{\pvecss}[2]{p_{#1,#2}}
\newcommand{\pveconei}{\pvecss 1i}
\newcommand{\pvectwoi}{\pvecss 2i}
\newcommand{\pvecji}{\pvecss ji}
\newcommand{\wpvecs}[1]{\hat{\mathbf{p}}_{#1}}
\newcommand{\wpveci}{\wpvecs i}
\def \xvec  {\mathbf{x}}
\def \Xvec  {\mathbf{X}}
\def \Yvec  {\mathbf{Y}}
\def \Zvec  {\mathbf{Z}}
\def \uvec  {\mathbf{u}}
\def \yvec  {\mathbf{y}}
\def \fvec  {\mathbf{f}}
\def \avec  {\mathbf{a}}
\def \zerovec  {\mathbf{0}}
  \def \wfvec  {\hat{\mathbf{f}}}
\def \mup1  {{\boldsymbol{\mu}_1}}
\def \mup2  {{\boldsymbol{\mu}_2}}
\def \mupi  {{\boldsymbol{\mu}_i}}
\def \mupj  {{\boldsymbol{\mu}_j}}
\newcommand{\wsigma}{\hat{\sigma}}
\def \muave  {{\overline{\boldsymbol\mu}}}
\def \wmuave  {{\hat{\overline{\boldsymbol\mu}}}}
\def \vvec  {\mathbf{v}}
\def \wqsigma  {{\hat{\sigma}}}
\def \mup  {{\boldsymbol\mu}}
\def \dvec  {\boldsymbol{\Delta}}
\def \wwi    {\hat{w}_i}
\def \wwj   {\hat{w}_j}
\def \wi    {w_i}
\def \wj    {w_j}
\def \poly {{\rm{poly}}}
\newcommand{\norm}[1]{\left|\left|#1\right|\right|}
\newcommand{\lone}[2]{D({#1},{#2})}
\newcommand{\emp}{\mu}
\newcommand{\ExtAbs}[1]{\ifthenelse{\equal{\version}{ExtAbs}}{#1}{}}
\newcommand{\FullVer}[1]{\ifthenelse{\equal{\version}{FullVer}}{#1}{}}
\def\tcO{\widetilde{\cO}}
\def\lV{\left\lvert}
\def\rV{\right\rvert}
\newcommand{\Span}{{\rm{ Span}}}
\title{
Near-optimal-sample estimators for spherical Gaussian mixtures
}
 \author{Jayadev Acharya\thanks{jacharya@ucsd.edu}}
 \author{Ashkan Jafarpour\thanks{ashkan@ucsd.edu}}
 \author{Alon Orlitksy\thanks{alon@ucsd.edu}}
 \author{Ananda Theertha Suresh\thanks{asuresh@ucsd.edu}}
 \affil{University of California, San Diego}
\begin{document}
\begin{titlepage}
\clearpage
\maketitle
\thispagestyle{empty}
\begin{abstract}
Statistical and machine-learning algorithms are frequently applied
to high-dimensional data. In many of these
applications data is scarce, and often much more
costly than computation time.
We provide the first sample-efficient polynomial-time estimator for high-dimensional spherical Gaussian mixtures.

For mixtures of any $k$ $d$-dimensional spherical
Gaussians, we derive an intuitive spectral-estimator that
uses $\cO_k\bigl(\frac{d\log^2d}{\epsilon^4}\bigr)$ samples and
runs in time $\cO_{k,\epsilon}(d^3\log^5 d)$, both significantly lower
than previously known.
The constant factor $\cO_k$ is polynomial for sample complexity and 
is exponential for the time complexity,
again much smaller than what was previously known. 
We also show that $\Omega_k\bigl(\frac{d}{\epsilon^2}\bigr)$ samples
are needed for any algorithm.
Hence the sample complexity is near-optimal in the number of dimensions.

We also derive a simple estimator for $k$-component one-dimensional mixtures that
uses $\cO\bigl(\frac{k \log \frac{k}{\epsilon} }{\epsilon^2} \bigr)$ samples and runs in time $\tcO\left(\bigl(\frac{k}{\epsilon}\bigr)^{3k+1}\right)$.
Our other technical contributions include a faster algorithm for 
choosing a density estimate from a set of distributions, that minimizes the $\ell_1$ distance to an unknown underlying distribution.
\end{abstract}

%
%

\end{titlepage}
\section{Introduction}
\subsection{Background}
Meaningful information often resides in high-dimensional spaces:
voice signals are expressed in many frequency bands,
credit ratings are influenced by multiple parameters, and
document topics are manifested in the prevalence of numerous words.
Some applications, such as topic modeling and genomic analysis
consider data in over 1000 dimensions,~\cite{XingJK01,DhillonGK02}.

Typically, information can be generated by different
types of sources: voice is spoken by men or women,
credit parameters correspond to wealthy or poor individuals,
and documents address topics such as sports or politics.
In such cases the overall data follow a mixture
distribution~\cite{ReynoldsR95,TitteringtonSM85,Lindsay95}.

Mixtures of high-dimensional distributions are therefore central
to the understanding and processing of many natural phenomena.
Methods for recovering the mixture components from the data have
consequently been extensively studied by statisticians,
engineers, and computer scientists.

Initially, heuristic methods such as expectation-maximization
(EM) were developed~\cite{RednerW84,MaXJ01}.
Over the past decade, more rigorous algorithms were derived
to recover mixtures of $d$-dimensional spherical
Gaussians~\cite{DasguptaS00,DasguptaS07,HsuK13,AzizyanSW13,ChaudhuriDV09,VempalaW02},
general Gaussians~~\cite{Dasgupta99,AchlioptasM05,BelkinS10,KalaiMV10,MoitraV10,AndersonBGRV13},
and other log-concave distributions~~\cite{KannanSV08}.
Many of these algorithms consider mixtures where the $\ell_1$ distance
between the mixture components is $2-o_d(1)$,
namely approaches the maximum of 2 as $d$ increases.
They identify the distribution components in time and
samples that grow polynomially in the dimension $d$.
Recently,~\cite{KalaiMV10,MoitraV10}  showed that any
$d$-dimensional Gaussian mixture can be recovered in polynomial time.
However, their algorithm uses $>d^{100}$ time and samples.

A different approach that avoids the large component-distance
requirement and the high time and sample complexity,
considers a slightly more relaxed notion of approximation, sometimes
called \emph{PAC learning}.
PAC learning~\cite{KearnsMRRSS94} does not approximate each
mixture component, but instead derives a mixture distribution
that is close to the original one.
Specifically, given a distance bound $\epsilon>0$, error
probability $\delta>0$, and samples from the underlying
mixture $\fvec$, where we use boldface letters for $d$-dimensional
objects, PAC learning seeks a mixture estimate
$\wfvec$ with at most $k$ components such that $\lone{\fvec}{\wfvec} \leq \epsilon$
with probability $\geq 1-\delta$, where $\lone{\cdot}{\cdot}$
is some given distance measure, for example $\ell_1$ distance or 
KL divergence. This notion of estimation is also known as \emph{proper learning} in the literature.

\ignore{
An important class of high-dimensional distributions that arises
in many applications are \emph{product distributions}, where 
observations across different coordinates are independent
of each other.
For example, natural language processing often assumes the
\emph{bag of words} model where for a given topic, such
as sports or politics, the words are generated \iid.
Under the well-accepted Poisson sampling model~\cite{MacherU05}
where document lengths are Poisson distributed, the counts
of different words are independent of each other.
Hence for a given topic the vector of word counts in a document
follows a product distribution~\cite{???}.
In topic modeling, it is therefore natural to assume that the
vector of word counts will follow a mixture of product
distributions.
Similarly, in population genetics, for a given mixture component,
such as the person's race and gender, the presence of various
genes follows an independent Bernoulli distribution, and over
the whole population, the data follows a mixture of
Bernoulli-product
distributions~\cite{PritchardSDD00,SridharRH07,Chaudhuri07}.
An important special case of product distributions
is that of \emph{spherical-Gaussians}~\cite{DasguptaS00,VempalaW02},
where different coordinates have the same variance, though potentially
different means.
}

\ignore{An important class of high-dimensional distributions arising
in many applications are \emph{product distributions}, where 
observations across different coordinates are independent
of each other.
In topic modeling for example, under the standard
\emph{bag of words} and \emph{Poisson-sampling}
models~\cite{MacherU05},
it can be shown that given a specific document topic,
the number of times that different words appear in the document
will be independent of each other. Hence over all documents,
the vector of word counts will follow a mixture of
product distributions.
Similarly, in population genetics, for a given mixture component,
such as the person's race and gender, the presence of various
genes follows an independent Bernoulli distribution, and over
the whole population, the data follows a mixture of
Bernoulli-product
distributions~\cite{PritchardSDD00,SridharRH07,Chaudhuri07}.
An important and extensively studied special case of product
distributions are \emph{spherical-Gaussians}~\cite{DasguptaS00,DasguptaS07,HsuK13,AzizyanSW13,ChaudhuriDV09,VempalaW02},
where different coordinates have the same variance, though potentially
different means.}
An important and extensively studied special case of mixture
distributions are \emph{spherical-Gaussians}~\cite{DasguptaS00,DasguptaS07,HsuK13,AzizyanSW13,ChaudhuriDV09,VempalaW02},
where different coordinates have the same variance, though potentially
different means. Due to their simple structure, they are easier to analyze and under a minimum-separation assumption
have provably-practical algorithms for clustering and parameter estimation~\cite{DasguptaS00,DasguptaS07,ChaudhuriDV09,VempalaW02}.

\subsection{Sample complexity}
Reducing the number of samples is of great
practical significance. For example, in topic modeling every
sample is a whole document, in credit analysis every sample is a
person's credit history, and in genetics,
every sample is a human DNA. Hence samples can be very scarce
and obtaining them can be very costly.
By contrast, current CPUs run at several Giga Hertz,
hence samples are typically much more scarce of a resource than time.

Note that for one-dimensional statistical problems,
the need for sample-efficient algorithms has been
broadly recognized. The sample complexity of many
problems is known quite accurately, often to within a constant factor.
For example, for discrete distributions over $\{1\upto s\}$, 
an approach proposed in~\cite{OrlitskySVZ04} and its modifications were used in~\cite{Valiant10,Valiant11b} to estimate
the probability multiset using $\Theta(s/\log s)$ samples.
Learning one-dimensional $m$-modal distributions over $\{1\upto s\}$  requires
$\Theta(m\log (s/m)/\epsilon^3)$ samples~\cite{DaskalakisDS12}.
Similarly, one-dimensional mixtures of $k$ structured
distributions (log-concave, monotone hazard rate, and unimodal)
over $\{1\upto s\}$ can be learned with 
$\cO(k/\epsilon^4)$, $\cO(k\log (s/\epsilon)/\epsilon^4)$, 
and $\cO(k\log (s)/\epsilon^4)$ samples, respectively, and these
bounds are tight up to a factor of $\epsilon$~\cite{ChanDSS13}.
%

Compared to one dimensional problems, in high dimensions there is a polynomial gap 
in the sample complexity. For example, for learning spherical
Gaussian mixtures, the number of samples required by previous
algorithms is $\cO(d^{12})$ for $k=2$ components, and increased
exponentially with $k$~\cite{FeldmanSO06}. In this paper we bridge this gap, by constructing near-linear sample complexity estimators.
%
%
\ignore{
\vspace{3ex}
{\small{
\captionof{table}{PAC learning bounds}
\begin{tabular}{ | c |c | c | c | c | }\hline
  Paper & Distributions & Samples & Time & Details \\ \hline
  \cite{FreundM99} & Bernoulli $(k=2)$ &$\cO(d^2)$ & $\cO(d^2)$ & parameters bounded from $0$ \\ \hline
   \cite{FeldmanOS05} & Discrete &$ (d/\epsilon)^{4(k+1)}$ & $(d/\epsilon)^{2k^2(k+1)}$& Bernoulli, other discrete \\ \hline
  \cite{FeldmanSO06} & Axis aligned Gaussian& $ (d/\epsilon)^{4(k+1)}$ & $(d/\epsilon)^{2k^2(k+1)}$& bounded means, variance \\ \hline
     This work & Spherical Gaussians & $ \cO(d) $ & $(k \log d/\epsilon)^{(k-1)^2} d^2$ & same variance \\ \hline
   This work & axis aligned Gaussians $(k=2)$ & $ \cO(d) $ & $d^5$ & works for Poisson, Bernoulli \\ \hline
\end{tabular}}}
\vspace{3ex}
\begin{Remark}
The algorithms in~\cite{FreundM99,FeldmanOS05} use KL-divergence 
as the distance measure. It can be shown that their algorithms 
will have similar sample complexities if the $\ell_1$ distance 
were considered. The $\ell_1$ distance has several advantages 
over KL divergence such as symmetry, boundedness, and it also 
satisfies triangle inequality. Therefore similar to~\cite{MoitraV10,ChanDSS13},
we consider the $\ell_1$ distance. 
\end{Remark}
}

\subsection{Previous and new results}
Our main contribution is PAC learning $d$ dimensional Gaussian mixtures with near-linear samples. We show few auxiliary results
for one-dimensional Gaussians.
\subsubsection{$d$-dimensional Gaussian mixtures}
Several papers considered PAC learning of discrete- and
Gaussian-product mixtures.
\cite{FreundM99} considered mixtures of two $d$-dimensional
Bernoulli products where all probabilities are bounded away from 0.
They showed that this class is PAC learnable in
$\tcO(d^2/\epsilon^4)$ time and samples, where the $\tcO$
notation hides logarithmic factors.
\cite{FeldmanOS05} eliminated the probability constraints
and generalized the results from binary to arbitrary discrete
alphabets, and from 2 to $k$ mixture components.
They showed that mixtures of $k$ discrete products are PAC
learnable in $\tcO\bigl((d/\epsilon)^{2k^2(k+1)}\bigr)$ time, and although they
did not explicitly mention sample complexity, their algorithm
uses $\tcO\bigl((d/\epsilon)^{4(k+1)}\bigr)$ samples.
\cite{FeldmanSO06} generalized these results to Gaussian
products, showing in particular that mixtures of $k$
Gaussians, where the difference between the means normalized
by the ratio of standard deviations is bounded by $B$, are
PAC learnable in $\tcO\bigl((dB/\epsilon)^{2k^2(k+1)}\bigr)$ time,
and can be shown to use $\tcO\bigl((dB/\epsilon)^{4(k+1)}\bigr)$ samples.
These algorithms consider the KL divergence between the distribution
and its estimate, but it can be shown that the $\ell_1$ distance
would result in similar complexities.
It can also be shown that these algorithms or their simple
modifications have similar time and sample complexities
for spherical Gaussians as well.

Our main contribution shows that mixtures of 
spherical-Gaussians 
are PAC learnable in $\ell_1$ distance with sample complexity that is nearly linear in the dimension.
Specifically, Theorem~\ref{thm:ksphere} shows that mixtures of $k$
spherical-Gaussian distributions can be learned in
\[
 n=\cO\left(\frac{d k^9}{\epsilon^4} \log^2
\frac{d}{\delta}\right)=\cO_{k,\epsilon}(d\log^2d)
\]
samples and
\[
 \cO\Big(n^2d \log n +
 d^2  \Big(\frac{k^{7}}{\epsilon^3}\log \frac{d}{\delta}\Big)^{k^2}  \Big) 
= \tcO_{k,\epsilon}(d^3).
\]
time.
Observe that recent algorithms typically construct the covariance
matrix~~\cite{VempalaW02,FeldmanSO06}, hence require $\ge n d^2$
time. In that sense, for small values of $k$, the time complexity
we derive is comparable to the best such algorithms can hope
for. Observe also that the exponential dependence on $k$ is
of the form $d^2  \Big(\frac{k^{7}}{\epsilon^3}\log \frac{d}{\delta}\Big)^{k^2}  $, which is significantly
lower than the $d^{\cO(k^3)}$ dependence in previous results.

By contrast, Theorem~\ref{thm:lower} shows that PAC learning $k$-component
spherical Gaussian mixtures require $\Omega(dk/\epsilon^2)$ samples for any algorithm,
hence our distribution learning algorithms are nearly sample
optimal. In addition, their time complexity significantly improves on previously known ones.
\subsubsection{One-dimensional Gaussian mixtures}
Independently and around the same time as this work ~\cite{DaskalakisK13} showed that mixtures of two one-dimensional Gaussians
can be learnt with $\tcO(\epsilon^{-2})$ samples and in time $\cO(\epsilon^{-7.01})$.
We provide a natural estimator for learning mixtures of $k$ one dimensional Gaussians using some basic properties 
of Gaussian distributions and show that mixture of any $k$-one dimensional Gaussians can be learnt with
$\tcO(k\epsilon^{-2})$ samples and in time $\tcO\left(\bigl(\frac{k}{\epsilon}\bigr)^{3k+1}\right)$.

\subsection{The approach and technical contributions}
\ignore{
The \emph{distance} $d(\fvec,\cF)$ between a distribution $\fvec$ and
collection $\cF$ of distributions is $\min_{\fvec' \in\cF} d(\fvec,\fvec')$,
the smallest distance of $\fvecf$ from any distribution in $\cF$.
}
The popular \textsc{Scheffe} estimator takes a collection $\cF$
of distributions and uses $\cO(\log |\cF|)$ independent samples from an
underlying distribution $\fvec$ to find a distribution
in $\cF$ whose distance from $\fvec$ is at most a constant
factor larger than that of the distribution in $\cF$
that is closet to $f$~\cite{DevroyeL01}.
In Lemma~\ref{lem:scheffe}, we lower the time complexity
of the Scheffe algorithm from $\cO(|\cF|^2)$ time to $\tcO(|\cF|)$,
helping us reduce the time complexity of our algorithms.

Our goal is therefore to construct a small class of distributions
that is $\epsilon$-close to any possible underlying distribution.
For simplicity, consider spherical Gaussians with the same variance
and means bounded by $B$.
Take the collection of all distributions derived by
quantizing the means of all components in all coordinates
to $\epsilon_m$ accuracy,
and quantizing the weights to $\epsilon_w$ accuracy.
It can be shown that to get distance $\epsilon$
from the underlying distribution, it suffices to take
$\epsilon_m,\epsilon_w \leq 1/\poly_\epsilon(dk)$.
There are at most $\bigl(\frac{B}{\epsilon_m}\bigr)^{dk} \cdot\bigl(
\frac{1}{\epsilon_w} \bigr)^k=2^{\tcO_\epsilon(dk)}$ possible
combinations of the $k$ mean vectors and weights.
Hence \textsc{Scheffe} implies an exponential-time
algorithm with sample complexity $\tcO(dk)$.

To reduce the dependence on $d$, one can approximate the span of the $k$ mean vectors. This
reduces the problem from $d$ to $k$ dimensions, allowing us
to consider a distribution collection of size $2^{\cO(k^2)}$,
with \textsc{Scheffe} sample complexity of just $\cO(k^2)$.
\cite{FeldmanOS05,FeldmanSO06} constructs the sample correlation matrix and uses
$k$ of its columns to approximate the span of mean vectors. This approach requires the 
$k$ columns of the sample correlation matrix to be very close to the actual correlation matrix,
and thus requires a lot more samples.

We derive a spectral algorithm that uses the top $k$ eigenvectors of the sample covariance
matrix to approximate the span of the $k$ mean vectors. Since we use the entire covariance matrix
instead of just $k$ columns, a weaker concentration is sufficient and we gain on the sample complexity.

Using recent tools from non-asymptotic random matrix theory~\cite{Vershynin10,AhlswedeW02,Tropp12}, we show that 
the approximation of the span of the means converges in $\tcO(d)$ samples.
This result allows us to address most ``reasonable'' distributions,
but still there are some ``corner cases'' that need to be analyzed
separately. To address them, we modify some known clustering
algorithms such as single-linkage, and spectral projections.
While the basic algorithms were known before, our contribution
here, which takes a fair bit of effort and space, is to show
that judicious modifications of the algorithms and
rigorous statistical analysis yield polynomial time 
algorithms with near optimal sample complexity.

Our approach applies most directly to mixtures of spherical
Gaussians.
We provide a simple and practical
recursive clustering and spectral algorithm that estimates all
such distributions in $\cO_k(d\log^2 d)$ samples.
\ignore{To relate the two problems, note that 
The problem of \emph{learning parameters} or approximating individual components is the combination of 
PAC learning and \emph{identifiability problem}, where in identifiability one needs to show that
given two mixtures with sufficiently small distance, then the individual components are close.
In this paper, we concentrate on the PAC learning component. However it is worth noting that 
combined with recent \emph{identifiability} results from~\cite{KalaiMV10,MoitraV10},
and some additional work we can show that our algorithms also
learn parameters for spherical Gaussians (general $k$) and axis aligned Gaussians ($k=2$)
with sample complexity near linear in number of dimensions $d$, but with a much larger exponent in $\epsilon$.
Furthermore for spherical Gaussians the dependence on $k$ in the running time is of the form $f(k,\log d)d^3$ instead of
$d^k$ as with the previous algorithms.
}%

The paper is organized as follows. 
In Section~\ref{sec:prelims}, we introduce notations,
describe results on the Scheffe estimator, and state a lower bound.
In Section~\ref{sec:ksphere}, we present the algorithm for
$k$-spherical Gaussians. In Section~\ref{sec:one} we show a simple learning algorithm for one-dimensional Gaussian mixtures.
To preserve readability, most of the technical details and proofs are given in the appendix.

\section{Preliminaries}
\label{sec:prelims}

\subsection{Notation}
For arbitrary product distributions $\pvecone, \ldots, \pveck$ 
over a $d$ dimensional space let $\pvecji$ be the distribution
of $\pvecj$ over coordinate $i$, and let $\mupji$ and $\sigpji$
be the mean and variance of $\pvecji$ respectively. 
Let $\fvec =(\wone, \ldots ,\wk, \pvecone,\ldots, \pveck)$ be
the mixture of these distributions with mixing weights
$\wone, \ldots,\wk$.
We denote estimates of a quantity $\xvec$ by $\hat{\xvec}$.
It can be empirical mean or a more complex estimate.
$\norm{\cdot}$ denotes the spectral norm of a matrix and $\norm{\cdot}_2$ denotes the $\ell_2$ norm of a vector.

\subsection{Selection from a pool of distributions}
Many algorithms for learning mixtures over the domain $\cX$
first obtain a \emph{small} collection of mixtures distributions $\cF$
and then perform Maximum Likelihood test using the samples to output
a distribution~\cite{FeldmanOS05, FreundM99,DaskalakisDS12}.
Our algorithm also obtains a set of distributions
containing at least one that is close to the underlying in $\ell_1$ distance. 
The estimation problem now reduces to the following. Given a class $\cF$
of distributions  and samples from an unknown distribution $\fvec$,
find a distribution in $\cF$ that is \emph{close} to $\fvec$. 
Let $\lone{\fvec}{\cF} \ed \min_{\fvec_i \in\cF}\lone{\fvec}{\fvec_i}$.

The well-known Scheffe's method~\cite{DevroyeL01}
 uses $\cO(\epsilon^{-2}\log |\cF|)$ samples from the underlying distribution
$\fvec$, and in time $\cO(\epsilon^{-2}|\cF|^2T \log |\cF|)$ outputs a 
distribution in $\cF$ with $\ell_1$ distance of at most
$9.1\max(\lone{\fvec}{\cF},\epsilon)$ from $\fvec$, where $T$ is the time required to compute
the probability of an $x\in\cX$ by a distribution in $\cF$. 
A naive application of this algorithm requires time quadratic in the
number of distributions in $\cF$. We propose a variant of this,
that works in near linear time, albeit requiring slightly more
samples. More precisely, 

%
\begin{Lemma}[Appendix~\ref{app:Scheffe}]
\label{lem:scheffe}
Let $\epsilon>0$. For some constant $c$, given $\frac{c}{\epsilon^2}{\log\bigl(\frac{|\cF|}{\delta}\bigr)}$ independent
samples from a distribution $\fvec$,
with probability $\ge1 - \delta$, the output $\wfvec$ of \textsc{modified scheffe}
 $\lone{\wfvec}{\fvec} \leq 1000 \max(\epsilon, \lone{\fvec}{\cF}).$
Furthermore, the algorithm runs in time $\cO\bigl(\frac{|\cF|T\log (|\cF|/\delta)}{\epsilon^2}\bigr)$.
\end{Lemma}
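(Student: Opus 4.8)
The plan is to isolate the two ingredients behind Scheffe-type selection and then to cheapen the second one. For a pair $\fvec_i,\fvec_j\in\cF$ write the Scheffe set $A_{ij}=\{x:\fvec_i(x)>\fvec_j(x)\}$, let $\mu_n(A)$ denote the fraction of the $n$ samples landing in $A$, and recall that in $\ell_1$ one has $\fvec_i(A_{ij})-\fvec_j(A_{ij})=\half\lone{\fvec_i}{\fvec_j}$. The first ingredient is a uniform deviation bound. Each $A_{ij}$ is a \emph{fixed} event, so Hoeffding gives $\Pr[\,|\mu_n(A_{ij})-\fvec(A_{ij})|>\gamma\,]\le 2e^{-2n\gamma^2}$; choosing $\gamma=\Theta(\epsilon)$ and union-bounding over the at most $\binom{|\cF|}{2}$ Scheffe sets shows that $n=\frac{c}{\epsilon^2}\log\frac{|\cF|}{\delta}$ samples suffice so that, with probability $\ge 1-\delta$, the bound $|\mu_n(A_{ij})-\fvec(A_{ij})|\le\gamma$ holds for \emph{every} $i,j$ at once. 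This is exactly the sample size claimed, and it is the only place randomness enters.

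On this event I would prove the single decisive inequality that drives every comparison. Declare that $\fvec_i$ \emph{beats} $\fvec_j$ when $\mu_n(A_{ij})$ is at least as close to $\fvec_i(A_{ij})$ as to $\fvec_j(A_{ij})$. Writing $\theta=\max(\epsilon,\lone{\fvec}{\cF})$, suppose $\lone{\fvec}{\fvec_i}\le\theta$ while $\lone{\fvec}{\fvec_j}$ is large. The triangle inequality gives $\fvec_i(A_{ij})-\fvec_j(A_{ij})=\half\lone{\fvec_i}{\fvec_j}\ge\half\bigl(\lone{\fvec}{\fvec_j}-\theta\bigr)$, whereas $|\mu_n(A_{ij})-\fvec_i(A_{ij})|\le\gamma+\half\theta$; comparing the two forces $\fvec_i$ to win as soon as $\lone{\fvec}{\fvec_j}>c_0\theta$ for a fixed constant $c_0$ (a few units, using $\gamma\le\epsilon\le\theta$). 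In particular the nearest member $\fvec^\ast$, with $\lone{\fvec}{\fvec^\ast}=\lone{\fvec}{\cF}\le\theta$, defeats every candidate that is not already within $c_0\theta$ of $\fvec$. Each duel costs only the $\cO(nT)$ time to evaluate the two densities at the $n$ samples and the two set-probabilities, so $\cO(|\cF|)$ duels meet the claimed running time.

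It remains to extract, using only $\cO(|\cF|)$ duels, a candidate that inherits the guarantee of $\fvec^\ast$, and this is where I expect to spend almost all of the effort. The textbook estimator plays the full round-robin and outputs the candidate with the fewest losses, so its bound comes from a \emph{global} counting argument that does not degrade with $|\cF|$. A single duel, by contrast, only certifies the winner's $\ell_1$ distance up to a constant factor of the loser's, so chaining duels — as any knockout or ``king of the hill'' schedule does — can amplify the error by a factor that grows with the length of the chain, and hence with $|\cF|$. The heart of the proof is therefore to schedule the $\cO(|\cF|)$ comparisons so that the output's distance is certified through an $\cO(1)$-length chain back to a near-optimal candidate, keeping the final constant independent of $|\cF|$; the deliberately generous $1000$ (versus the classical $9.1$) is precisely the slack that buys this.

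Concretely I would maintain a running champion, replacing it only when a challenger wins by a margin, and then argue from the duel lemma that right after $\fvec^\ast$ is processed the champion is already within $c_0\theta$ of $\fvec$, folding the small per-comparison failure probabilities into $\delta$. The main obstacle — which I do not expect to be routine — is to show that the chosen schedule \emph{never lets the error ratchet upward}, i.e.\ that the champion cannot be repeatedly displaced by successively worse candidates each lying just inside the factor-$c_0$ ambiguity zone of the previous one; controlling this (by a suitable margin rule, or by randomizing the processing order and bounding the expected number of such upsets) and then pinning down the resulting constant is the crux. Once that is in place, combining it with the concentration and duel steps yields $\lone{\wfvec}{\fvec}\le 1000\max(\epsilon,\lone{\fvec}{\cF})$ in time $\cO\!\bigl(|\cF|\,nT\bigr)=\cO\!\bigl(\tfrac{|\cF|\,T\log(|\cF|/\delta)}{\epsilon^2}\bigr)$.
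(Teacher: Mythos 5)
Your first two steps (Hoeffding plus a union bound over the $\binom{|\cF|}{2}$ Scheffe sets, and the pairwise ``duel'' inequality showing the nearest candidate $\fvec^\ast$ beats anything far from $\fvec$) are correct and match the standard ingredients the paper also relies on. But the proof has a genuine gap exactly where you say you expect to spend the effort: you never actually solve the scheduling problem. Your proposed ``running champion with a margin rule'' is left as an open obstacle (``which I do not expect to be routine''), and indeed it is not routine --- with comparisons that are only reliable when the two contestants' distances differ by a constant factor, a sequential champion chain of length $|\cF|$ can be displaced repeatedly by candidates each lying just inside the ambiguity zone of the previous one, so the final error can grow geometrically in the number of upsets; neither a margin rule nor randomizing the order is shown (or easily shown) to prevent this. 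Since the entire point of the lemma is the near-linear-time selection rule, deferring this step means the statement is not proved.

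The paper's resolution is structurally different and avoids chained certification altogether. \textsc{modified Scheffe} runs a random knockout to halve the surviving set $\cG$ at each round, but it never uses the knockout to certify distances. Instead, at every round it also draws a random subset $\cA\subseteq\cG$ of size $\min\{|\cG|,|\cF|^{1/3}\}$, plays a full round-robin (\textsc{Scheffe*}, which has the non-amplifying factor-$10$ guarantee) inside $\cA$, and promotes the winner to a small pool $\cC$; at the end it plays a round-robin on $\cC$. The analysis is a two-case density argument: either at some round the elements within $10\epsilon$ of $\fvec$ make up at least a $\log(2/\delta)/|\cF|^{1/3}$ fraction of $\cG$, in which case $\cA$ captures one with high probability and the round-robin inside $\cA$ places a $100\epsilon$-close element into $\cC$; or the good elements stay sparse at every round, in which case the random pairing matches $\fvec^\ast$ against a far candidate in every knockout round, so $\fvec^\ast$ wins each duel outright (one application of the pairwise lemma per round, no chaining) and survives into $\cC$. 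Either way $\cC$ contains a $100\epsilon$-close element, and the final round-robin on $\cC$ loses only one more factor of $10$, giving the constant $1000$; the comparison count is $\cO(|\cF|)$ for the knockout plus $\cO(|\cF|^{2/3}\log|\cF|)$ for the subset round-robins. If you want to complete your own write-up, you should either adopt this two-case tournament structure or prove a genuinely new guarantee for your champion scheme --- the latter is essentially a result about sorting/selection with adversarial comparators and is a research contribution in its own right, not a routine patch.
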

We therefore find a \emph{small} class  
$\cF$ with at least one distribution close to the underlying mixture. For our problem of estimating $k$ component mixtures in $d$-dimensions, $T =\cO(dk) $ and $|\cF| = \tcO_{k,\epsilon}(d^2)$.
Note that we have not optimized the constant $1000$ in the above lemma.



\subsection{Lower bound}
\label{sec:lower}
Using Fano's inequality, we show an information theoretic lower bound of $\Omega(dk/\epsilon^2)$ samples
to learn $k$-component $d$-dimensional mixtures of spherical Gaussians for any algorithm.
More precisely,
\begin{Theorem}[Appendix~\ref{app:lower}]
\label{thm:lower}
Any algorithm that learns all $k$-component $d$-dimensional spherical Gaussian mixtures 
up to $\ell_1$ distance $\epsilon$ with 
probability $\ge1/2$ requires at least $\Omega(\frac {dk}{\epsilon^2})$
samples. 
\end{Theorem}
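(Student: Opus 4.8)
The plan is to prove the lower bound by the standard reduction from PAC learning to multiple-hypothesis testing, combined with Fano's inequality. First I would restrict attention to a hard sub-family of the spherical Gaussian mixtures, since a lower bound for any sub-family is a lower bound for the whole class: fix all covariances to the identity and all mixing weights to $1/k$. Each hard instance is then determined by a choice of $k$ mean vectors, which I place near $k$ widely separated anchor points $c_1,\ldots,c_k \in \mathbb{R}^d$ (with $\|c_i - c_j\|_2$ a large constant, so the $k$ components barely overlap). Around each anchor $c_j$ the mean is perturbed by a vector $v_j$ of $\ell_2$-norm $\Theta(\epsilon)$ drawn from a fixed codebook.

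Constructing the packing is the heart of the argument and proceeds in two layers. At the single-component level I would take a packing of the radius-$\Theta(\epsilon)$ ball in $\mathbb{R}^d$ containing $2^{\Omega(d)}$ points that are pairwise $\Omega(\epsilon)$-separated in $\ell_2$ (a volumetric or greedy packing). At the mixture level I cannot simply take the full product of these codebooks across the $k$ components: two instances differing in only one coordinate would be only $\Theta(\epsilon/k)$ apart in $\ell_1$, too close for Fano. Instead I would invoke a Gilbert--Varshamov-type code over this per-component alphabet, of block length $k$ and minimum Hamming distance $\Omega(k)$, yielding a family $\mathcal{M}$ of $2^{\Omega(dk)}$ mixtures such that any two differ in a constant fraction of their components. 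This is the step I expect to be the main obstacle, because I must simultaneously achieve a large packing (which pushes the perturbations to be small), a large pairwise $\ell_1$ separation (which pushes them to be large), and a small pairwise KL divergence; the minimum-distance code is precisely what reconciles the first two requirements.

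It then remains to verify the two quantitative properties feeding Fano. For the separation I would exploit that the components are well separated: on the region surrounding anchor $c_j$ the mixture density is, up to negligible Gaussian-tail terms, $\frac1k N(c_j+v_j, I)$, so $\|f-f'\|_1 \ge \frac1k\sum_j \|N(c_j+v_j,I)-N(c_j+v_j',I)\|_1\,(1-o(1))$ with no cancellation across regions. Combining the standard estimate $\|N(\mu,I)-N(\mu',I)\|_1 = \Theta(\|\mu-\mu'\|_2)$ for small shifts with the code's minimum distance gives pairwise separation at least $2\epsilon$ after tuning the constant in $\delta=\Theta(\epsilon)$. For the KL divergence I would avoid any separation argument: revealing the component label only increases the divergence, so by the data-processing inequality and the closed form for Gaussians the per-sample divergence between any two instances is at most $\frac1k\sum_j \frac{\|v_j-v_j'\|_2^2}{2}=O(\epsilon^2)$, uniformly in the number of differing components.

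Finally I would assemble these into the bound. An $\epsilon$-accurate learner that succeeds with probability $\ge 1/2$, composed with nearest-neighbor decoding against the $2\epsilon$-separated family $\mathcal{M}$, is a test that identifies the true instance with error below $1/2$. Fano's inequality bounds this error from below by $1 - (n\beta + \log 2)/\log|\mathcal{M}|$, where $\beta=O(\epsilon^2)$ is the per-sample KL bound and $\log|\mathcal{M}| = \Omega(dk)$. Forcing the error below $1/2$ then forces $n = \Omega(dk/\epsilon^2)$, as claimed.
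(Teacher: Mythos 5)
Your proposal is correct and follows essentially the same route as the paper: restrict to uniform weights and identity covariances, place a perturbation codebook of size $2^{\Omega(d)}$ around each of $k$ far-apart anchors, thin the resulting product family with a Gilbert--Varshamov code of minimum Hamming distance $\Omega(k)$ so that distinct mixtures differ in a constant fraction of components, bound the pairwise KL divergence by convexity, and apply Fano. The only cosmetic difference is at the single-component level, where you use a volumetric packing of the radius-$\Theta(\epsilon)$ ball together with the standard estimate $\|N(\mu,I)-N(\mu',I)\|_1 = \Theta(\|\mu-\mu'\|_2)$, while the paper builds its packing from a binary code on the scaled hypercube $\{\pm c\epsilon/\sqrt{d}\}^d$ and proves the $\ell_1$ separation by projecting onto the sum of the differing coordinates.
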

\vspace{-1ex}
\section{Mixtures in $d$ dimensions}
\label{sec:ksphere}
\subsection{Description of \textsc{Learn $k$-sphere}}

Algorithm~\textsc{Learn k-sphere} learns mixtures of $k$ spherical
Gaussians using near-linear samples.
For clarity, we assume that all components have the same variance 
$\sigma^2$, \ie $\pveci = N(\mupi, \sigma^2 \II_d)$ for $1 \leq i \leq k$.
A modification of this algorithm works for components with different
variances. The core ideas are same and we include it in the final version of the paper.

The easy part of the algorithm is estimating $\sigma^2$. If $\Xvec(1)$ and $\Xvec(2)$ are two samples
from the same component, then $\Xvec(1)-\Xvec(2)$ is distributed $N(0,2\sigma^2 \II_d)$.
Hence for large $d$, $\norm{\Xvec(1)-\Xvec(2)}^2_2$ concentrates around $2d\sigma^2$.
By the pigeon-hole principle, given $k+1$ samples, two of them are from the same component.
Therefore, the minimum pairwise distance between $k+1$ samples
is close to $2d\sigma^2$. This constitutes the first step of our algorithm.

We now concentrate on estimating the means. As stated in the
introduction, given the span of the mean vectors $\mupi$, 
we can grid the $k$ dimensional span to the required accuracy 
$\epsilon_g$ and use \textsc{Scheffe}, to obtain a polynomial time algorithm.
One of the natural and well-used methods to estimate the span of mean
vectors is using the correlation matrix~\cite{VempalaW02}.
Consider the correlation-type matrix,
\[
 S = \frac{1}{n}\sum^n_{i=1}  \Xvec(i) \Xvec(i)^t - \sigma^2 \II_d.
\]
In expectation, the fraction of terms from $\pveci$ is $\wi$. Furthermore for a sample $X$ from a particular component $j$,
\[
 \EE[ \Xvec\Xvec^t] = \sigma^2 \II_d + \mupj\mupj^t.
\]
It follows that
\[
 \EE[S] = \sum^k_{j=1} \wj \mupj \mupj^t.
\]
Therefore, as $n \to \infty$, the matrix $S$ converges to $\sum^k_{j=1}
\wj \mupj \mupj^t$, and its top $k$ eigenvectors span of means.

While the above intuition is well understood, the number of samples necessary for convergence is not well studied.
Ideally,  irrespective of the values of the means, we wish  $\tcO(d)$ samples to be sufficient for the convergence.
However this is not true, as we demonstrate by a simple example.

\begin{Example}
Consider the special case, $d=1$, $k=2$, $\sigma^2 = 1$, $\wone = \wtwo = 1/2$, and the difference of means $|\mu_1 - \mu_2| = L$ for a large $L \gg 1$.
Given this prior information, one can estimate the the average of the mixture,
that yields $\frac{\mu_1+\mu_2}{2}$. Solving equations obtained by $\mu_1+\mu_2$ and $\mu_1-\mu_2 = L$, yields $\mu_1$ and $\mu_2$.
The variance of the mixture is $ 1+ \frac{L^2}{4} > \frac{L^2}{4} $.
With additional Chernoff type bounds, one can show that given $n$ samples the error in estimating the average is
\[
 |\mu_1+\mu_2 -\hat\mu_1 -\hat\mu_2| \approx \Theta \left(\frac{L}{\sqrt{n}} \right).
\]
Therefore to estimate the means to a small accuracy we need $n \geq L^2$, \ie more the separation, more samples are necessary.
\end{Example}

A similar phenomenon happens in the convergence of the correlation matrices, where the variances of quantities of interest increases with separation.
In other words, for the span to be accurate the number of samples necessary increases with the separation.
To overcome this phenomenon, a natural idea is to cluster the Gaussians such that the means of components in the same cluster are close
and then apply \textsc{scheffe} on the span within each cluster.

Even though spectral clustering algorithms are studied in~\cite{VempalaW02,AchlioptasM05}, they assume that the weights are strictly bounded away from $0$, 
which does not hold here. We use a simple recursive clustering algorithm that takes a cluster
$C$ with average $\muave(C)$. If there is a component in the cluster such that $\sqrt{\wi} \norm{\mupi -\muave(C)}_2$
is $\Omega(\log (n/\delta))$, then the algorithm divides the cluster into two nonempty clusters without any mis-clustering.

For technical reasons similar to the above example, we also use a coarse clustering algorithm that ensures that the mean separation is $\tcO(d^{1/4})$
within each cluster.
The algorithm can be summarized as:
\begin{enumerate}
 \item \textbf{Variance estimation:} Use first $k+1$ samples and estimate the minimum distance among sample-pairs
to estimate $\sigma^2$.
\item \textbf{Coarse clustering:} Using a single-linkage algorithm, group the samples such that
within each cluster formed, the mean separation is smaller than $\tcO(d^{1/4})$. 
\item \textbf{Recursive clustering}:  As long as there is a cluster
that has samples from more than one component with means 
\emph{far} apart, (described by a condition on 
the norm of its covariance matrix in the algorithm) estimate its
largest eigenvector and project samples of this cluster onto
this eigenvector and cluster them. This hierarchical method
is continued until there are clusters that contain close-by-components. 
\item \textbf{Search in the span:} The resulting clusters contain components that are close-by, \ie $\norm{\mupi -\mupj}_2< \cO(k^{3/2}{\wsigma}^2\log \frac{n}{\delta})$. We approximate the span of means by
the top $k-1$ eigenvectors and the mean vector, and perform an exhaustive search using \textsc{Modified scheffe}.
\end{enumerate}

We now describe these steps stating the performance 
of each step.


\begin{center}
\fbox{\begin{minipage}{1.0\textwidth}
Algorithm \textsc{Learn k-sphere} \newline
\textbf{Input:} $n$ samples $\xvec(1), \xvec(2), \ldots, \xvec(n)$ from $\fvec$ and $\epsilon$.
\begin{enumerate}
\item \textbf{Sample variance:} $\wqsigma^2 = \min_{a \neq b : a,b \in [k+1]}  \norm{\xvec(a)-\xvec(b)}^2_2/2d$.
\item \textbf{Coarse single-linkage clustering:} Start with each sample as a cluster,
\begin{itemize}
 \item While $\exists$ two clusters with squared-distance $\leq 2d\wqsigma^2 + 23 \wqsigma^2\sqrt{d\log\frac{n^2}{\delta}}$, merge them.
\end{itemize}
\item \textbf{Recursive spectral-clustering:}  While there is a new
  cluster $C$ with $|C| \geq n\epsilon/5k$ and spectral norm of its
sample covariance matrix
$\geq 12k^2 \wsigma^2 \log n^3/\delta$,
\begin{itemize}
\item Use $n\epsilon/8k^2$ of the samples to find the largest eigenvector and
  discard these samples.
\item Project the remaining samples on the largest eigenvector.
\item Perform single-linkage in the projected space (as before) till the distance between clusters $> 3\wqsigma \sqrt{\log n^2k/\delta}$
creating new clusters.
\end{itemize}
\item \textbf{Exhaustive search}: Let $\epsilon_g = \epsilon/(16k^{3/2})$, $L = 200  \sqrt{k^{4}\epsilon^{-1}\log \frac{n^2}{\delta}}$,
and $G = \{-L , \ldots,-\epsilon_g, 0,\epsilon_g,2\epsilon_g, \ldots L \}$. Let 
 $W = \{ 0,\epsilon/(4k), 2\epsilon/(4k), \ldots 1\}$
 and $\Sigma \ed \{\sigma^2: \sigma^2 = \wsigma^2 (1+i/d) \forall -d < i \leq d\}$.
 \begin{itemize}
\item For each cluster $C$ find its top $k-1$ eigenvectors $\uvec_1, \uvec_2\ldots
\uvec_{k-1}$ and let $\Span(C)=\{\wmuave(C)+\sum^{k-1}_{i=1} g_i\wsigma \uvec_i: g_1,g_2 \ldots g_{k-1} \in G\}$.
\item Let $\Span = \{\Span (C):|C| \geq n \epsilon/5k\}$.
\item 
For all $\wi' \in W$, $\sigma'^2 \in \Sigma$, $\wmupi \in \Span$, add $\{(w'_1, \ldots, w'_{k-1},1-\sum^{k-1}_{i=1} w'_i,N(\wmupone,\sigma'^2), \ldots, N(\wmupk,\sigma'^2) \}$ in $\cF$.
\end{itemize}
\item Run \textsc{modified scheffe} on $\cF$ and output the resulting distribution.
 \end{enumerate}
\end{minipage}}
\end{center} 
\subsection{Sketch of correctness}
To simplify the bounds and expressions, we assume that $d>1000$
and $\delta \geq  \min(2n^2 e^{-d/10},1/3)$. For smaller values of $\delta$, we run the algorithm 
with error $1/3$ and repeat it $\cO(\log \frac{1}{\delta})$ times to choose a set of candidate mixtures $\cF_{\delta}$.
By Chernoff-bound with error $\leq \delta$, $\cF_{\delta}$ contains a mixture $\epsilon$-close to $\fvec$.
Finally, we run \textsc{modified scheffe} on $\cF_\delta$ to obtain a mixture that is close to $\fvec$. By the union bound and Lemma~\ref{lem:scheffe}, the error is $\leq 2\delta$.

\textbf{Variance estimation:} Let $\wsigma$ be the variance estimate from step 1. 
In high dimensions, the difference between
two random samples from a Gaussian concentrates. This
is made precise in the next lemma which states
$\wsigma$ is a good estimate of the variance. 
Then the following is a simple application of Gaussian tail bounds.
\begin{Lemma}[Appendix~\ref{app:sigconc}]
\label{lem:sigconc}
Given $n$ samples from the $k$-component mixture,
 with probability $1-2\delta$, \[|\wqsigma^2 -\sigma^2| \leq 2.5 \sigma^2 \sqrt{\frac{\log (n^2/\delta)}{d}}.\]
\end{Lemma}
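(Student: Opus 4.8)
The plan is to combine the pigeonhole principle with sharp two‑sided tail bounds for (possibly noncentral) chi‑squared variables. Recall that $\wqsigma^2 = \min_{a\ne b\in[k+1]}\norm{\Xvec(a)-\Xvec(b)}_2^2/(2d)$, and write $t\ed\log(n^2/\delta)$, so the target deviation is $2.5\,\sigma^2\sqrt{t/d}$. Since there are only $k$ components, among the first $k+1$ samples $\Xvec(1),\dots,\Xvec(k+1)$ at least two, say $\Xvec(a^*)$ and $\Xvec(b^*)$, come from the same component; for those $\Xvec(a^*)-\Xvec(b^*)\sim N(\zerovec,2\sigma^2\II_d)$, so $\norm{\Xvec(a^*)-\Xvec(b^*)}_2^2/(2\sigma^2)$ is a central $\chi^2_d$ variable concentrating at $d$. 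The estimate is then pinned from above by this one pair and from below by the smallest distance over all pairs. (Although only $k+1$ samples enter the estimator, stating the bound with $t=\log(n^2/\delta)$ only improves the probability and is convenient for matching the other steps.)

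For the upper bound, since the minimum ranges over all pairs it is at most the value attained by $(a^*,b^*)$, so it suffices to control the upper tail of a central $\chi^2_d$. Using the standard bound $\Pr[\chi^2_d\ge d+2\sqrt{dt}+2t]\le e^{-t}$ and dividing by $d$ gives $\wqsigma^2\le\sigma^2(1+2\sqrt{t/d}+2t/d)$; a short computation using $d>1000$ and the standing assumption $\delta\ge 2n^2e^{-d/10}$ (so that $t\le d/10$) absorbs the lower‑order $2t/d$ term into the stated constant, yielding $\wqsigma^2\le\sigma^2(1+2.5\sqrt{t/d})$ except with probability $e^{-t}=\delta/n^2\le\delta$.

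For the lower bound I must show that \emph{every} pairwise distance, not merely the same‑component one, is at least $2\sigma^2 d(1-2.5\sqrt{t/d})$. A same‑component pair is handled by the central lower tail $\Pr[\chi^2_d\le d-2\sqrt{dt}]\le e^{-t}$. For a pair from distinct components $i,j$, set $\dvec\ed\mupi-\mupj$ and write $\Xvec(a)-\Xvec(b)=\dvec+\Zvec$ with $\Zvec\sim N(\zerovec,2\sigma^2\II_d)$, so $\norm{\dvec+\Zvec}_2^2=\norm{\dvec}_2^2+2\langle\dvec,\Zvec\rangle+\norm{\Zvec}_2^2$. Here $\norm{\Zvec}_2^2$ is bounded below by the central lower tail, $\norm{\dvec}_2^2\ge 0$, and the cross term satisfies $2\langle\dvec,\Zvec\rangle\sim N(0,8\sigma^2\norm{\dvec}_2^2)$, hence $|2\langle\dvec,\Zvec\rangle|\le 4\sigma\norm{\dvec}_2\sqrt t$ except with probability $e^{-t}$. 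Minimizing the quadratic $\norm{\dvec}_2^2-4\sigma\norm{\dvec}_2\sqrt t$ over $\norm{\dvec}_2$ costs only $-4\sigma^2 t$, again a lower‑order correction absorbed into the constant $2.5$. This is the one genuinely delicate point: the separation $\norm{\dvec}_2$ may be arbitrarily large, and the argument must guarantee that the potentially large negative cross term can never overwhelm the positive $\norm{\dvec}_2^2$ it accompanies; completing the square does exactly this, uniformly in $\dvec$, which is why no lower bound on the weights or upper bound on the means is needed.

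Finally I take a union bound. There are $\binom{k+1}{2}\le (k+1)^2/2$ pairs, and since $n\ge k+1$ each of the at most two failure events per pair has probability $e^{-t}=\delta/n^2$; summing gives lower‑bound failure probability at most $(k+1)^2\delta/n^2\le\delta$, and adding the single upper‑bound event of probability $\le\delta$ yields total failure at most $2\delta$. On the complementary event both inequalities hold simultaneously, so dividing through by $2d$ gives $|\wqsigma^2-\sigma^2|\le 2.5\,\sigma^2\sqrt{t/d}=2.5\,\sigma^2\sqrt{\log(n^2/\delta)/d}$, as claimed.
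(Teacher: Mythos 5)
Your proof follows essentially the same route as the paper's own: pigeonhole among the first $k+1$ samples, Laurent--Massart $\chi^2$ tails for the central term $\norm{\Zvec}_2^2$, a Gaussian tail for the cross term $2\langle\dvec,\Zvec\rangle$, and completing the square in $\norm{\dvec}_2$ (the paper phrases this as $\alpha^2-4\alpha\beta\ge -4\beta^2$) so that an arbitrarily large mean separation can never drag a cross-component pairwise distance below the same-component value. The only structural difference is cosmetic: the paper packages the per-pair bounds as Lemma~\ref{lem:distcconc} and takes a union bound over all $\binom{n}{2}$ pairs of samples, whereas you union only over the $k+1$ samples the estimator actually uses, which is fine (indeed slightly tighter).

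The one flaw is the final constant bookkeeping. Writing $t\ed\log(n^2/\delta)$, your two-sided bound is $\sigma^2\bigl(2\sqrt{t/d}+2t/d\bigr)$, and absorbing the lower-order term into the stated constant requires $2t/d\le\tfrac{1}{2}\sqrt{t/d}$, i.e.\ $t/d\le 1/16$. The assumption you invoke, $\delta\ge 2n^2e^{-d/10}$, gives only $t/d\le 1/10$, under which the coefficient can reach about $2.64>2.5$, so the claimed ``short computation'' does not go through at the boundary. The paper's appendix avoids this by working under the stronger standing assumption $d>20\log(n^2/\delta)$ (so $t/d<1/20$), and then the identical computation does yield the constant $2.5$. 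Your argument is otherwise sound; simply replace the $d/10$ condition by $d\ge 16\log(n^2/\delta)$ (or cite the paper's $d>20\log(n^2/\delta)$) and the stated bound follows verbatim.
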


\textbf{Coarse single-linkage clustering:}
The second step is a single-linkage routine that clusters
mixture components with \emph{far} means.
Single-linkage is a simple clustering scheme that starts out with
each data point as a cluster, and at each step merges the two
that are closest to form larger clusters. The algorithm stops when
the distance between clusters is larger than a pre-specified threshold.

Suppose the samples are generated by an one-dimensional mixture
of $k$ components that are \emph{far}, then \emph{with high
probability}, when the 
algorithm generates $k$ clusters and all the samples within a cluster
are generated by a single component. 
More precisely, if $\forall i,j \in [k]$, $|\mu_i - \mu_j | = \Omega(\sigma \log n)$, 
then all the $n$ samples concentrate around their respective means
and the separation between any two samples from different components would be larger 
than the largest separation between any two samples from the same component. Hence for a suitable value of threshold, 
single-linkage correctly identifies the clusters.
For $d$-dimensional Gaussian mixtures
a similar notion holds true, with  minimum separation 
$\Omega(d^{1/4}\log \frac{n}{\delta})$. More precisely,
\begin{Lemma}[Appendix~\ref{app:singlelinkage}]
\label{lem:singlelinkage}
After Step 2 of \textsc{Learn k-sphere}, with probability $\geq 1-2\delta$, all
samples from each component will be in the same cluster
and the maximum distance between two components within each cluster is
$\leq 10k\sigma \bigl(d \log \frac{n^2}{\delta}\bigr)^{1/4}$.
\end{Lemma}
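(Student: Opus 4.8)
The plan is to analyze Step~2 as producing exactly the connected components of the graph on the $n$ samples whose edges join pairs at squared distance at most the threshold $\tau \ed 2d\wqsigma^2 + 23\wqsigma^2\sqrt{d\log\frac{n^2}{\delta}}$; this is the standard characterization of single-linkage run with a stopping threshold. Everything then reduces to two concentration statements about pairwise squared distances, after which the cluster structure is read off from the induced ``component graph'' in which two mixture components are adjacent whenever some pair of their samples is within $\tau$.

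First I would control same-component pairs. If $\Xvec(a),\Xvec(b)$ come from the same component then $\Xvec(a)-\Xvec(b)\sim N(\mathbf 0,2\sigma^2\II_d)$, so $\norm{\Xvec(a)-\Xvec(b)}_2^2 = 2\sigma^2\chi_d^2$, and a $\chi_d^2$ upper-tail bound gives $\norm{\Xvec(a)-\Xvec(b)}_2^2 \le 2d\sigma^2 + c_1\sigma^2\sqrt{d\log\frac{n^2}{\delta}}$ with failure probability at most $\delta/n^2$ per pair. Combined with Lemma~\ref{lem:sigconc}, which forces $2d\wqsigma^2$ to agree with $2d\sigma^2$ up to $O(\sigma^2\sqrt{d\log\frac{n^2}{\delta}})$, the constant $23$ is large enough that every same-component squared distance is strictly below $\tau$. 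A union bound over the at most $\binom{n}{2}$ such pairs then shows each component's samples form a $\tau$-clique, hence lie in a single cluster, which is the first claim.

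Next I would show that components with far means cannot be linked. For $\Xvec(a),\Xvec(b)$ from components $i\neq j$, write $\Delta=\mupi-\mupj$, so $\Xvec(a)-\Xvec(b)\sim N(\Delta,2\sigma^2\II_d)$ and
\[
\norm{\Xvec(a)-\Xvec(b)}_2^2 = \norm{\Delta}_2^2 + 2\sigma^2\chi_d^2 + 2\sqrt2\,\sigma\,\langle \Delta, G\rangle, \qquad G\sim N(\mathbf 0,\II_d).
\]
The cross term $\langle\Delta,G\rangle\sim N(0,\norm{\Delta}_2^2)$ fluctuates by $O(\norm{\Delta}_2\sqrt{\log\frac{n^2}{\delta}})$ and the $\chi_d^2$ term by $O(\sqrt{d\log\frac{n^2}{\delta}})$. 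Taking a lower tail and again invoking Lemma~\ref{lem:sigconc}, whenever
\[
\norm{\Delta}_2^2 - c_2\sigma\norm{\Delta}_2\sqrt{\log\tfrac{n^2}{\delta}} > c_3\sigma^2\sqrt{d\log\tfrac{n^2}{\delta}}
\]
the squared distance exceeds $\tau$ with high probability, so no edge joins $i$ and $j$. Solving this quadratic in $\norm{\Delta}_2$ shows that any two adjacent components satisfy $\norm{\mupi-\mupj}_2 \le B$ with $B = O\!\bigl(\sigma(d\log\frac{n^2}{\delta})^{1/4}\bigr)$; the $d^{1/4}$ scaling is precisely the critical balance $\norm{\Delta}_2^2\asymp \sigma^2\sqrt{d\log}$ between the mean contribution and the slack that $\tau$ permits.

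Finally I would assemble the diameter bound. I condition on Lemma~\ref{lem:sigconc} together with the pairwise concentration events, whose probabilities I set to absorb into the $\log\frac{n^2}{\delta}$ already present so that the union bounds leave total failure probability at most $2\delta$. Each cluster is a connected component of the sample graph, hence at the level of mixture components a connected subgraph of the component graph, every edge of which joins means within distance $B$. Since there are at most $k$ components, any two components in a common cluster are joined by a path of at most $k-1$ edges, and the triangle inequality yields mean-separation at most $(k-1)B \le 10k\sigma\bigl(d\log\frac{n^2}{\delta}\bigr)^{1/4}$ for the stated constant. The main obstacle is the third step: one must pin down the lower-tail constants in the noncentral $\chi_d^2$ bound so that both the cross term and the variance-estimation error from Lemma~\ref{lem:sigconc} are absorbed while keeping far-component pairs strictly above $\tau$, and obtaining the clean $d^{1/4}$ critical distance (rather than a weaker bound) is where the careful constant bookkeeping lives.
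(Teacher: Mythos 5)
Your proposal is correct and follows essentially the same route as the paper: the paper's Lemma~\ref{lem:distcconc} is exactly your pairwise concentration step (upper tail for same-component pairs, lower tail with the cross term $\langle\Delta,G\rangle$ for cross-component pairs, union bound at $\delta/n^2$ per pair), combined with Lemma~\ref{lem:sigconc} to replace $\wqsigma^2$ by $\sigma^2$, yielding the $10\sigma\bigl(d\log\frac{n^2}{\delta}\bigr)^{1/4}$ bound for directly linked components and then the factor $k$ by chaining through at most $k$ components. The only cosmetic difference is that the paper derives the same-cluster and separation claims directly from its two displayed inequalities rather than phrasing clusters as connected components of a threshold graph, but the argument is identical.
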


\textbf{Recursive spectral-clustering:}
The clusters formed at this step consists of components 
with mean separation $\cO(d^{1/4}\log \frac{n}{\delta})$. We now recursively
zoom into the clusters formed and show that it is possible
to cluster the components with much smaller mean separation. 
Note that since the matrix is symmetric, the largest  magnitude of the eigenvalue is same as the spectral norm.
We first find the largest eigenvector of 
\[
 S(C) \ed \frac{1}{|C|}\Big(\sum_{\xvec \in C} (\xvec - \wmuave(C))(\xvec - \wmuave(C))^t\Big) - \wsigma^2\II_d,
\]
which is the sample covariance matrix with its diagonal term reduced by
$\wsigma^2.$
If
there are two components with means far apart, then using single-linkage we divide the cluster into two. 
The following lemma
shows that this step performs accurate clustering
of components with means well separated. 


\begin{Lemma}[Appendix~\ref{app:reccluster}]
\label{lem:reccluster}
Let $n \geq c \cdot \frac{dk^4}{ \epsilon}\log \frac{n^3}{\delta}$. After recursive clustering, with probability $\geq 1-4\delta$.
the samples are divided into clusters such that for each component $i$
within any cluster $C$, $\sqrt{\wi}\norm{\mupi-\muave(C)}_2 \leq  25 \sigma \sqrt{k^3\log \frac{n^3}{\delta}}$
. Furthermore, all the samples from one component remain in a single cluster. 
\end{Lemma}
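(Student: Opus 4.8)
The plan is to maintain two invariants through the recursion and prove both hold with high probability: (i) no component is ever split across clusters, and (ii) every cluster on which the algorithm halts has small mean-spread. For a cluster $C$ let $w(C)$ be the total weight of its components and $\muave(C)=\sum_{i\in C}\frac{\wi}{w(C)}\mupi$ their weighted mean, so that
\[
 \EE[S(C)] = \sum_{i\in C}\frac{\wi}{w(C)}(\mupi-\muave(C))(\mupi-\muave(C))^t .
\]
The central step is a matrix-concentration bound: on each of the $\cO(k)$ clusters the recursion ever creates I would show that, using the $m=n\epsilon/8k^2$ fresh samples reserved for the eigenvector, $\norm{S(C)-\EE[S(C)]}\leq\eta$ with $\eta$ a small fraction of the threshold $12k^2\wsigma^2\log\frac{n^3}{\delta}$. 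Writing a sample as $\mupi+\sigma\Zvec$ with $\Zvec\sim N(0,\II_d)$, the centered covariance splits into a pure-Gaussian part, a mean-offset part, and cross terms; the Gaussian part fluctuates in spectral norm at rate $\sigma^2(\sqrt{d/m}+d/m)$ by standard non-asymptotic random-matrix bounds, and since $m=\Omega(dk^2\log\frac{n^3}{\delta})$ for $n\geq c\,dk^4\epsilon^{-1}\log\frac{n^3}{\delta}$ this is far below the threshold.

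This is also where the main obstacle sits, and why Step~2 must precede this step. The cross terms and the mean-offset part involve $\max_{i\in C}\norm{\mupi-\muave(C)}_2$, and with no a-priori bound on this quantity their fluctuations would force the sample size to scale with the separation --- exactly the pathology exhibited in the Example. Lemma~\ref{lem:singlelinkage} removes it by guaranteeing $\norm{\mupi-\muave(C)}_2\leq 10k\sigma\bigl(d\log\frac{n^2}{\delta}\bigr)^{1/4}$ on entry to the recursion, and I would check that an offset of this $d^{1/4}$ scale keeps the cross-term and mean-offset fluctuations below $\eta$ with only $\tcO(d)$ samples. Making all of these constants line up is the delicate heart of the argument.

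Granting the concentration bound, the rest is comparatively clean. Because the top eigenvector $v$ is computed from a disjoint batch and only the \emph{remaining} samples are projected, conditionally on $v$ each projected coordinate is exactly $N(\langle\mupi,v\rangle,\sigma^2)$, so a Gaussian tail bound and a union bound give that every projected sample lies within $r=\sigma\sqrt{2\log(n/\delta)}$ of its mean. On a cluster where the recursion continues, $\lambda_1(S(C))\geq 12k^2\wsigma^2\log\frac{n^3}{\delta}$, hence $v^t\EE[S(C)]v\geq\lambda_1(S(C))-\eta$; that is, the weighted variance $\sum_{i\in C}\frac{\wi}{w(C)}\langle\mupi-\muave(C),v\rangle^2$ of the projected means is large, and as there are at most $k$ means their projected range is at least $2\sqrt{\lambda_1(S(C))-\eta}$. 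With the chosen constants this exceeds $\tau+2r$, where $\tau=3\wqsigma\sqrt{\log\frac{n^2k}{\delta}}$ is the single-linkage threshold, so the extremal pair $i,j$ is separated. Arranging $\tau\geq 2r$ simultaneously forbids splitting any component --- any two of its projected samples lie within $2r\leq\tau$ of each other and so cannot end up in different single-linkage clusters. Thus each refinement is a genuine, mis-clustering-free split, and induction over the $\leq k-1$ splits (with an $\cO(k)$-term union bound) gives invariant (i).

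Invariant (ii) closes the induction at termination. When the algorithm stops on a cluster with $|C|\geq n\epsilon/5k$, we have $\lambda_1(S(C))<12k^2\wsigma^2\log\frac{n^3}{\delta}$, so $\lambda_1(\EE[S(C)])\leq 12k^2\sigma^2\log\frac{n^3}{\delta}+\eta$; evaluating the Rayleigh quotient of $\EE[S(C)]$ at the unit vector along $\mupi-\muave(C)$ gives $\frac{\wi}{w(C)}\norm{\mupi-\muave(C)}_2^2\leq\lambda_1(\EE[S(C)])$, and since $w(C)\leq1$ this is exactly $\sqrt{\wi}\norm{\mupi-\muave(C)}_2\leq\sqrt{\lambda_1(\EE[S(C)])}\leq 25\sigma\sqrt{k^3\log\frac{n^3}{\delta}}$, with the stated constant left loose. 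Clusters too small to be recursed on carry total weight $\cO(\epsilon)$ (there are at most $k$ of them, each of sample-fraction below $\epsilon/5k$) and are dropped from $\Span$, so they do not affect the bound. Finally, taking a union bound over the variance estimate (Lemma~\ref{lem:sigconc}), the coarse-clustering event (Lemma~\ref{lem:singlelinkage}), the $\cO(k)$ per-cluster concentration events, and the projected-tail bounds yields the overall probability $1-4\delta$.
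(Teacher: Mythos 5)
Your overall architecture is the same as the paper's, and most of it is sound: covariance concentration with an error term that must be controlled via the coarse-clustering bound of Lemma~\ref{lem:singlelinkage} (you correctly identify why Step~2 must precede the recursion), fresh samples for the eigenvector so that projections are honest one-dimensional Gaussians, the inequality $\tau\ge 2r$ (merge threshold at least twice the projected deviation) to forbid splitting a component, and a Rayleigh-quotient bound on terminated clusters. The genuine gap is in the progress (split) guarantee. You claim that when the loop is entered, the projected \emph{range} of the means exceeds $\tau+2r$ and therefore ``the extremal pair $i,j$ is separated.'' For single-linkage this inference is false: separating the extremal pair requires an interval of width $>\tau$ in the projected data containing \emph{no} samples, and intermediate components can chain the two extremes together. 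Concretely, if the projected means are roughly equally spaced with adjacent gaps about $\tau$, every sample is within $\tau$ of a sample from a neighboring component, single-linkage merges everything, no split occurs, and the algorithm halts on a cluster whose spread violates the conclusion --- even though its range is $\approx(k-1)\tau\gg\tau+2r$. (A secondary slip: with weighted mean zero the range is at least the maximum absolute projection $\sqrt{\lambda_1-\eta}$, not $2\sqrt{\lambda_1-\eta}$.)

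The repair is the paper's sorting argument: order the projected means and use that the maximum \emph{adjacent} gap is at least the maximum projection divided by $k$; it is an empty gap of that width in the data (up to $2r$) that forces a split. This costs a factor of $k$, and that is exactly where the paper's $k^{3/2}$ powers come from. It also shows your restructured implication ``loop entered $\Rightarrow$ split'' cannot be rescued with the algorithm's constants: the threshold $12k^2\wsigma^2\log\frac{n^3}{\delta}$ only guarantees a maximum projection of order $\sqrt{12}\,k\wsigma\sqrt{\log\frac{n^3}{\delta}}$, hence an adjacent gap of order $\sqrt{12}\,\tfrac{k}{k-1}\wsigma\sqrt{\log\frac{n^3}{\delta}}$, which falls below $\tau+2r\approx 5.8\,\sigma\sqrt{\log\frac{n^2k}{\delta}}$ already for $k\ge 3$. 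The paper instead proves the implication in the contrapositive-friendly direction: if some component satisfies $\sqrt{\wi}\norm{\mupi-\muave(C)}_2\ge 25\sigma\sqrt{k^3\log\frac{n^3}{\delta}}$, then the loop is entered \emph{and} (via its Lemma~\ref{lem:bound} plus sorting) the adjacent gap is $\ge 12\sigma\sqrt{\log\frac{n^3}{\delta}}$, comfortably above $\tau+2r$, so a clean split occurs; the lemma then follows because at termination no such component can remain. Clusters that enter the loop with large spectral norm but spread below $25\sigma\sqrt{k^3\log\frac{n^3}{\delta}}$ may legitimately fail to split --- that is harmless, since the no-mis-clustering argument (which you do establish correctly) holds regardless of mean separation.
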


\textbf{Exhaustive search and Scheffe:} After step 3, all clusters have a small weighted radius
 $\sqrt{\wi}\norm{\mupi-\muave(C)}_2\le  25 \sigma \sqrt{k^3\log \frac{n^3}{\delta}}$, the the eigenvectors give an accurate estimate of the span of
$\mupi-\muave(C)$ within each cluster. More precisely,
\begin{Lemma}[Appendix~\ref{app:spancluster}]
\label{lem:spancluster}
Let  $n \geq c \cdot \frac{dk^9 }{\epsilon^{4}}\log^2 \frac{d}{\delta}$
for some constant $c$. After step 3, with probability $\geq 1-7\delta$ the following holds: if $|C| \geq n\epsilon/5k$,
 then the projection of $[\mupi -\muave(C)]/\norm{\mupi -\muave(C)}_2$
on the space orthogonal to the span of top $k-1$ eigenvectors has magnitude 
$\leq \frac{\epsilon \sigma}{8\sqrt{2}k \sqrt{\wi} \norm{\mupi -\muave(C)}}_2$.
\end{Lemma}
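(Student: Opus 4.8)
The plan is to show that the top $k-1$ eigenvectors of the centered sample covariance $S(C)$ capture almost all of the ``signal'' carried by the population matrix $M \ed \sum_i \wi(\mupi-\muave(C))(\mupi-\muave(C))^t$, where $\wi$ is the (conditional) weight of component $i$ inside $C$. First I would note that, since $\sum_i \wi(\mupi-\muave(C)) = \zerovec$ by definition of the weighted cluster mean, $M$ is positive semidefinite of rank at most $k-1$ and its column space is exactly $\Span\{\mupi-\muave(C)\}$. Hence in the population limit the top $k-1$ eigenvectors span the means exactly and the projection in the statement vanishes; the content of the lemma is a finite-sample perturbation bound. Writing $v_i \ed \sqrt{\wi}(\mupi-\muave(C))$ so that $M=\sum_i v_iv_i^t$, and letting $P_\perp$ project onto the orthogonal complement of the top $k-1$ eigenvectors $\hat U$ of $S(C)$, the stated bound is equivalent, after clearing $\norm{\mupi-\muave(C)}_2$, to the clean per-component estimate $\norm{P_\perp v_i}\le \epsilon\sigma/(8\sqrt2\,k)$. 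Since isotropic shifts $c\II_d$ leave all eigenvectors unchanged, I may ignore the $\wsigma^2$-versus-$\sigma^2$ discrepancy entirely and take $E$ to be the anisotropic deviation of the uncentered sample covariance from $\sigma^2\II_d+M$, so that $\hat U$ is the top $k-1$ eigenspace of $M+E$.

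The heart of the argument is a gap-free energy bound that sidesteps any need to lower-bound $\lambda_{k-1}(M)$, which may be arbitrarily small. The lost signal energy is $\tr{P_\perp M P_\perp}=\sum_i\norm{P_\perp v_i}^2$, and I would bound it by comparing the energy $\hat U$ captures to $\tr{M}$. For each top eigenvector $\hat u_j$ of $M+E$, Weyl's inequality and $|\hat u_j^t E\hat u_j|\le\norm{E}$ give $\hat u_j^t M\hat u_j = \lambda_j(M+E)-\hat u_j^t E\hat u_j \ge \lambda_j(M)-2\norm{E}$. Summing over $j=1,\ldots,k-1$ and using that $M$ has at most $k-1$ nonzero eigenvalues yields $\tr{P_{\hat U} M P_{\hat U}}\ge \tr{M}-2(k-1)\norm{E}$, hence $\sum_i\norm{P_\perp v_i}^2 = \tr{M}-\tr{P_{\hat U} M P_{\hat U}}\le 2(k-1)\norm{E}$. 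In particular $\norm{P_\perp v_i}\le\sqrt{2k\norm{E}}$ for every $i$, so it suffices to prove the single spectral-norm bound $\norm{E}\le \epsilon^2\sigma^2/(256\,k^3)$.

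It remains to establish this bound on $\norm{E}$, which is where the sample complexity $n\ge c\,dk^9\epsilon^{-4}\log^2(d/\delta)$ and the real work enter. Writing each sample of component $i$ as $\mupi+\sigma\gvec$ with $\gvec\sim N(\zerovec,\II_d)$, I would split $E$ into (i) the Gaussian covariance fluctuation $\frac1{|C|}\sum\sigma^2(\gvec\gvec^t-\II_d)$, (ii) the mean--noise cross terms $\frac\sigma{|C|}\sum[(\mupi-\muave(C))\gvec^t+\gvec(\mupi-\muave(C))^t]$, and (iii) the error from replacing $\muave(C)$ by the sample mean $\wmuave(C)$. Each piece is controlled by the non-asymptotic matrix-concentration tools of~\cite{Vershynin10,AhlswedeW02,Tropp12}: term (i) contributes $\cO(\sigma^2\sqrt{d/|C|})$, while the variance proxies for (ii)--(iii) scale with the within-cluster separation. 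Here the preprocessing is essential: Lemma~\ref{lem:reccluster} guarantees $\sqrt{\wi}\norm{\mupi-\muave(C)}_2\le 25\sigma\sqrt{k^3\log(n^3/\delta)}$ and $|C|\ge n\epsilon/5k$, which is exactly what bounds these variance proxies and, after substituting the sample bound, forces $\norm{E}\le\epsilon^2\sigma^2/(256\,k^3)$ with probability $\ge 1-3\delta$; a union bound with the $1-4\delta$ event of Lemma~\ref{lem:reccluster} gives the claimed $1-7\delta$.

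I expect term (ii) to be the main obstacle: as the Example in Section~\ref{sec:ksphere} illustrates, the variance of the empirical second moment grows with the separation of the means, so a naive bound would demand a sample count scaling with the separation rather than with $d$. Controlling it is precisely why the coarse and recursive clustering steps are run first, since they cap the (weighted) within-cluster separation at $\tcO(\sigma k^{3/2})$ and thereby decouple the sample complexity from the geometry of the means. The gap-free trace argument of the second paragraph is the other crucial ingredient, as it converts a small spectral perturbation $\norm{E}$ directly into a per-component capture guarantee without ever requiring $\lambda_{k-1}(M)$ to be bounded away from zero.
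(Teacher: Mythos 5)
Your Weyl/trace argument in the second paragraph is internally correct and genuinely gap-free, but the quantitative reduction wrapped around it breaks, and it breaks exactly at the step you flag as carrying the real work. The trouble starts with the claimed ``equivalence'' in your first paragraph. The lemma's bound carries the \emph{global} mixing weight $\wi$ in its denominator, while your $v_i$ and $M$ must carry the \emph{conditional} within-cluster weight $\wi'=\wi/\sum_{j\in C}\wj\approx \wi\,n/|C|$, because the cluster's sample covariance concentrates around the conditional signal matrix. Clearing denominators therefore does not give the target $\norm{P_\perp v_i}\le \epsilon\sigma/(8\sqrt2\,k)$; the lemma is actually equivalent to the weaker target $\norm{P_\perp v_i}\le \epsilon\sigma\sqrt{\wi'/\wi}/(8\sqrt2\,k)$, which is larger by $\sqrt{n/|C|}$, up to $\sqrt{5k/\epsilon}$. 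This factor is not bookkeeping noise; it is the engine of the paper's proof. There, the spectral-error tolerance handed to Lemma~\ref{lem:matspan} is $\frac{\epsilon^{2}\sigma^2}{1000k^2}\cdot\frac{n}{|C|}$, i.e.\ it \emph{grows} as the cluster shrinks, and the signal strengths $\eta_i=\sqrt{\wwi\,n/|C|}\,\norm{\mupi-\muave(C)}_2$ grow by the same factor, so the two cancel in the projection bound $2\sqrt{\norm{R}}/\eta_i$. By flattening the target to a cluster-size-independent bound you destroy precisely this cancellation.

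Consequently your third-paragraph claim, that $n\ge c\,dk^9\epsilon^{-4}\log^2(d/\delta)$ ``forces'' $\norm{E}\le\epsilon^2\sigma^2/(256k^3)$, is false. Take a minimal cluster, $|C|\approx n\epsilon/5k$, allowed by the lemma. Plugging the radius bound of Lemma~\ref{lem:reccluster} into the cluster-level concentration (the paper's Lemma~\ref{lem:covconc}, applied via Fact~\ref{fac:cluster} with conditional weights), the achievable spectral error at this cluster size is of order $\epsilon\sigma^2/k$, dominated by the mean--noise cross term you single out; that exceeds your requirement by a factor $\Theta(k^2/\epsilon)$, so closing the gap costs roughly $(k^2/\epsilon)^2$ extra samples, far outside the hypothesis. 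Nor can a sharper concentration argument rescue the flat target: the cross term genuinely has spectral norm of order $\sigma\max_i\sqrt{\wi'}\norm{\mupi-\muave(C)}_2\sqrt{d/|C|}$, which at maximal allowed radius and minimal cluster size is $\tilOm\bigl(\sigma^2\epsilon^{3/2}/k^{5/2}\bigr)$, still a factor of order $\sqrt{k/\epsilon}$ above $\epsilon^2\sigma^2/(256k^3)$; and for $\epsilon\lesssim 1/k^2$ (up to logarithmic factors) even your pure covariance-fluctuation term (i), $\sigma^2\sqrt{d\log(\cdot)/|C|}$, is already too large. There is a second, independent loss: even after restoring the $n/|C|$ scaling, your aggregate bound $\norm{P_\perp v_i}\le\sqrt{2k\norm{E}}$ is a factor $\sqrt{k}$ weaker than the per-component bound $\norm{P_\perp v_i}\le 2\sqrt{\norm{E}}$ that the paper's Lemma~\ref{lem:matspan} extracts by evaluating the Rayleigh quotient of the rank-$(k{-}1)$ matrix $M$ at a unit vector orthogonal to the top empirical eigenvectors (all remaining eigenvalues of $M+E$ having magnitude at most $3\norm{E}$); that loss alone inflates the sample requirement to $n\gtrsim dk^{11}\epsilon^{-4}\log^2(d/\delta)$. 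So both ingredients need replacing: keep conditional weights so the tolerance scales as $n/|C|$, and use the per-direction Rayleigh-quotient argument rather than the trace bound.
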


We now have accurate estimates of the spans of the clusters and each
cluster has components with close means. It is now possible to grid
the set of possibilities in each cluster to obtain a set of
distributions such that one of them is close
to the underlying. There is a trade-off between 
a dense grid to obtain a good estimation and the computation time
required. The final step takes the sparsest grid possible to ensure an error $\le
\epsilon$. This is quantized below.
\begin{Theorem}[Appendix~\ref{app:thmksphere}]
\label{thm:ksphere}
Let  $n \geq c \cdot \frac{dk^9 }{\epsilon^{4}} \log^2\frac d{\delta}$ for some constant $c$. Then Algorithm~\textsc{Learn k-sphere} with
probability $\geq 1- 9\delta$, outputs a distribution $\wfvec$ such
that $\lone{\wfvec}{\fvec}\le1000\epsilon$. Furthermore, the algorithm
runs in time
  $\cO\Big(n^2d \log n +
 d^2  \Big(\frac{k^{7}}{\epsilon^3}\log \frac{d}{\delta}\Big)^{k^2}  \Big) .$
\end{Theorem}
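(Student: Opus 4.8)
The plan is to assemble the theorem from the step-by-step guarantees of Lemmas~\ref{lem:sigconc}--\ref{lem:spancluster} and the selection guarantee of Lemma~\ref{lem:scheffe}. I would set $n\geq c\,dk^9\epsilon^{-4}\log^2(d/\delta)$, which is exactly the hypothesis of the most demanding step, Lemma~\ref{lem:spancluster}, and condition on the intersection of the success events of the four structural lemmas. Since Lemma~\ref{lem:spancluster} already absorbs the failures of the earlier steps (probability $\geq1-7\delta$), a union bound over these events together with the $\geq1-\delta$ guarantee of \textsc{modified scheffe} keeps the total failure probability at most $9\delta$. On this good event, after Step~3 every component lies in a single cluster, each retained cluster $C$ (those with $|C|\geq n\epsilon/5k$) has small \emph{weighted} radius $\sqrt{\wi}\norm{\mupi-\muave(C)}_2=\cO(\sigma\sqrt{k^3\log(n/\delta)})$, and the top $k-1$ eigenvectors capture the span of its recentered means.

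The heart of the argument is to exhibit a single mixture $\wfvec^{*}\in\cF$ with $\lone{\wfvec^{*}}{\fvec}\leq\epsilon$, so that $\lone{\fvec}{\cF}\leq\epsilon$. I would build $\wfvec^{*}$ parameter by parameter and bound the induced $\ell_1$ error through the elementary Lipschitz estimates $\norm{N(\mu,\sigma^2\II_d)-N(\mu',\sigma^2\II_d)}_1=\cO(\norm{\mu-\mu'}_2/\sigma)$ and an analogous (more delicate, $d$-dimensional) bound for a variance change. Rounding the weights to $W$ contributes $\cO(\epsilon)$ since densities have unit $\ell_1$ norm; placing $\sigma^2$ in $\Sigma$ is possible by Lemma~\ref{lem:sigconc}, and the $\wsigma^2/d$ grid spacing keeps the residual variance error below $\epsilon$. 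For each mean I write $\mupi=\muave(C)+(\text{in-span})+(\text{orthogonal})$. The orthogonal piece is exactly what Lemma~\ref{lem:spancluster} controls, and its contribution is the clean point: the bound there carries a $1/(\sqrt{\wi}\norm{\mupi-\muave(C)}_2)$ factor, so the displacement is $\cO(\epsilon\sigma/(k\sqrt{\wi}))$, and once weighted by $\wi$ in the mixture its $\ell_1$ cost is $\cO(\sqrt{\wi}\,\epsilon/k)$, which sums over the $k$ components to $\cO(\epsilon)$. The in-span piece is rounded to the grid $G$: its eigenvector coefficients have magnitude at most $\norm{\mupi-\muave(C)}_2/\wsigma$, which by the weighted-radius bound and $\wi\geq\epsilon/5k$ lies within the range $L$, while rounding to spacing $\epsilon_g=\epsilon/(16k^{3/2})$ costs $\cO(\wi\sqrt{k}\,\epsilon_g)$, again summing to $\cO(\epsilon)$. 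Components in discarded clusters carry total weight $\cO(\epsilon)$ and are absorbed into the error.

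The main obstacle is precisely this parameter-to-$\ell_1$ conversion while tracking the weighting: a low-weight component may have a distant mean, hence large grid coefficients and a large permitted orthogonal error, and one must verify that the $\wi$-weighting in the mixture exactly compensates so that the per-component costs sum to $\cO(\epsilon)$ rather than growing with the separation. This is exactly why Lemmas~\ref{lem:reccluster} and~\ref{lem:spancluster} are phrased in the weighted form $\sqrt{\wi}\norm{\mupi-\muave(C)}_2$ and why $L$ and $\epsilon_g$ are calibrated as they are; the careful bookkeeping of these constants is the delicate part. Granting $\lone{\fvec}{\cF}\leq\epsilon$, Lemma~\ref{lem:scheffe} applied to $\cF$ returns $\wfvec$ with $\lone{\wfvec}{\fvec}\leq1000\max(\epsilon,\lone{\fvec}{\cF})=1000\epsilon$, which finishes correctness.

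For the running time I would account for the steps separately. Steps~1--3 are dominated by the clustering, which evaluates $\cO(n^2)$ pairwise distances and projections at $\cO(d)$ cost each with $\cO(\log n)$ merging overhead, giving $\cO(n^2 d\log n)$; the $\cO(d^2)$-cost eigenvector computations recur only $\cO(k/\epsilon)$ times and are lower order. Step~5 produces $|\cF|$ candidates: each of the $\cO(k/\epsilon)$ retained clusters supplies $|G|^{k-1}$ span points with $|G|=\tcO(k^{7/2}\epsilon^{-3/2})$, which combined with the weight and variance grids gives $|\cF|=d\cdot(k^{7}\epsilon^{-3}\log(d/\delta))^{\cO(k^2)}$. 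Feeding this and $T=\cO(dk)$ into the $\cO(|\cF|T\log(|\cF|/\delta)/\epsilon^2)$ bound of Lemma~\ref{lem:scheffe} produces the term $d^2(k^{7}\epsilon^{-3}\log(d/\delta))^{k^2}$, and adding the clustering cost yields the claimed $\cO(n^2 d\log n+d^2(k^{7}\epsilon^{-3}\log(d/\delta))^{k^2})$.
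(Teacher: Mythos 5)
Your proposal is correct and follows essentially the same route as the paper's proof: condition on the clustering lemmas, exhibit a single candidate in $\cF$ within $\epsilon$ of $\fvec$ by gridding weights and variance and by decomposing each mean into cluster center, in-span, and orthogonal parts (with the $\sqrt{\wi}$-weighted bookkeeping doing exactly the work you describe), then invoke \textsc{modified scheffe} for the $1000\epsilon$ guarantee and the stated runtime. The only cosmetic difference is that you convert parameter error to $\ell_1$ error via direct Lipschitz/total-variation estimates for Gaussians, where the paper derives the same conversion through its Bhattacharyya-parameter bound (Lemma~\ref{lem:l1bha}); both yield the dimension-free mean term and the $\sqrt{d}$-scaled variance term that the grid spacings are calibrated against.
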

Note that the run time is calculated based on the efficient implementation of single-linkage~\cite{Sibson73} and the exponential term is not optimized. We now study mixtures in one-dimension and provide an estimator
using \textsc{Modified Scheffe}.

\section{Mixtures in one dimension}
\label{sec:one}

Over the past decade estimating one dimensional distributions has gained significant attention~\cite{Paninski04,Valiant10,DaskalakisDS12,DaskalakisDS12a,AcharyaJOS13,ChanDSS13,DaskalakisK13,ChanDSS13b}.
We now provide a simple estimator for learning one dimensional mixtures using the
\textsc{Modified Scheffe} estimator proposed earlier. 
The $d$-dimension estimator uses spectral projections to find the span of means, whereas for one dimension case,
we use a simple observation on properties of samples from Gaussians for estimation.
Formally, given samples from $f$, a mixture of Gaussian
distributions 
$p_i \ed N(\mu_i,\sigma^2_i)$ with weights $w_1,w_2, \ldots w_k$, 
our goal is to find a mixture $\hat{f} =(\wwone,\wwtwo, \ldots
\wwk,\hat{p}_1,\hat{p}_2,\ldots \hat{p}_k)$ 
such that $\lone{f}{\hat{f}} \leq \epsilon$. 
Note that we make no assumption on the weights, means
or the variances of the components. 

We provide an algorithm that, using $\tcO(k\epsilon^{-2})$ samples 
and in time $\tcO(k\epsilon^{-3k-1})$, outputs an estimate that is at
most $\epsilon$ from the underlying in $\ell_1$ distance with 
probability $\geq 1-\delta$.
Our algorithm is an immediate consequence of the following 
observation for samples from a Gaussian distribution.

\begin{Lemma}
\label{lem:goodsampleexists}
 Given $n$ independent samples $x_1, \ldots, x_n$  from $N(\mu,\sigma^2)$, 
 there are two samples $x_j,x_k$ such that $|x_j -\mu| \leq \sigma\frac{7\log 2/\delta}{2n}$ 
 and $|x_j-x_k-\sigma| \leq 2\sigma \frac{7\log 2/\delta}{2n}$
 with probability $\geq 1-\delta$.
\end{Lemma}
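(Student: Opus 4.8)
The plan is to reduce the lemma to a statement about two fixed short intervals, each of which must capture at least one sample. First I would normalize: setting $y_i = (x_i-\mu)/\sigma$ makes $y_1,\dots,y_n$ i.i.d.\ standard normal, and the two target inequalities become $|y_j|\le\eta$ and $|y_j-y_k-1|\le 2\eta$, where $\eta \ed \frac{7\log(2/\delta)}{2n}$. Thus it suffices to prove the claim for $N(0,1)$, i.e.\ with $\mu=0$ and $\sigma=1$.

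Next I would fix the intervals $A = [-\eta,\eta]$ (around the mode $0$) and $B = [-1-\eta,\,-1+\eta]$ (around the point $-1$, one standard deviation below the mean). The key deterministic observation is that if some sample $y_j$ lands in $A$ and some sample $y_k$ lands in $B$, then both desired inequalities hold: $|y_j|\le\eta$ is immediate, and by the triangle inequality $|y_j-y_k-1| = |(y_j-0)-(y_k+1)| \le |y_j| + |y_k+1| \le \eta + \eta = 2\eta$. Since $A$ and $B$ are disjoint for $\eta<1/2$, the two samples have distinct indices. Hence the lemma reduces to showing that, with probability $\ge 1-\delta$, at least one sample falls in $A$ \emph{and} at least one falls in $B$.

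Then I would lower bound the single-sample hitting probabilities by the interval width times the minimum of the standard normal density over the interval. For $A$ the density is smallest at the endpoints, giving $p_A \ge 2\eta\cdot\frac{1}{\sqrt{2\pi}}e^{-\eta^2/2}$; for $B$ it is smallest at $-1-\eta$, giving $p_B \ge 2\eta\cdot\frac{1}{\sqrt{2\pi}}e^{-(1+\eta)^2/2}$. Substituting $\eta = \frac{7\log(2/\delta)}{2n}$, the width $2\eta$ contributes a factor $n\cdot 2\eta = 7\log(2/\delta)$, so $n\, p_B \ge \frac{7}{\sqrt{2\pi}}e^{-(1+\eta)^2/2}\log(2/\delta)$; since $\frac{7}{\sqrt{2\pi}}\approx 2.79$ this exceeds $\log(2/\delta)$ provided $\eta$ is not too large (concretely $\eta\le 0.43$, i.e.\ $n$ at least a small constant multiple of $\log(2/\delta)$, which holds in the regime of interest), and the analogous bound for $A$ is easier since it lacks the $e^{-1/2}$ factor. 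Consequently the probability that no sample hits $A$ is at most $(1-p_A)^n\le e^{-n p_A}\le\delta/2$, and likewise for $B$, so a union bound over these two failure events yields the claimed probability $\ge 1-\delta$.

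The step I expect to be the main obstacle is the constant bookkeeping in this last estimate: interval $B$ sits a full standard deviation below the mean, so its density carries the factor $e^{-1/2}\approx 0.61$, and the constant $7$ appearing in $\eta$ is exactly what keeps $n\, p_B \ge \log(2/\delta)$ despite this loss. I would therefore verify carefully that $\frac{7}{\sqrt{2\pi}}e^{-(1+\eta)^2/2}\ge 1$ throughout the relevant range of $\eta$ — this is where the stated constants are calibrated — while everything else is a routine Gaussian anti-concentration and tail computation.
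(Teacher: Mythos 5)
Your proposal is correct and follows essentially the same route as the paper's proof: two intervals of half-width $\eta\sigma$, one at the mean and one a standard deviation away, each shown to capture a sample with probability $\ge 1-\delta/2$ via a pointwise density lower bound and $(1-p)^n\le e^{-np}$, finished by a union bound; your condition $\frac{7}{\sqrt{2\pi}}e^{-(1+\eta)^2/2}\ge 1$ is exactly the paper's observation that the density exceeds $(7\sigma)^{-1}$ within $\sqrt{2}\sigma$ of the mean. Your only deviations are cosmetic (normalizing to $N(0,1)$, and placing the second interval at $-\sigma$ rather than $+\sigma$, which if anything matches the sign convention $|x_j-x_k-\sigma|$ in the statement more faithfully than the paper does), and both your argument and the paper's share the same implicit restriction that $\eta$ be bounded by a small constant, i.e.\ $n\gtrsim\log(2/\delta)$.
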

\begin{Proof}
The density of $N(\mu,\sigma^2)$ is $\geq (7\sigma)^{-1}$ in
the interval $[\mu-\sqrt{2}\sigma ,\mu + \sqrt{2}\sigma]$.
Therefore, the probability that a sample occurs in the interval
 $\mu-\epsilon\sigma,\mu + \epsilon\sigma$
 is $\geq 2\epsilon/7$. Hence, the probability that none of the 
$n$ samples occurs in $[\mu -\epsilon\sigma,\mu + \epsilon\sigma]$
 is $\leq(1-2\epsilon/7)^n \leq e^{-2n\epsilon/7}$. If $\epsilon \geq \frac{7\log 2/\delta}{2n}$, then the probability that
 none of the samples occur in the interval is $\leq \delta/2$. 
 A similar argument shows that there is a sample within interval,
$[\mu+\sigma-\epsilon\sigma,\mu + \sigma + \epsilon\sigma]$, proving the lemma.
\end{Proof}

The above observation can be translated into selecting a pool of candidate distributions
such that one of the distributions is close to the underlying distribution.

\begin{Lemma}
\label{lem:coolsearch}
Given $n \geq \frac{120k \log \frac{4k}{\delta}}{\epsilon}$ samples from 
a mixture $f$ of $k$ Gaussians. 
Let $S = \{N(x_j,(x_j-x_k)^2) \, : \,1\leq j,k \le n\}$ be a set of Gaussians
and $W = \{0, \frac{\epsilon}{2k} ,\frac{2\epsilon}{2k} \ldots ,1\}$ be the set of weights.
Let 
\[\cF \ed \{\wwone,\wwtwo, \ldots,\hat{w}_{k-1}, 1 -\sum^{k-1}_{i=1}\wwi,\hat{p}_1,\hat{p}_2,\ldots \hat{p}_k  \, : \,  \wwi \in W , \hat{p}_i \in S\}\]
be a set of $n^{2k} \bigl( \frac{2k}{\epsilon} \bigr)^{k-1} \leq n^{3k-1}$ candidate mixture distributions.
There exists a $\hat{f} \in \cF$ such that $\lone{f}{\hat{f}} \leq \epsilon$.
\end{Lemma}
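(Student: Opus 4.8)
The plan is to exhibit a single good member of $\cF$ and to bound its distance to $f$ component by component. Writing $f=\sum_{i=1}^k w_ip_i$ and $\hat f=\sum_{i=1}^k\hat w_i\hat p_i$, the triangle inequality together with $\int\hat p_i=1$ gives the master bound
\[
\lone{f}{\hat f}\ \le\ \sum_{i:\,\hat w_i>0} w_i\,\lone{p_i}{\hat p_i}\ +\ \sum_{i=1}^k|w_i-\hat w_i|,
\]
where a component rounded to $\hat w_i=0$ contributes only $w_i=|w_i-\hat w_i|$ to the second sum and needs no density estimate. I would make each sum at most $\epsilon/2$. For the weights I round every $w_i$ to its nearest point of $W$; since the spacing is $\epsilon/(2k)$ each error is at most $\epsilon/(4k)$, and reordering so that the component with the \emph{determined} weight has large weight (some $w_i\ge1/k$ always exists) keeps the induced error on that last coordinate below $(k-1)\epsilon/(4k)$ as well, so $\sum_i|w_i-\hat w_i|<\epsilon/2$. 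In particular every $w_i<\epsilon/(4k)$ rounds to $\hat w_i=0$, i.e.\ is dropped, its whole mass already charged to the weight term.

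Call a component \emph{heavy} if $w_i\ge\epsilon/(4k)$; only these survive the rounding, so only for heavy $i$ must I produce $\hat p_i\in S$ close to $p_i$. Conditioning on the (unknown) labels, the samples drawn from $p_i$ are i.i.d.\ $N(\mu_i,\sigma_i^2)$, and their count $m_i$ has mean $nw_i\ge30\log\tfrac{4k}{\delta}$; a Chernoff bound gives $m_i\ge nw_i/2$ for all heavy $i$ simultaneously outside an event of probability $\le\delta/2$. Applying Lemma~\ref{lem:goodsampleexists} to the $m_i$ samples of component $i$ with confidence $\delta/(2k)$ yields two of them $x_a,x_b$ with $|x_a-\mu_i|\le\sigma_ir_i$ and $|x_a-x_b-\sigma_i|\le2\sigma_ir_i$, where $r_i=\tfrac{7}{2m_i}\log\tfrac{4k}{\delta}$; a union bound over the heavy components costs another $\delta/2$. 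The point worth stressing is that $x_a,x_b$ lie among all $n$ samples, so $N(x_a,(x_a-x_b)^2)$ is a member of $S$, and I set $\hat p_i$ to be exactly this Gaussian.

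It remains to convert parameter closeness into $\ell_1$ closeness and to exploit a cancellation. A standard scale-invariant bound on the $\ell_1$ distance between two Gaussians whose means differ by $\le a\sigma$ and whose standard deviations differ by $\le b\sigma$ gives $\lone{p_i}{\hat p_i}\le C(a+b)$; with $a=r_i$, $b=2r_i$ this is $\lone{p_i}{\hat p_i}\le 3Cr_i$. The crucial observation is that $w_ir_i\le w_i\cdot\tfrac{7}{2(nw_i/2)}\log\tfrac{4k}{\delta}=\tfrac{7}{n}\log\tfrac{4k}{\delta}$ no longer depends on $w_i$: a heavier component carries a worse prefactor $w_i$ but proportionally more samples, hence a sharper estimate. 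Summing over the $\le k$ heavy components gives $\sum_{i}w_i\lone{p_i}{\hat p_i}\le 21Ck\,n^{-1}\log\tfrac{4k}{\delta}\le\epsilon/2$ precisely because $n\ge120\,k\,\epsilon^{-1}\log\tfrac{4k}{\delta}$ and $C$ is an absolute constant near $2$. Combining the two halves yields $\lone{f}{\hat f}\le\epsilon$ with probability $\ge1-\delta$, while the cardinality claim follows from $|S|\le n^2$ and $|W|=2k/\epsilon+1\le n$, so $|\cF|\le n^{2k}(2k/\epsilon)^{k-1}\le n^{3k-1}$.

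The step I expect to be the real crux is this cancellation, since it is exactly what removes any dependence on the possibly tiny or wildly unbalanced mixing weights and lets the whole argument run at $n=\tcO(k/\epsilon)$; making it rigorous requires the Chernoff lower bound on $m_i$ and the scale-free Gaussian $\ell_1$ estimate to be stated with constants compatible with the $120$ in the hypothesis. The secondary nuisance is the bookkeeping around the single determined weight, which I would neutralize by always reordering a large-weight (hence heavy, hence accurately estimated) component into the last slot.
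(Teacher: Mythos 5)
Your proposal is correct and follows essentially the same route as the paper's own proof: the same triangle-inequality decomposition into a weight-rounding term and per-component density terms, the same $\epsilon/(2k)$ weight grid, Lemma~\ref{lem:goodsampleexists} applied to the samples of each heavy component, a Chernoff bound tying $w_i$ to the empirical sample count, and the identical cancellation $w_i/m_i = \cO(1/n)$ that removes the dependence on the mixing weights; the ``scale-invariant'' Gaussian $\ell_1$ bound you invoke is exactly what the paper's Lemma~\ref{lem:l1bha} supplies, with effective constant $\sqrt{66}\approx 8.1$ in place of your $3C$, which is still compatible with the hypothesis $n\ge 120k\epsilon^{-1}\log\frac{4k}{\delta}$. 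The only cosmetic differences are bookkeeping: you round light components' weights to zero and charge their mass to the weight term (plus reorder a heavy component into the determined last slot), whereas the paper keeps all $k$ components, bounds $w_i\lone{p_i}{\hat p_i}\le 2w_i\le\epsilon/(2k)$ for light ones, and absorbs the last weight's error via a factor $2$ on the first $k-1$ weight errors.
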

\begin{proof}
Let
$f = (\wone,\wtwo, \ldots w_k, p_1,p_2,\ldots p_k)$. For 
$\hat{f}  = (\wwone,\wwtwo, \ldots,\hat{w}_{k-1}, 1 -\sum^{k-1}_{i=1}\wwi,\hat{p}_1,\hat{p}_2,\ldots \hat{p}_k )$, 
by the  triangle
inequality,
\begin{align*}
\lone{f}{\hat{f}} \leq \sum^{k-1}_{i=1} 2|\wwi - \wi|  + \sum^k_{i=1} \wi \lone{p_i}{\hat{p}_i}.
\end{align*}
We show that there is a distribution in $\hat{f}\in\cF$ such that the
sum above is bounded by $\epsilon$. Since we quantize the grids as
multiples of $\epsilon/2k$, we consider distributions in $\cF$ such 
that each $|\wwi - \wi| \leq \epsilon/4k$, and therefore $\sum_i|\wwi -
\wi| \leq \frac{\epsilon}{2}$.

We now show that for each $p_i$ there is a $\hat{p}_i$ such that $\wi \lone{p_i}{\hat{p}_i} \leq \frac{\epsilon}{2k}$,
thus proving that $\lone{f}{\hat{f}} \leq \epsilon$.
If $\wi\leq \frac{\epsilon}{4k}$, then $\wi \lone{p_i}{\hat{p}_i} \leq \frac{\epsilon}{2k}$.
Otherwise, let $w'_i>\frac{\epsilon}{4k}$ be the fraction of samples
from $p_i$. By Lemma~\ref{lem:goodsampleexists} and~\ref{lem:l1bha},
with probability $\geq 1- \delta/2k$,
\begin{align*}
 \lone{p_i}{\hat{p}_i}^2  &\leq 2\frac{(\mu_i- \mu'_i)^2}{\sigma^2_i} + 16 \frac{(\sigma_i - \sigma'_i)^2}{\sigma^2_i} \\
 & \leq \frac{25 \log^2 \frac{4k}{\delta}}{(nw'_i)^2} + \frac{800 \log^2 \frac{4k}{\delta}}{(nw'_i)^2}\\
 & \leq \frac{825 \log^2 \frac{4k}{\delta}}{(nw'_i)^2}.
\end{align*}
Therefore,
\[
\wi \lone{p_i}{\hat{p}_i} \leq \frac{30 \wi \log \frac{4k}{\delta}}{nw'_i}.
\]
Since $\wi > \epsilon/4k$, with probability $\geq 1- \delta/2k$, $\wi \leq 2 w'_i$.
By the union bound with probability $\geq 1- \delta/k$, 
$\wi \lone{p_i}{\hat{p}_i} \leq \frac{60 \log \frac{4k}{\delta}}{n}$.
Hence if $n \geq \frac{120 k \log \frac{4k}{\delta}}{\epsilon}$, the 
above quantity is less than $\epsilon/2k$. The total error probability is $\leq \delta$
by the union bound.
\end{proof}
Running \textsc{Modified Scheffe} algorithm on the above set of
candidates $\cF$ 
yields a mixture that is close to the underlying one.
By Lemma~\ref{lem:scheffe} and the above lemma we get
\begin{Corollary}
\label{cor:onedimension}
Let $n \geq   c \cdot \frac{k\log \frac{k}{\epsilon\delta}}{\epsilon^2} $ for some constant $c$.
There is an algorithm that
runs in time 
\[
\cO\left( \left( \frac{k \log \frac{k}{\epsilon \delta}}{\epsilon} \right)^{3k-1} \frac{k^2 \log \frac{k}{\epsilon \delta}}{\epsilon^2} \right),
\]
 and returns a mixture $\hat{f}$ such that $\lone{f}{\hat{f}} \leq 1000\epsilon$ with error probability $\leq 2 \delta$.
\end{Corollary}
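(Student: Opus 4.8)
The plan is to pipe the candidate pool produced by Lemma~\ref{lem:coolsearch} into the \textsc{Modified Scheffe} selector of Lemma~\ref{lem:scheffe}, and then balance the sample budget between the two stages. First I would partition the $n$ samples into two independent groups. The first group, of size $n_1 = \frac{120 k \log(4k/\delta)}{\epsilon}$, is used solely to form the pool $\cF$ as in Lemma~\ref{lem:coolsearch}; by that lemma, with probability $\ge 1-\delta$ the pool contains a mixture $\hat f$ with $\lone{f}{\hat f}\le\epsilon$, and $|\cF|\le n_1^{3k-1}$. The second, independent group is reserved for \textsc{Modified Scheffe}. Keeping these groups independent is the one point that needs care: $\cF$ is a random object built from the first group, whereas the guarantee of Lemma~\ref{lem:scheffe} is stated for a fixed family, so the selection samples must be drawn fresh. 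Conditioning on the first group, $\cF$ is fixed and Lemma~\ref{lem:scheffe} applies verbatim.

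Next I would feed $\cF$ to \textsc{Modified Scheffe}. Since $\lone{f}{\cF}\le\epsilon$ on the good event, Lemma~\ref{lem:scheffe} returns $\hat f$ with
\[
\lone{\hat f}{f}\le 1000\max(\epsilon,\lone{f}{\cF})=1000\epsilon
\]
with probability $\ge 1-\delta$, provided the second group has size $n_2\ge \frac{c}{\epsilon^2}\log(|\cF|/\delta)$. A union bound over the two failure events (no good candidate in $\cF$; Scheffe picks a bad one) gives total error $\le 2\delta$, as claimed.

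It remains to verify the sample and time bounds. Because $|\cF|\le n_1^{3k-1}$ with $n_1=\cO\!\big(\tfrac{k\log(k/\delta)}{\epsilon}\big)$, we have $\log|\cF| = \cO\!\big(k\log\tfrac{k}{\epsilon\delta}\big)$, so the Scheffe requirement becomes $n_2=\cO\!\big(\tfrac{k\log(k/\epsilon\delta)}{\epsilon^2}\big)$, which dominates $n_1$ and yields the stated total $n=\cO\!\big(\tfrac{k\log(k/\epsilon\delta)}{\epsilon^2}\big)$. For the running time, Lemma~\ref{lem:scheffe} costs $\cO\!\big(\tfrac{|\cF|\,T\log(|\cF|/\delta)}{\epsilon^2}\big)$, where evaluating a $k$-component one-dimensional Gaussian mixture at a point costs $T=\cO(k)$. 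Substituting $|\cF|\le n_1^{3k-1}=\big(\tfrac{k\log(k/\epsilon\delta)}{\epsilon}\big)^{3k-1}$ and $\log(|\cF|/\delta)=\cO(k\log\tfrac{k}{\epsilon\delta})$ reproduces the claimed bound.

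The main obstacle is not any single hard estimate — both ingredient lemmas are already in hand — but the bookkeeping that produces the $\epsilon^{-(3k-1)}$ rather than $\epsilon^{-2(3k-1)}$ factor in the running time. This is exactly why the pool $\cF$ must be built from the smaller, $1/\epsilon$-sized sample $n_1$ and not from the full $1/\epsilon^2$-sized sample: using all samples to form $S=\{N(x_j,(x_j-x_k)^2)\}$ would square the $\epsilon$ dependence inside $|\cF|$ and inflate the exponential term. Separating the two sample groups is therefore what keeps the running time at $\tcO\big((k/\epsilon)^{3k+1}\big)$ while preserving the independence that Scheffe requires.
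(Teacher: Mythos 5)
Your proof is correct and follows essentially the same route as the paper's: build the candidate pool $\cF$ from the small $n' = \frac{120k\log(4k/\delta)}{\epsilon}$ sample via Lemma~\ref{lem:coolsearch}, run \textsc{Modified Scheffe} on it via Lemma~\ref{lem:scheffe}, and union-bound the two failure probabilities, with the running time dominated by Scheffe at $\cO\bigl(\frac{|\cF|\,T\log(|\cF|/\delta)}{\epsilon^2}\bigr)$ with $T=\cO(k)$. Your explicit insistence that the Scheffe samples be drawn independently of the pool-building samples (since $\cF$ is data-dependent) is a point the paper leaves implicit, and is a welcome clarification rather than a departure.
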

\begin{proof}
 Use $n' \ed \frac{120k \log \frac{4k}{\delta}}{\epsilon}$ samples to 
 generate a set of at most $n'^{3k-1}$ candidate distributions as stated in Lemma~\ref{lem:coolsearch}.
 With probability $\geq 1- \delta$, one of the candidate distributions is $\epsilon$-close to the underlying one.
 Run \textsc{Modified Scheffe} on this set of candidate distributions to obtain a $1000\epsilon$-close estimate of $f$ with probability $\geq 1-\delta$ (Lemma~\ref{lem:scheffe}).
 The run time is dominated by the run time of \textsc{Modified Scheffe} which is $\cO \left(\frac{ |\cF| T \log \frac{|\cF|}{\delta}}{\epsilon^2} \right)$,
 where $|\cF| = n'^{3k-1}$ and $T = k$. The total error probability is $\leq 2 \delta$ by the union bound.
\end{proof}
\begin{Remark}
The above bound matches the independent and contemporary 
result by~\cite{DaskalakisK13} for $k=2$. While the process of identifying the candidate means is same for both the papers, the process of identifying
the variances and proof techniques are different.
\end{Remark}
\section{Acknowledgements}
We thank Sanjoy Dasgupta, Todd Kemp, and Krishnamurthy Vishwanathan for helpful discussions.
\bibliographystyle{plain}
\bibliography{abr,masterref}
 \appendix
%
\section{Useful tools}
\label{app:tools}
\subsection{Bounds on $\ell_1$ distance}
For two $d$ dimensional product distributions $\pvecone$ and $\pvectwo$, if we bound the $\ell_1$ distance on each coordinate by $\epsilon$,
then by triangle inequality $\lone{\pvecone}{\pvectwo}\leq d \epsilon$. However this bound is often weak.
One way to obtain a stronger bound is to relate $\ell_1$ distance to Bhattacharyya parameter, which is defined as follows:
Bhattacharyya parameter $B(p_1,p_2) $ between two distributions $p_1$ and $p_2$ is 
\[
B(p_1,p_2) = \int_{x \in \cX} \sqrt{p_1(x) p_2(x)} dx.
\]
We use the fact that for two product distributions $\pvecone$ and $\pvectwo$, $B(\pvecone,\pvectwo) = \prod^d_{i=1} B(\pveconei,\pvectwoi)$
to obtain stronger bounds on the $\ell_1$ distance. We first bound Bhattacharyya parameter for two one-dimensional Gaussian distributions.
\begin{Lemma}
\label{lem:bhagpb}
The Bhattacharyya parameter for two one dimensional Gaussian distributions $p_1 = N(\mu_1,\sigma_1^2)$ and $p_2 =N(\mu_2,\sigma_2^2)$ is
\[
B(p_1,p_2) \geq 1 - \frac{(\mu_1-\mu_2)^2)}{4(\sigma_1^2 + \sigma_2^2)}  -   \frac{(\sigma_1^2-\sigma_2^2)^2}{(\sigma_1^2+\sigma_2^2)^2}.
\]
\end{Lemma}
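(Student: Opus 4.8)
The plan is to evaluate the Bhattacharyya integral in closed form and then relax the exact expression down to the stated polynomial bound. The key observation is that the integrand $\sqrt{p_1(x)p_2(x)}$ is, up to a constant, an unnormalized Gaussian, so the integral can be computed by completing the square.

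First I would write $\sqrt{p_1(x)p_2(x)} = \frac{1}{\sqrt{2\pi}\sqrt{\sigma_1\sigma_2}}\exp\!\big(-\tfrac{(x-\mu_1)^2}{4\sigma_1^2}-\tfrac{(x-\mu_2)^2}{4\sigma_2^2}\big)$ and combine the two quadratics in the exponent into a single quadratic $-(a+b)x^2+cx+\text{const}$ with $a=1/(4\sigma_1^2)$ and $b=1/(4\sigma_2^2)$. Completing the square and using the Gaussian integral $\int e^{-(a+b)x^2+cx}\,dx$ yields, after simplifying the constants (the crucial reduction being $\tfrac{ab}{a+b}=\tfrac{1}{4(\sigma_1^2+\sigma_2^2)}$), the classical identity
\[
B(p_1,p_2)=\sqrt{\frac{2\sigma_1\sigma_2}{\sigma_1^2+\sigma_2^2}}\,\exp\!\Big(-\frac{(\mu_1-\mu_2)^2}{4(\sigma_1^2+\sigma_2^2)}\Big).
\]
This is the step that carries essentially all of the computation; everything afterwards is elementary inequalities, so it is where a bookkeeping slip in the constants is most likely, and I would treat it as the main obstacle.

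Writing $t=\tfrac{(\mu_1-\mu_2)^2}{4(\sigma_1^2+\sigma_2^2)}\ge 0$ and $r=\tfrac{2\sigma_1\sigma_2}{\sigma_1^2+\sigma_2^2}$, I would then bound the two factors separately. For the mean factor I use $e^{-t}\ge 1-t$ for all $t\ge 0$. For the variance factor, note that $r\in(0,1]$ by AM--GM, hence $\sqrt r\ge r$; and moreover $r\ge 1-s$ with $s=\tfrac{(\sigma_1^2-\sigma_2^2)^2}{(\sigma_1^2+\sigma_2^2)^2}$, since after clearing denominators this last inequality is equivalent to $2\sigma_1\sigma_2(\sigma_1^2+\sigma_2^2)\ge(\sigma_1^2+\sigma_2^2)^2-(\sigma_1^2-\sigma_2^2)^2=4\sigma_1^2\sigma_2^2$, i.e.\ to $(\sigma_1-\sigma_2)^2\ge 0$. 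Chaining these gives $\sqrt r\ge r\ge 1-s$.

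Finally I would combine the two bounds multiplicatively. When $t\le 1$, both $1-t$ and $1-s$ are nonnegative (the latter because $(\sigma_1^2-\sigma_2^2)^2\le(\sigma_1^2+\sigma_2^2)^2$ forces $s\le 1$), so
\[
B(p_1,p_2)=\sqrt r\,e^{-t}\ge(1-s)(1-t)=1-s-t+st\ge 1-s-t;
\]
when $t>1$ the right-hand side $1-s-t$ is negative while $B(p_1,p_2)\ge 0$, so the inequality is immediate. Either way the claimed bound follows. The only subtlety worth flagging in this last stage is verifying these nonnegativity conditions before discarding the cross term $st\ge 0$.
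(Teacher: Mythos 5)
Your proof is correct and follows essentially the same route as the paper: both rest on the closed form $B(p_1,p_2)=\sqrt{2\sigma_1\sigma_2/(\sigma_1^2+\sigma_2^2)}\;e^{-(\mu_1-\mu_2)^2/(4(\sigma_1^2+\sigma_2^2))}$ (which the paper cites from a reference rather than rederiving, as you do) followed by the same elementary bounds $e^{-t}\ge 1-t$ and $\sqrt{r}\ge r\ge 1-s$. If anything, yours is slightly more careful: the paper's chain $y(1-x)\ge(1-x)(1-s)$ silently assumes $x\le 1$, whereas you explicitly dispose of the case $t>1$ by noting the right-hand side is then negative.
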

\begin{proof}
For Gaussian distributions the Bhattacharyya parameter is (see~\cite{ColemanGC79}), $B(p_1,p_2) = ye^{-x}$,
where $x = \frac{(\mu_1-\mu_2)^2)}{4(\sigma_1^2 + \sigma_2^2)}$ and $y  = \sqrt{\frac{2\sigma_1 \sigma_2}{\sigma_1^2  +\sigma_2^2}}$ .
Observe that 
\[
y = \sqrt{\frac{2\sigma_1 \sigma_2}{\sigma_1^2  +\sigma_2^2}} = \sqrt{1 - \frac{(\sigma_1-\sigma_2)^2}{\sigma_1^2+\sigma_2^2}}
\geq 1 - \frac{(\sigma_1-\sigma_2)^2}{\sigma_1^2+\sigma_2^2} \geq 1 -  \frac{(\sigma_1^2-\sigma_2^2)^2}{(\sigma_1^2+\sigma_2^2)^2}.
\]
Hence,
\[
B(p_1,p_2) = y e^{-x} \geq y(1-x) \geq (1-x)\biggl(  1 -  \frac{(\sigma_1^2-\sigma_2^2)^2}{(\sigma_1^2+\sigma_2^2)^2} \biggr)  \geq 1-x-\frac{(\sigma_1^2-\sigma_2^2)^2}{(\sigma_1^2+\sigma_2^2)^2} .
\]
Substituting the value of $x$ results in the lemma.
\end{proof}
 The next lemma follows from the relationship between Bhattacharyya parameter and $\ell_1$ distance (see~\cite{Pollard13}),
 and the previous lemma.
\begin{Lemma}
\label{lem:l1bha}
For any two Gaussian product distributions $\pvecone$ and $\pvectwo$,
\[
\lone{\pvecone}{\pvectwo}^2 \leq 8 \biggl( \sum^{d}_{i=1} 1 - B(\pveconei,\pvectwoi) \biggr) \leq  \sum^d_{i=1} 2\frac{(\muponei-\muptwoi)^2}{\sigponei^2 + \sigptwoi^2}  + 8\frac{(\sigponei^2-\sigptwoi^2)^2}{(\sigponei^2+\sigptwoi^2)^2}.
\]
\end{Lemma}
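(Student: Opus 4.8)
The plan is to prove the two inequalities in turn, treating the rightmost one as an immediate corollary of Lemma~\ref{lem:bhagpb} and concentrating effort on the leftmost inequality, which asserts that the squared $\ell_1$ distance is controlled by the total Bhattacharyya defect $\sum_i \bigl(1 - B(\pveconei,\pvectwoi)\bigr)$. The whole argument is coordinatewise-then-sum in spirit, and needs no separation or boundedness hypotheses on the Gaussians.

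For the leftmost inequality, the first step is to recall that $1 - B(p,q)$ is exactly the squared Hellinger distance, since $H^2(p,q) = \frac12\int(\sqrt{p}-\sqrt{q})^2 = 1 - \int\sqrt{pq} = 1 - B(p,q)$. The standard comparison between total variation and Hellinger distance (see~\cite{Pollard13}), namely $\mathrm{TV}(p,q)\le \sqrt{2}\,H(p,q)$, then gives $\lone{\pvecone}{\pvectwo}^2 = 4\,\mathrm{TV}(\pvecone,\pvectwo)^2 \le 8\,H^2(\pvecone,\pvectwo) = 8\bigl(1 - B(\pvecone,\pvectwo)\bigr)$, where I use $\ell_1 = 2\,\mathrm{TV}$. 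The second step exploits the product structure: since $\pvecone,\pvectwo$ are product distributions, $B(\pvecone,\pvectwo) = \prod_{i=1}^d B(\pveconei,\pvectwoi)$, and each factor lies in $[0,1]$ by Cauchy--Schwarz. The elementary inequality $1 - \prod_i b_i \le \sum_i (1 - b_i)$ for $b_i\in[0,1]$ --- justified by the one-line identity $1 - ab = (1-a) + a(1-b) \le (1-a)+(1-b)$ and induction on $d$ --- then yields $1 - B(\pvecone,\pvectwo) \le \sum_{i=1}^d \bigl(1 - B(\pveconei,\pvectwoi)\bigr)$. Chaining the two steps establishes $\lone{\pvecone}{\pvectwo}^2 \le 8\sum_i \bigl(1 - B(\pveconei,\pvectwoi)\bigr)$.

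The rightmost inequality then follows by substituting the coordinatewise lower bound of Lemma~\ref{lem:bhagpb}, which gives $1 - B(\pveconei,\pvectwoi) \le \frac{(\muponei-\muptwoi)^2}{4(\sigponei^2+\sigptwoi^2)} + \frac{(\sigponei^2-\sigptwoi^2)^2}{(\sigponei^2+\sigptwoi^2)^2}$; multiplying by $8$ and summing over $i$ reproduces the claimed right-hand side exactly.

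The only genuinely delicate point is pinning down the constant $8$, which rests entirely on the total-variation/Hellinger comparison; everything else is bookkeeping. The product-to-sum passage is where the bound could be lost if a factor were allowed to exceed $1$, so I would state explicitly that each $B(\pveconei,\pvectwoi)\in[0,1]$ before invoking the elementary inequality. I expect the main obstacle to be purely one of citation hygiene --- ensuring the Hellinger--$\ell_1$ relation is quoted in the exact form that produces the factor $8$ --- rather than any substantive difficulty.
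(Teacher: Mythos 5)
Your proposal is correct and follows essentially the same route as the paper: the paper's one-line proof cites the Bhattacharyya--$\ell_1$ (Hellinger) relation from~\cite{Pollard13} together with the product factorization $B(\pvecone,\pvectwo)=\prod_{i=1}^d B(\pveconei,\pvectwoi)$ and Lemma~\ref{lem:bhagpb}, and your argument simply spells out those same ingredients --- $1-B=H^2$, $\ell_1\le 2\sqrt{2}H$, and $1-\prod_i b_i\le\sum_i(1-b_i)$ for $b_i\in[0,1]$ --- in full detail. The constant $8$ and the final substitution both check out exactly.
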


\subsection{Concentration inequalities}
We use the following concentration inequalities for Gaussian, Chi-Square, and sum of Bernoulli random variables in the rest of the paper.
\begin{Lemma}
 \label{lem:gaussbound}
 For a Gaussian random variable $X$ with mean $\mu$ and variance $\sigma^2$,
 \[
  \Pr (|X-\mu| \geq t \sigma ) \leq e^{-t^2/2}.
 \]
\end{Lemma}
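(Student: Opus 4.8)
The plan is to standardize and reduce the two-sided tail of an arbitrary Gaussian to the one-sided tail of a standard normal, and then to establish the sharp one-sided bound by a direct comparison argument rather than a crude Chernoff estimate. First I would set $Z = (X-\mu)/\sigma$, so that $Z \sim N(0,1)$ and $\Pr(|X-\mu| \geq t\sigma) = \Pr(|Z| \geq t)$. By symmetry of the standard normal density about the origin, $\Pr(|Z| \geq t) = 2\Pr(Z \geq t)$, so it suffices to prove the one-sided bound
\[
\Pr(Z \geq t) \leq \tfrac12 e^{-t^2/2} \qquad \text{for all } t \geq 0.
\]
Note that the naive Chernoff estimate $\Pr(Z\geq t) \leq \inf_{\lambda>0} e^{-\lambda t}\E{e^{\lambda Z}} = e^{-t^2/2}$ only yields $\Pr(|Z|\geq t) \leq 2e^{-t^2/2}$, losing exactly the factor of two that the statement claims; hence a sharper argument is required.

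To recover the factor $\tfrac12$, I would introduce $h(t) = \tfrac12 e^{-t^2/2} - \Pr(Z \geq t)$ and show $h(t) \geq 0$ for all $t \geq 0$. The endpoint values are immediate: $h(0) = \tfrac12 - \tfrac12 = 0$, while $\lim_{t\to\infty} h(t) = 0$ since both terms vanish. Differentiating, and using $\frac{d}{dt}\Pr(Z\geq t) = -\frac{1}{\sqrt{2\pi}}e^{-t^2/2}$, gives
\[
h'(t) = e^{-t^2/2}\Bigl(\tfrac{1}{\sqrt{2\pi}} - \tfrac{t}{2}\Bigr),
\]
which is positive for $t < \sqrt{2/\pi}$ and negative for $t > \sqrt{2/\pi}$. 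Thus $h$ increases from $h(0)=0$ on $[0,\sqrt{2/\pi}]$ and then decreases back toward its limit $0$ on $[\sqrt{2/\pi},\infty)$. On the first interval $h \geq h(0) = 0$; on the second a decreasing function with limit $0$ stays nonnegative. Hence $h \geq 0$ throughout, which is the claimed one-sided bound, and $\Pr(|Z|\geq t) = 2\Pr(Z\geq t) \leq e^{-t^2/2}$ gives the lemma.

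The main obstacle is precisely the non-monotonicity of $h$: because $h'$ changes sign, one cannot simply integrate a sign-definite derivative to conclude $h \geq 0$. The resolution is to use \emph{both} boundary values $h(0)=0$ and $\lim_{t\to\infty}h(t)=0$ together with the single sign change of $h'$, which pins $h$ to be nonnegative on each of its two monotone pieces. An alternative, essentially equivalent route would be to verify $\sqrt{2/\pi}\int_t^\infty e^{-x^2/2}\,dx \leq e^{-t^2/2}$ by the same two-endpoint-plus-one-sign-change analysis applied directly to $g(t)=e^{-t^2/2}-\Pr(|Z|\geq t)$; I would expect the computation to be marginally cleaner but the logical structure identical.
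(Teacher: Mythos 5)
Your proof is correct. Worth noting at the outset: the paper itself offers no proof of this lemma at all---it is stated in the ``Useful tools'' appendix as a standard concentration fact, with no argument and no citation---so there is no paper proof to compare against; your write-up supplies a derivation the paper omits. Your argument is sound: standardizing to $Z=(X-\mu)/\sigma$, reducing by symmetry to the one-sided bound $\Pr(Z\geq t)\leq\tfrac12 e^{-t^2/2}$, and establishing that bound by analyzing $h(t)=\tfrac12 e^{-t^2/2}-\Pr(Z\geq t)$ is a complete and standard route. The derivative computation $h'(t)=e^{-t^2/2}\bigl(\tfrac{1}{\sqrt{2\pi}}-\tfrac t2\bigr)$ is right, the single sign change at $t=\sqrt{2/\pi}$ is right, and the two-endpoint argument (using $h(0)=0$ on the increasing piece and $\lim_{t\to\infty}h(t)=0$ on the decreasing piece) correctly pins $h\geq 0$ on all of $[0,\infty)$. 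Your observation that this is genuinely necessary is also apt: the lemma as stated, with constant $1$ rather than $2$ in front of $e^{-t^2/2}$ for the \emph{two-sided} tail, does not follow from the naive Chernoff/MGF bound, which only yields $\Pr(|Z|\geq t)\leq 2e^{-t^2/2}$; the factor of two must be recovered exactly as you do. The only implicit hypothesis worth making explicit is $t\geq 0$ (for $t<0$ the claimed inequality is false, since the left side equals $1$), but this is clearly the intended reading of the lemma and is the same convention under which the paper applies it.
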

\begin{Lemma}[\cite{LaurentM00}]
\label{lem:tailchi}
If $Y_1, Y_2, \ldots Y_n$ be $n$ \iid Gaussian variables with mean $0$ and variance $\sigma^2$, then
 \[
  \Pr\biggl( \sum^n_{i=1} Y^2_i - n \sigma^2 \geq 2 (\sqrt{nt} + t)\sigma^2 \biggr) \leq  e^{-t}, \text{ and }
  \Pr\biggl( \sum^n_{i=1} Y^2_i - n \sigma^2 \leq  -2\sqrt{nt} \sigma^2 \biggr) \leq  e^{-t}.
  \]
  Furthermore for a fixed vector $\avec$,
   \[
  \Pr\biggl( \lV \sum^n_{i=1} \avec_i( Y^2_i -1) \rV \leq  2(\norm{\avec}_2\sqrt{t} + \norm{\avec}_\infty t)\sigma^2 \biggr) \leq  2e^{-t}.
  \]
\end{Lemma}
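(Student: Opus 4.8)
The plan is to prove all three inequalities by the exponential-moment (Chernoff) method, exploiting the fact that the moment generating function of a sum of squared Gaussians is available in closed form. First I would normalize: writing $Y_i = \sigma Z_i$ with $Z_i \sim N(0,1)$ i.i.d., we have $\sum_{i=1}^n Y_i^2 = \sigma^2 \sum_{i=1}^n Z_i^2$ and $\sum_{i=1}^n Z_i^2 \sim \chi^2_n$, so the factor $\sigma^2$ pulls out of every bound and it suffices to treat the standard case $\sigma^2 = 1$. The first two inequalities are then exactly the two-sided chi-square tail bound of Laurent and Massart~\cite{LaurentM00}, and I would recover them from scratch for self-containedness.

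The engine is the identity $\E{e^{\lambda(Z_i^2-1)}} = e^{-\lambda}(1-2\lambda)^{-1/2}$, valid for $\lambda<1/2$, which by independence gives $\E{e^{\lambda \sum_i (Z_i^2-1)}} = e^{-n\lambda}(1-2\lambda)^{-n/2}$. For the upper tail I would apply Markov's inequality to $e^{\lambda\sum_i(Z_i^2-1)}$ with $0<\lambda<1/2$, use the elementary estimate $-\tfrac12\log(1-2\lambda)-\lambda \le \lambda^2/(1-2\lambda)$ to control the log-MGF, and then choose $\lambda$ so that the optimized exponent equals $t$ at deviation $2(\sqrt{nt}+t)$. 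For the lower tail the same MGF identity holds for $\lambda<0$, and the analogous one-sided, quadratic-free estimate yields the bound at deviation $2\sqrt{nt}$, exactly as stated. For the weighted statement I would use $\E{e^{\lambda\sum_i a_i(Z_i^2-1)}} = \prod_i e^{-\lambda a_i}(1-2\lambda a_i)^{-1/2}$, valid for $\lambda < 1/(2\norm{\avec}_\infty)$, bound each factor's exponent termwise by $\lambda^2 a_i^2/(1-2\lambda\norm{\avec}_\infty)$, sum to get a log-MGF bound $\lambda^2\norm{\avec}_2^2/(1-2\lambda\norm{\avec}_\infty)$, and then Chernoff-optimize to obtain a one-sided tail of $e^{-t}$ at deviation $2(\norm{\avec}_2\sqrt{t}+\norm{\avec}_\infty t)$; applying this to both $\avec$ and $-\avec$ and taking a union bound produces the two-sided statement with the factor $2e^{-t}$.

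The one genuinely delicate step is analytic rather than probabilistic: matching the precise constants in the exponents. Everything reduces to the logarithmic inequality $-\tfrac12\log(1-x)-\tfrac{x}{2} \le \tfrac{x^2}{4}\cdot\tfrac{1}{1-x}$, together with choosing the near-optimal $\lambda$ for each target deviation, so that the Chernoff exponent comes out as exactly $2(\sqrt{nt}+t)$, $2\sqrt{nt}$, and $2(\norm{\avec}_2\sqrt{t}+\norm{\avec}_\infty t)$ rather than with looser numerical factors. Forcing these constants to coincide with the stated form is where the bookkeeping lives, while the structural argument is a routine Chernoff computation.
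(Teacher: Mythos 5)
The paper does not prove this lemma at all: it is imported verbatim from Laurent and Massart~\cite{LaurentM00}, so there is no internal proof to compare against. Your proposal is correct and is essentially the standard (and original) argument for that result: the Cram\'er--Chernoff method applied to the closed-form chi-square moment generating function, with the log-MGF controlled by $-\tfrac12\log(1-x)-\tfrac x2\le \tfrac{x^2}{4(1-x)}$ and the sub-gamma optimization $\inf_\lambda\bigl(-\lambda z+\tfrac{\lambda^2 v}{2(1-c\lambda)}\bigr)=-t$ at $z=\sqrt{2vt}+ct$, which with $v=2\norm{\avec}_2^2$, $c=2\norm{\avec}_\infty$ (respectively $v=2n$, $c=2$) reproduces the stated constants exactly. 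Note also that your two-sided weighted bound proves the statement the paper \emph{means}: the third display in the paper has a typo (the inequality inside the probability should read $\geq$, not $\leq$, and the centering should be consistent with $\sigma^2$), and your union-bound formulation over $\avec$ and $-\avec$ is the correct version.
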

\begin{Lemma}[Chernoff bound]
\label{lem:eqchern}
If $X_1, X_2 \ldots X_n$ are distributed according to Bernoulli $p$, then with probability $1 - \delta$,
\[
 \lV \frac{\sum^n_{i=1}X_i}{n} - p \rV \leq \sqrt{ \frac{2p(1-p)}{n} \log \frac{2}{\delta}} + \frac{2}{3} \frac{\log \frac{2}{\delta}}{n}.
\]
\end{Lemma}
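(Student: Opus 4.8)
The plan is to recognize this as Bernstein's inequality for bounded random variables and then to algebraically invert its tail bound into the stated high-probability form. First I would center the variables: set $Y_i = X_i - p$, so that $\E{Y_i}=0$, $|Y_i| \le 1$, and $\Var(Y_i) = p(1-p) =: \sigma^2$. Writing $S = \sum_{i=1}^n Y_i = \sum_{i=1}^n X_i - np$, the claim becomes the statement that, with probability at least $1-\delta$, $|S| \le \sqrt{2 n \sigma^2 \log(2/\delta)} + \tfrac{2}{3}\log(2/\delta)$; dividing by $n$ and substituting $\sigma^2=p(1-p)$ recovers the lemma exactly.

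The core concentration step is the standard moment-generating-function argument. For $0 \le \lambda < 3$ one bounds $\E{e^{\lambda Y_i}} \le \exp\big(\tfrac{\lambda^2\sigma^2/2}{1-\lambda/3}\big)$, using $\E{Y_i}=0$, the moment bound $|\E{Y_i^m}| \le \sigma^2$ for $m \ge 2$ (which holds since $|Y_i|\le 1$ forces $|Y_i|^m \le |Y_i|^2$), and $m! \ge 2\cdot 3^{m-2}$ to sum the exponential series. By independence, $\E{e^{\lambda S}} \le \exp\big(\tfrac{n\lambda^2\sigma^2/2}{1-\lambda/3}\big)$; a Chernoff bound followed by optimizing over $\lambda$ yields Bernstein's inequality
\[
\Pr\left(|S| \ge s\right) \le 2\exp\left(-\frac{s^2/2}{n\sigma^2 + s/3}\right),
\]
where the factor of $2$ comes from applying the one-sided bound to both $S$ and $-S$.

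It remains to invert this bound. Setting $t = \log(2/\delta)$, I would force the exponent to be at most $-t$, i.e. seek $s$ with $\frac{s^2/2}{n\sigma^2 + s/3} \ge t$, which rearranges to the quadratic inequality $s^2/2 - \tfrac{t}{3}s - t n\sigma^2 \ge 0$. Since the left-hand quantity $g(s)=\frac{s^2/2}{n\sigma^2+s/3}$ is increasing in $s>0$, it suffices to exceed the positive root $s^\star = \tfrac{t}{3} + \sqrt{\tfrac{t^2}{9} + 2t n\sigma^2}$. By subadditivity of the square root, $\sqrt{\tfrac{t^2}{9} + 2t n\sigma^2} \le \tfrac{t}{3} + \sqrt{2t n\sigma^2}$, so $s^\star \le \sqrt{2 t n\sigma^2} + \tfrac{2t}{3}$. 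Hence taking $s = \sqrt{2 t n\sigma^2} + \tfrac{2t}{3}$ makes the Bernstein bound at most $2e^{-t} = \delta$, giving $|S| \le s$ with probability $\ge 1-\delta$, which is the desired inequality.

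The main obstacle is bookkeeping in the inversion rather than anything conceptual: one must solve the quadratic and apply the square-root subadditivity step carefully to land on the precise constants $\sqrt{2p(1-p)\log(2/\delta)/n}$ and $\tfrac{2}{3}\log(2/\delta)/n$, with no loss in the variance factor since $p(1-p)$ is exactly the per-variable variance. Alternatively, as Bernstein's inequality is entirely standard, I could cite it directly and devote the argument solely to the inversion.
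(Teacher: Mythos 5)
Your proof is correct: the lemma is exactly the inverted form of Bernstein's inequality, and your derivation---centering to get $\Var(Y_i)=p(1-p)$, the MGF bound via $|\E{Y_i^m}|\le\sigma^2$ and $m!\ge 2\cdot 3^{m-2}$, the two-sided tail $2\exp\bigl(-\frac{s^2/2}{n\sigma^2+s/3}\bigr)$, and the quadratic inversion with $\sqrt{t^2/9+2tn\sigma^2}\le t/3+\sqrt{2tn\sigma^2}$---lands precisely on the stated constants. The paper itself offers no proof (it states the lemma as a standard tool in its concentration-inequalities toolbox), so there is nothing to diverge from; your argument is the canonical one and is complete.
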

We now state a non-asymptotic concentration inequality for random matrices that helps us bound errors in spectral algorithms.
\begin{Lemma}[\cite{Vershynin10} Remark $5.51$]
\label{lem:matconc}
Let $\yvec(1), \yvec(2),\ldots, \yvec(n)$ be generated according to $N(0,\Sigma)$. For every $\epsilon \in (0,1)$ 
and $t \geq 1$, if $n \geq c' d \bigl(\frac{t}{\epsilon}\bigr)^2$ for some constant $c'$, then with probability $\geq 1 -2 e^{-t^2n}$,
\[
\norm{\sum^n_{i=1} \frac{1}{n} \yvec(i) \yvec^t(i) - \Sigma} \leq \epsilon \norm{\Sigma}.
\]
\end{Lemma}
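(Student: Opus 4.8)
The plan is to prove this as a standard non-asymptotic random matrix estimate, reducing the anisotropic case to the isotropic one and then running a covering-net argument powered by the chi-square concentration already available as Lemma~\ref{lem:tailchi}. First I would \emph{whiten}: write $\yvec(i) = \Sigma^{1/2}\mathbf{z}(i)$ with $\mathbf{z}(i)\sim N(0,\II_d)$ i.i.d., so that
\[
\frac1n\sum_{i=1}^n \yvec(i)\yvec(i)^t - \Sigma = \Sigma^{1/2}\Bigl(\frac1n\sum_{i=1}^n \mathbf{z}(i)\mathbf{z}(i)^t - \II_d\Bigr)\Sigma^{1/2}.
\]
By submultiplicativity of the spectral norm and $\norm{\Sigma^{1/2}}^2=\norm{\Sigma}$ (as $\Sigma$ is PSD), this gives $\norm{\frac1n\sum \yvec(i)\yvec(i)^t - \Sigma}\le \norm{\Sigma}\cdot\norm{A}$ where $A \ed \frac1n\sum \mathbf{z}(i)\mathbf{z}(i)^t - \II_d$. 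Hence it suffices to show $\norm{A}\le\epsilon$ with the stated probability, i.e. the whole problem reduces to the isotropic covariance deviation.

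Next I would control $\norm{A}$ through a discretization of the sphere. Since $A$ is symmetric, $\norm{A}=\sup_{\uvec\in S^{d-1}}\lvert\uvec^t A\uvec\rvert$, and a standard net lemma lets one replace the supremum by a maximum over a $\tfrac14$-net $\cN$ of $S^{d-1}$ of cardinality $\lvert\cN\rvert\le 9^d$, at the cost of a factor $2$: $\norm{A}\le 2\max_{\uvec\in\cN}\lvert\uvec^t A\uvec\rvert$. For each \emph{fixed} unit vector $\uvec$ the quadratic form simplifies beautifully, because $g_i\ed\uvec^t\mathbf{z}(i)\sim N(0,1)$ are i.i.d.\ and $\uvec^t A\uvec=\frac1n\sum_{i=1}^n g_i^2-1$ is exactly a centered, normalized chi-square. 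Applying Lemma~\ref{lem:tailchi} with $\sigma^2=1$ to both tails yields, for any $s\in(0,1)$, a sub-Gaussian bound of the form $\Prs{\lvert\uvec^t A\uvec\rvert > s}\le 2e^{-cns^2}$ for an absolute constant $c$.

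Finally I would take a union bound over the net. Setting $s=\epsilon/2$ so that the net factor returns $\norm{A}\le\epsilon$, the failure probability is at most $2\cdot 9^d\,e^{-cn\epsilon^2/4}=2\exp\!\bigl(d\ln 9-cn\epsilon^2/4\bigr)$. Here the hypothesis $n\ge c'd(t/\epsilon)^2$ does the work: choosing $c'$ large relative to $\ln 9$ and $c$ (and using $t\ge1$) forces $d\ln 9$ to be dominated by the deviation exponent, so the exponent collapses to the negative quantity claimed, giving a bound of the stated form $1-2e^{-t^2n}$. The main obstacle, and the part demanding genuine bookkeeping, is precisely this balancing: the chi-square tail is only sub-Gaussian in the regime $s\lesssim 1$ (it degrades to sub-exponential for large deviations), so one must verify $\epsilon\in(0,1)$ keeps the argument in the sub-Gaussian window, and one must absorb the $9^d$ net cardinality into the exponent while matching the precise interplay between $t$, $\epsilon$, and $n$ dictated by the sample condition. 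Tracking these constants is routine but is where all the care resides; everything else is the reduction and the net skeleton above.
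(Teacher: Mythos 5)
Your approach is the right one, and it is worth noting that the paper contains no proof of this lemma at all: it is quoted from Vershynin (Remark 5.51), and your whitening step plus $\tfrac14$-net plus fixed-vector chi-square concentration plus union bound is exactly the standard derivation of that quoted result. (Vershynin proves the isotropic case via a sub-exponential Bernstein inequality valid for all sub-gaussian rows; for Gaussian samples your substitution of Lemma~\ref{lem:tailchi} is a clean equivalent, and the bound $\norm{\Sigma^{1/2}A\Sigma^{1/2}}\le\norm{\Sigma^{1/2}}^2\norm{A}=\norm{\Sigma}\norm{A}$ is precisely how his remark extends the isotropic statement.) Everything through the tail bound is correct: for fixed unit $\uvec$, taking $t=ns^2/16$ in Lemma~\ref{lem:tailchi} gives $\Pr(|\uvec^t A\uvec|>s)\le 2e^{-ns^2/16}$ for $s\in(0,1)$, and $\norm{A}\le 2\max_{\uvec\in\cN}|\uvec^t A\uvec|$ over a $\tfrac14$-net with $|\cN|\le 9^d$.

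The gap is in the last step of your bookkeeping: the union bound cannot ``collapse'' to the printed probability $1-2e^{-t^2n}$. You have failure probability $2\exp(d\ln 9 - cn\epsilon^2/4)$, and making this $\le 2e^{-t^2n}$ would require $c\epsilon^2/4\ge t^2+(d\ln 9)/n$, which is impossible because $t\ge 1>\epsilon$. What your computation actually delivers, under $n\ge c'd(t/\epsilon)^2$ with $c'$ large enough that $cc'/4\ge 1+\ln 9$, is failure probability $2\exp(-t^2 d)$, since then $cn\epsilon^2/4\ge (cc'/4)\,t^2 d\ge t^2 d+d\ln 9$ (using $t\ge1$). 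This is not a defect of your argument but a typo in the lemma as printed: Vershynin's exponent is $t^2$ times the \emph{dimension}, i.e.\ $t^2 d$ in this paper's notation (his symbol $n$ denotes dimension and was evidently not renamed when the lemma was transcribed). The printed form with $e^{-t^2 n}$ is in fact false: already for $d=1$ and $\Sigma=1$, the chi-square large-deviation lower bound gives $\Pr\bigl(|\frac1n\sum_i z_i^2-1|>\epsilon\bigr)\ge e^{-Cn\epsilon^2}$, which exceeds $2e^{-t^2 n}$ for every $\epsilon\in(0,1)$ and $t\ge1$, and no choice of $c'$ in the sample-size condition changes this comparison of exponents. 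So state and prove the lemma with probability $1-2e^{-t^2 d}$; your argument then closes completely, and the corrected form is also what the paper's application in Lemma~\ref{lem:covconc} actually needs, consistent with its standing assumption $\delta\ge 2n^2e^{-d/10}$.
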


\subsection{Matrix eigenvalues}
We now state few simple lemmas on the eigenvalues of perturbed matrices.
\begin{Lemma}
\label{lem:closeeigen}
Let $\lambda^A_1 \geq \lambda^A \geq \ldots \lambda^A_d \geq 0$ and  $\lambda^B_1 \geq \lambda^B \geq \ldots \lambda^B_d \geq 0$ be the eigenvalues of 
two symmetric matrices $A$ and $B$ respectively. If $\norm{A-B} \leq \epsilon$, then $\forall \,i$, $|\lambda^A_i  - \lambda^B_i| \leq \epsilon$.
\end{Lemma}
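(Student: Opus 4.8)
The plan is to derive the bound from the Courant--Fischer min--max characterization of the eigenvalues of a symmetric matrix; the statement is exactly Weyl's perturbation inequality, and this variational route makes it transparent. Recall that for a symmetric matrix $A$ with eigenvalues $\lambda^A_1 \ge \cdots \ge \lambda^A_d$, one has
\[
\lambda^A_i = \max_{\dim(V) = i} \; \min_{\substack{x \in V \\ \norm{x}_2 = 1}} x^t A x,
\]
where the maximum ranges over all $i$-dimensional subspaces $V$ of $\RR^d$.

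The single ingredient needed about the perturbation is that its quadratic form is small uniformly over the unit sphere. Since $A-B$ is symmetric, its spectral norm equals $\max_{\norm{x}_2 = 1} |x^t (A-B) x|$, and therefore for every unit vector $x$,
\[
|x^t A x - x^t B x| = |x^t (A-B) x| \le \norm{A-B} \le \epsilon,
\]
so that $x^t B x \ge x^t A x - \epsilon$ for all such $x$.

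First I would fix $i$ and let $V^\star$ be an $i$-dimensional subspace attaining the maximum in the characterization of $\lambda^A_i$, so that $\min_{x \in V^\star,\, \norm{x}_2=1} x^t A x = \lambda^A_i$. Applying the uniform bound above on $V^\star$ gives $\min_{x \in V^\star,\, \norm{x}_2=1} x^t B x \ge \lambda^A_i - \epsilon$. Since $\lambda^B_i$ is the maximum over all $i$-dimensional subspaces of this inner minimum, restricting to the competitor $V^\star$ yields $\lambda^B_i \ge \lambda^A_i - \epsilon$. Exchanging the roles of $A$ and $B$ (the hypothesis $\norm{A-B}\le\epsilon$ is symmetric) gives $\lambda^A_i \ge \lambda^B_i - \epsilon$, and combining the two directions establishes $|\lambda^A_i - \lambda^B_i| \le \epsilon$ for every $i$.

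There is no substantial obstacle here: the result is standard (Weyl's inequality), and the only points requiring a moment's care are the identity $\norm{M} = \max_{\norm{x}_2=1}|x^t M x|$ for symmetric $M$, which genuinely uses symmetry and fails for general matrices, and the correct pairing of the max--min form of the characterization, so that the subspace attaining $\lambda^A_i$ can legitimately be reused as a feasible competitor in the variational problem defining $\lambda^B_i$.
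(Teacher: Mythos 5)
Your proof is correct, and it is a cleaner, more standard route than the one in the paper, though both rest on the same variational principle. The paper does not invoke Courant--Fischer as a black box; it re-derives the needed facts by hand: it treats $i=1$ separately via the triangle inequality for spectral norms, and for $i>1$ argues by contradiction, picking a unit vector in the intersection of $\mathrm{Span}(\uvec_1,\ldots,\uvec_i)$ (top eigenvectors of $A$) with the orthogonal complement of $\vvec_1,\ldots,\vvec_{i-1}$ (top eigenvectors of $B$) --- a dimension-counting step your argument avoids entirely, since you simply reuse the optimal subspace $V^\star$ for $A$ as a feasible competitor in the max--min defining $\lambda^B_i$. A second genuine difference: the paper works with the quantities $\norm{Bv}_2$ rather than the quadratic forms $v^t B v$, e.g.\ bounding $\norm{A\sum_j \alpha_j\uvec_j}_2 \ge \bigl(\sum_j \alpha_j^2 (\lambda^A_j)^2\bigr)^{1/2} \ge \lambda^A_i$; this identification of eigenvalues with $\ell_2$-norm growth silently uses the hypothesis that all eigenvalues are nonnegative, which is why the lemma is stated for PSD-type spectra. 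Your quadratic-form version needs no such assumption and proves Weyl's inequality for arbitrary symmetric matrices, is direct rather than by contradiction, and handles all $i$ uniformly --- so it buys generality and brevity, at the cost of citing the max--min theorem rather than keeping the argument self-contained as the paper does.
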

\begin{proof}
 Let $\uvec_1,\uvec_2, \ldots \uvec_d$ be a set of eigenvectors of $A$ that corresponds to
 $\lambda^A_1, \lambda^A_2 , \ldots \lambda^A_d$. Similarly let  $\vvec_1,\vvec_2, \ldots \vvec_d$ be eigenvectors of $B$ 
 Consider the first eigenvalue of $B$,
 \[
\lambda^B_1 = \norm{B} = \norm{A+(B-A)} \geq \norm{A} - \norm{B-A} \geq \lambda^A_1 - \epsilon.  
 \]
Now consider an $i > 1$.  If $\lambda^B_i < \lambda^A_i - \epsilon$, 
 then by definition of eigenvalues
 \[
  \max_{\vvec : \forall j \leq i-1,  \vvec \cdot \vvec_j =0} \norm{B\vvec}_2 < \lambda^A_i - \epsilon.
 \]
Now consider a unit vector $\sum^i_{j=1} \alpha_j \uvec_j$ in the span of $\uvec_1, \ldots \uvec_{i}$, that is orthogonal to $\vvec_1, \ldots \vvec_{i-1}$.
For this vector,
\[
 \norm{B\sum^i_{j=1} \alpha_j \uvec_j}_2 \geq \norm{A \sum^i_{j=1} \alpha_j \uvec_j}_2 - \norm{(A-B) \sum^i_{j=1} \alpha_j \uvec_j}_2 \geq \sqrt{\sum^i_{j=1} \alpha^2_j (\lambda^A_j)^2 } - \epsilon
 \geq \lambda^A_i - \epsilon,
\]
a contradiction. Hence, $\forall i \leq d$, $\lambda^B_i  \geq \lambda^A_i - \epsilon $. The proof in the other direction is similar and omitted.
\end{proof}

\begin{Lemma}
\label{lem:matspan}
 Let $A = \sum^k_{i=1}\eta^2_i \uvec_i \uvec^t_i$ be a positive semidefinite symmetric matrix for $k \leq d$. Let $\uvec_1,\uvec_2, \ldots \uvec_{k}$ span a $k-1$ dimensional space.
 Let $B = A+R$, where $\norm{R} \leq \epsilon$.
Let $\vvec_1 , \vvec_2, \ldots \vvec_{k-1}$ be the top $k-1$ eigenvectors of $B$. Then the projection of $\uvec_i$ in space orthogonal to
$\vvec_1 , \vvec_2, \ldots \vvec_{k-1}$
is $\leq  \frac{2\sqrt{\epsilon}}{\eta_i}$.
\end{Lemma}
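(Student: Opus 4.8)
The plan is to control the quantity of interest, namely $\norm{b}_2$ where $b$ denotes the projection of $\uvec_i$ onto the orthogonal complement of $\text{span}(\vvec_1,\ldots,\vvec_{k-1})$, by trapping the single scalar $b^t A b$ between two estimates: an upper bound of order $\epsilon\norm{b}_2^2$ coming from the perturbation, and a lower bound of order $\eta_i^2\norm{b}_2^4$ coming from the rank-one term $\eta_i^2\uvec_i\uvec_i^t$ inside $A$. The first preparatory observation is that since $\uvec_1,\ldots,\uvec_k$ span only a $(k-1)$-dimensional space, the range of $A$ is contained in that span, so $\text{rank}(A)\le k-1$ and hence $\lambda^A_k=0$. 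Applying Lemma~\ref{lem:closeeigen} to $A$ and $B=A+R$ with $\norm{R}\le\epsilon$ then yields $\lambda^B_k\le\lambda^A_k+\epsilon=\epsilon$; this is the fact I will use to say that $B$ is ``small'' on the bottom eigenspace.

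Next I would set up the two bounds. Let $P$ be the orthogonal projection onto $\text{span}(\vvec_1,\ldots,\vvec_{k-1})$ and $P^\perp=I-P$, so $b=P^\perp\uvec_i$ lies in the span of $\vvec_k,\ldots,\vvec_d$, whose eigenvalues are all $\le\lambda^B_k\le\epsilon$. For the upper bound, I write $b^t A b = b^t B b - b^t R b$ and estimate each piece: since $b$ lives in the bottom eigenspace of $B$, we get $b^t B b\le\lambda^B_k\norm{b}_2^2\le\epsilon\norm{b}_2^2$, and since $\norm{R}\le\epsilon$ we get $-\,b^t R b\le\epsilon\norm{b}_2^2$, so that $b^t A b\le 2\epsilon\norm{b}_2^2$. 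For the lower bound, the key identity is that $P^\perp$ is an idempotent symmetric projection, which gives $\uvec_i^t b=\uvec_i^t P^\perp\uvec_i=\norm{P^\perp\uvec_i}_2^2=\norm{b}_2^2$; combining this with positive semidefiniteness, $b^t A b=\sum_{j=1}^k\eta_j^2(\uvec_j^t b)^2\ge\eta_i^2(\uvec_i^t b)^2=\eta_i^2\norm{b}_2^4$.

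Finally I would combine the two to conclude: $\eta_i^2\norm{b}_2^4\le 2\epsilon\norm{b}_2^2$, and dividing by $\norm{b}_2^2$ (the case $b=0$ being trivial) gives $\norm{b}_2^2\le 2\epsilon/\eta_i^2$, hence $\norm{b}_2\le\sqrt{2}\,\sqrt{\epsilon}/\eta_i\le 2\sqrt{\epsilon}/\eta_i$, as claimed. I expect the main conceptual obstacle to be spotting the right object to estimate: the naive approach of bounding $\uvec_i^t B\uvec_i$ from below only constrains the top eigenvalue $\lambda^B_1$ and does not control the projection, whereas evaluating the $A$-quadratic form on the \emph{projected} vector $b$ and exploiting the self-duality identity $\uvec_i^t b=\norm{b}_2^2$ is exactly what produces the quartic-versus-quadratic comparison in $\norm{b}_2$ that forces $\norm{b}_2$ to be small. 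The two supporting estimates are then routine, relying only on the rank deficiency of $A$ together with Lemma~\ref{lem:closeeigen}.
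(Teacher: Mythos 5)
Your proposal is correct and is essentially the paper's own argument: the paper also evaluates the quadratic form of $A$ on the component of $\uvec_i$ orthogonal to $\vvec_1,\ldots,\vvec_{k-1}$ (it writes $\uvec_i=\sum_j\alpha_{i,j}\vvec_j+\beta\uvec'$ with $\uvec'$ a unit vector, which is just your $b$ normalized), upper-bounds that form by $O(\epsilon)$ using the perturbation together with the smallness of $B$'s bottom eigenvalues, and lower-bounds it by $\eta_i^2(\uvec_i^t\uvec')^2$; your unnormalized bookkeeping even yields the slightly sharper constant $\sqrt2$ in place of $2$, since you only need the one-sided bound $b^tBb\le\lambda^B_k\norm{b}_2^2$ rather than the paper's two-sided bound $|\lambda^B_j|\le3\epsilon$ for $j\ge k$.

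One technical caveat: you invoke Lemma~\ref{lem:closeeigen} directly on the pair $(A,B)$, but that lemma is stated only for symmetric matrices whose eigenvalues are all nonnegative, and $B=A+R$ need not be positive semidefinite (its proof also uses $\lambda^B_1=\norm B$, which can fail for indefinite $B$). The inequality you need, $\lambda^B_k\le\lambda^A_k+\epsilon$, is of course true --- it is Weyl's inequality --- and the paper patches exactly this point by applying Lemma~\ref{lem:closeeigen} to $A$ and the positive semidefinite matrix $B+\epsilon\II_d$, whose eigenvalues are $\lambda^B_i+\epsilon$ and which satisfies $\norm{A-(B+\epsilon\II_d)}\le2\epsilon$; this gives $\lambda^B_k\le\lambda^A_k+\epsilon$ as a special case. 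So either add that one-line shift or cite Weyl directly; with that repair your argument is complete.
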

\begin{proof}
Let $\lambda^B_i$ be the $i^{th}$ largest eigenvalue of $B$. Observe that $B+\epsilon \II_d$ is a positive semidefinite matrix as for any vector $\vvec$, $\vvec^t(A+R+\epsilon\II_d) \vvec \geq 0$.
Furthermore $\norm{A+R+\epsilon\II_d - A} \leq 2 \epsilon$. Since eigenvalues of $B+\epsilon \II_d$ is $\lambda^B+\epsilon$, by Lemma~\ref{lem:closeeigen}, 
for all $i \leq d$, $|\lambda^A_i - \lambda^B_i - \epsilon|\leq 2 \epsilon$. Therefore, $|\lambda^B_{i}|$ for $i \geq k$ is $\leq 3\epsilon$.

Let $\uvec_i = \sum^{k-1}_{j=1} \alpha_{i,j} \vvec_j + \sqrt{ 1-  \sum^{k-1}_{j=1} \alpha^2_{i,j}} \uvec'$,
for a vector $\uvec'$ orthogonal to $\vvec_1 , \vvec_2, \ldots \vvec_{k-1}$.
We compute $\uvec'^t A \uvec'$ in two ways. Since $A = B-R$,
\[
|\uvec'^t(B-R) \uvec'| \leq |\uvec'^tB \uvec'| + |\uvec'^tR \uvec' | \leq \norm{B \uvec'}_2 + \norm{R}.
\]
 Since $\uvec'$ is orthogonal to first $k$ eigenvectors,
we have 
$\norm{B \uvec'}_2 \leq 3 \epsilon$ and hence $|\uvec'^(B-R) \uvec'| \leq 4 \epsilon$.
\[
\uvec'^t A \uvec' \geq\eta^2_i \bigl(  1-  \sum^{k-1}_{j=1} \alpha^2_{i,j} \bigr).
\]
We have shown that the above quantity is $\leq 4 \epsilon$.
Therefore $\bigl(  1-  \sum^{k-1}_{j=1} \alpha^2_{i,j} \bigr)^{1/2}\leq 2 \sqrt{\epsilon}/\eta_i$.
\end{proof}

\section{Selection from a set of candidate distributions}
\label{app:Scheffe}


%

%
%
Given samples from an unknown distribution $f$, the objective is to output
a distribution from a known collection $\cF$ of distributions with $\ell_1$
distance close to $\lone{f}{\cF}$. Scheffe estimate~\cite{DevroyeL01}
outputs a distribution from $\cF$
whose $\ell_1$ distance from $f$ is at most $9.1\max(
\lone{f}{\cF}, \epsilon)$ The algorithm
requires $\cO(\epsilon^{-2}\log |\cF|)$ samples and the runs in time
$\cO(|\cF|^2T(n+|\cX|))$,
where $T$ is the time to compute the probability $f_j(x)$ of $x$, 
for any $f_j\in\cF$. 
An approach to reduce the time complexity, albeit using
exponential pre-processing, was
proposed in~\cite{MahalanabisS08}. We present the
modified Scheffe algorithm with near linear time
complexity and then prove Lemma~\ref{lem:scheffe}.

We first present the algorithm \textsc{Scheffe*} with running time
 $\tcO(|\cF|^2Tn)$.
\begin{center}
\fbox{\begin{minipage}{1.0\textwidth}
Algorithm \textsc{ Scheffe*} \newline
\textbf{Input: }a set $\cF$ of candidate distributions, $\epsilon:$ upper 
bound on $\lone{f}{\cF}$, 
$n$ independent 
samples $x_1, \ldots, x_{n}$ from $f$.\\
\newline
For each pair $(p,q)$ in $\cF$ do:
\begin{enumerate}
\item
$\emp_f=\frac 1{n} \sum^n_{i=1}\mathbb{I}\{p(x_i)>q.(x_i)\}$.
\item
Generate independent samples $y_1, \ldots, y_{n}$ and 
$z_1, \ldots, z_{n}$ from $p$ and $q$ respectively.
\item
$\emp_p= \frac1{n} \sum^{n}_{i=1}\mathbb{I}\{p(y_i)>q(y_i)\}$,
 $\emp_q= \frac1{n} \sum^{n}_{i=1} \mathbb{I}\{p(z_i)>q(z_i)\}$.
\item
If $|\emp_p-\emp_f|<|\emp_q-\emp_f|$ declare $p$ as winner, else $q$.
\end{enumerate}
Output the distribution with most wins, breaking ties arbitrarily.
\end{minipage}}
\end{center}

We make the following modification to the algorithm where
we reduce the size of potential distributions by half in 
every iteration. 

\begin{center}
\fbox{\begin{minipage}{1.0\textwidth}
Algorithm \textsc{modified Scheffe} \newline
\textbf{Input: } set $\cF$ of candidate distributions, 
$\epsilon:$ upper bound on $\min_{f_i \in \cF} \lone{f}{f_i}$,
$n$ independent samples $x_1,
\ldots, x_{n}$ from $f$.
\begin{enumerate}
\item
Let $\cG=\cF$, $\cC\leftarrow\emptyset$
\item
Repeat until $|\cG|>1$:
\begin{enumerate}
\item
Randomly form $|\cG|/2$ pairs of distributions in $\cG$ 
and run \textsc{Scheffe*} on \emph{each pair} using the 
$n$ samples.
\item
Replace $\cG$ with the  $|\cG|/2$ winners.
\item
Randomly select a set $\cA$ of  $\min\{|\cG|, |\cF|^{1/3}\}$ elements from $\cG$.
\item
Run \textsc{Scheffe*} on each pair in $\cA$ and add the distributions
with most wins to $\cC$.
\end{enumerate}
\item
Run \textsc{Scheffe*} on $\cC$ and output the winner
\end{enumerate}
\end{minipage}}
\end{center}

\begin{Remark}
 For the ease of proof, we assume that $\delta \geq \frac{10\log |\cF|}{|\cF|^{1/3}}$. If $\delta < \frac{10\log |\cF|}{|\cF|^{1/3}}$, we run the algorithm 
with error probability $1/3$ and repeat it $\cO(\log \frac{1}{\delta})$ times to choose a set of candidate mixtures $\cF_{\delta}$.
By Chernoff-bound with error probability $\leq \delta$, $\cF_{\delta}$ contains a mixture close to $f$.
Finally, we run \textsc{Scheffe*} on $\cF_\delta$ to obtain a mixture that is close to $f$.
\end{Remark}

\begin{proof}[Proof sketch of Lemma~\ref{lem:scheffe}]

For any set $\cA$ and a distribution $p$, given $n$
independent samples from $p$ the empirical probability
$\emp_{n}(\cA)$ has a distribution around $p(\cA)$ 
with standard deviation $\sim\frac1{\sqrt{n}}$.
Together with an observation in Scheffe estimation in~\cite{DevroyeL01}
one can show that if the number of samples $n = \cO \left(\frac{\log \frac{|\cF|}{\delta}}{\epsilon^2} \right)$,
then \textsc{Scheffe*} has a guarantee $10 \max(\epsilon, \lone{f}{\cF})$ with probability $\geq 1-\delta$. 

Since we run \textsc{Scheffe*} at most $|\cF|(2\log |\cF|+1)$ times,
choosing $\delta = \delta/(4|\cF|\log |\cF|+2|\cF|)$ results in the sample complexity of
\[\cO \left(\frac{\log \frac{|\cF|^2(4\log |\cF|+2)}{\delta}}{\epsilon^2} \right) = \cO\left(\frac{\log \frac{|\cF|}{\delta}}{\epsilon^2}\right),\]
and the total error probability of $\delta/2$ for all runs of 
\textsc{Scheffe*} during the algorithm.
The above value of $n$ dictates our sample complexity. We now consider the following two cases:
\begin{itemize}
\item
If at some stage $\ge \frac{\log  (2/\delta)}{|\cF|^{1/3}}$
 fraction of elements in $\cA$ have an $\ell_1$ distance
$\le 10\epsilon$ from $f$, then at that stage with probability 
$\ge1-\delta/2$ an element with distance $\le10\epsilon$ from $f$
is added to $\cA$.
Therefore a distribution with distance $\le 100\epsilon$
is selected to $\cC$.
\item
If at no stage this happens, then consider the element that 
is closest to $f$, \ie at $\ell_1$ distance at most $\epsilon$.
With probability $\ge \bigl(1-\frac{\log
  (2/\delta)}{|\cF|^{1/3}}\bigr)^{\log |\cF|}$ it always competes
with an element at a distance at least $10\epsilon$
from $f$ and it wins all these games with probability $\ge
1-\delta/2$.
\end{itemize}
Therefore with probability $\ge1-\delta/2$ there is an 
element in $\cC$ at $\ell_1$ distance at most $100\epsilon$.
Running \textsc{Scheffe*} on this set yields a distribution
at a distance $\le 100\cdot 10\epsilon = 1000\epsilon$. The error probability is $\leq \delta$ by the union bound.
\end{proof}
%

%
%
%

\section{Lower bound}
\label{app:lower}
We first show a lower bound for a single Gaussian distribution and generalize it to mixtures.
\subsection{Single Gaussian distribution}


The proof is an application of the following version 
of Fano's inequality~\cite{CoverT06, Yu97}. It states
that we cannot simultaneously estimate all distributions
in a class using $n$ samples if they satisfy certain
conditions.

\begin{Lemma}{(Fano's Inequality)} Let $f_1, \ldots, f_{r+1}$ be a collection of 
distributions such that for any
$i\ne j$, $\lone{f_i}{ f_j}\ge \alpha$, and $KL(f_i, f_j)\le\beta$. 
Let $f$ be an estimate of the underlying distribution using $n$
\iid \ samples from one of the $f_i$'s. 
Then,
\[
\sup_i \EE [\lone{f_i}{ f}]\ge\frac{\alpha}2\Big(1- \frac{n\beta +\log 2}{\log r}\Big).
\]
\end{Lemma}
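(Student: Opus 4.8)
The plan is to reduce estimation to multiple-hypothesis testing and then invoke the information-theoretic Fano inequality. Write $\hat f$ for the estimate (the object called $f$ in the statement), let the true index be uniform, $J\ed\mathrm{Unif}\{1,\ldots,r+1\}$, and let $X^n$ denote the $n$ iid samples, so that conditioned on $J=j$ they are drawn from $f_j$. From $\hat f=\hat f(X^n)$ I would form a test by nearest-neighbour decoding, $\hat J\ed\operatorname*{arg\,min}_m \lone{f_m}{\hat f}$, giving a Markov chain $J\to X^n\to\hat J$.

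First I would turn estimation error into testing error. Suppose $J=i$ but $\hat J=m\ne i$. Minimality gives $\lone{f_m}{\hat f}\le\lone{f_i}{\hat f}$, so by the separation hypothesis and the triangle inequality
\[
\alpha\le\lone{f_i}{f_m}\le\lone{f_i}{\hat f}+\lone{\hat f}{f_m}\le 2\lone{f_i}{\hat f},
\]
whence $\lone{f_i}{\hat f}\ge\alpha/2$. Thus $\{\hat J\ne i\}\subseteq\{\lone{f_i}{\hat f}\ge\alpha/2\}$ under $J=i$, and the one-sided (reverse) Markov inequality gives $\EE[\lone{f_i}{\hat f}\mid J=i]\ge\tfrac\alpha2\,\Pr(\hat J\ne i\mid J=i)$. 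Averaging over $i$ and bounding the maximum by the average,
\[
\sup_i\EE[\lone{f_i}{\hat f}]\ge\tfrac\alpha2\,P_e,\qquad P_e\ed\Pr(\hat J\ne J).
\]

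Next I would lower bound $P_e$. Standard Fano gives $H(J\mid\hat J)\le\log 2+P_e\log r$, while the data-processing bound (as $\hat J$ is a function of $X^n$) gives $H(J\mid\hat J)\ge H(J\mid X^n)=\log(r+1)-I(J;X^n)$. Combining and using $\log(r+1)\ge\log r$,
\[
P_e\ge\frac{\log(r+1)-I(J;X^n)-\log 2}{\log r}\ge 1-\frac{I(J;X^n)+\log 2}{\log r}.
\]
Finally, writing the mutual information as the average divergence to the marginal $\bar P$ of $X^n$ and using convexity of $KL$ in its second argument together with tensorization $KL(f_j^{\otimes n}\|f_i^{\otimes n})=n\,KL(f_j\|f_i)\le n\beta$,
\[
I(J;X^n)=\frac1{r+1}\sum_j KL\big(f_j^{\otimes n}\,\big\|\,\bar P\big)\le\frac1{(r+1)^2}\sum_{i,j}KL(f_j^{\otimes n}\|f_i^{\otimes n})\le n\beta.
\]
Substituting this into the previous display and then into the bound on $\sup_i\EE[\lone{f_i}{\hat f}]$ yields the claim.

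The triangle-inequality reduction and the $KL$ tensorization are routine; the only place demanding care is the Fano bookkeeping, namely the passage from the binary entropy term to $\log 2$, the $\log(r+1)\ge\log r$ relaxation, and the reverse-Markov step converting error probability to expected $\ell_1$ loss. None of these is a genuine obstacle, but a careless constant there is the easiest way to mis-state the final inequality.
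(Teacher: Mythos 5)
Your proof is correct, and it is precisely the standard argument behind this lemma: reduce estimation to testing via nearest-neighbour decoding and the triangle inequality (giving the factor $\alpha/2$ times the error probability), then lower-bound the error probability by Fano's inequality with the data-processing step $H(J\mid \hat J)\ge H(J\mid X^n)$, and finally bound $I(J;X^n)\le n\beta$ by convexity of KL and tensorization; the constant bookkeeping ($h(P_e)\le\log 2$, $\log(r+1)\ge\log r$) matches the stated bound exactly. Note that the paper itself does not prove this lemma at all --- it quotes it with citations to Cover--Thomas and Yu's Fano lemma --- so your derivation is essentially a self-contained reconstruction of the proof in those cited sources rather than an alternative to anything in the paper.
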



We consider  $d-$dimensional spherical Gaussians with identity covariance
matrix, with means along any coordinate restricted to $\pm\frac{c\epsilon}{\sqrt
  d}$. 
The KL divergence between two spherical Gaussians with identity covariance
matrix is the squared distance between their means. Therefore, 
any two distributions we consider have KL distance at most
\[
\beta = \sum_{i=1}^d \Big(2\frac{c\epsilon}{\sqrt  d}\Big)^2 = 4c^2\epsilon^2,
\]  
 We now consider a subset of these $2^d$
distributions to obtain a lower bound on $\alpha$. 
By the Gilbert-Varshamov bound, there exists a binary code with $\geq 2^{d/8}$
codewords of length $d$ 
and minimum distance $d/8$. 
Consider one such code. Now for each codeword, map $1\to \frac{c\epsilon}{\sqrt d}$ and  
$0\to -\frac{c\epsilon}{\sqrt d}$ to obtain a distribution in our
class. We consider this subset of $\ge 2^{d/8}$ distributions as our $f_i$'s. 

Consider any two $f_i$'s. Their means differ in at least $d/8$ coordinates. 
We show that the $\ell_1$ distance between them is $\ge c\epsilon/4$. 
Without loss of generality, let the means differ in the first $d/8$ 
coordinates, and furthermore, one of the distributions has 
means $c\epsilon/\sqrt{d}$ and the other has $-c\epsilon/\sqrt{d}$ 
in the first $d/8$ coordinates. The sum of the first $d/8$ 
coordinates is $N(c\epsilon\sqrt{d}/8, d/8)$ and $N(-c\epsilon\sqrt{d}/8, d/8)$.
The $\ell_1$ distance between these normal random variables is a 
lower bound on the $\ell_1$ distance of the original random variables. 
For small values of $c\epsilon$ the distance between the two Gaussians
is at least $\ge c\epsilon/4$. This serves as our $\alpha$.

Applying the Fano's Inequality, the $\ell_1$ error on the worst 
distribution is at least
\[
\frac{c\epsilon}{8}\Big(1- \frac{n4c^2\epsilon^2 +\log 2}{d/8}\Big),
\]
which for $c=16$ and $n<\frac d{2^{14}\epsilon^2}$ is at least $\epsilon$. 
In other words, the smallest $n$ to approximate all
spherical normal distributions to $\ell_1$ distance
at most $\epsilon$ is $>\frac d{2^{14}\epsilon^2}$.

\subsection{Mixtures of $k$ Gaussians}
We now provide a lower bound on the sample complexity of
learning mixtures of $k$ Gaussians in $d$ dimensions. We 
extend the construction for learning a single spherical 
Gaussian to mixtures of $k$ Gaussians and show a lower bound
of $\Omega (kd/\epsilon^2)$ samples. 
We will again use Fano's inequality over a class of $2^{kd/64}$
distributions as described next.

To prove the lower bound on the sample complexity of learning 
spherical Gaussians, we designed a class of $2^{d/8}$ distributions 
around the origin. 
Let $\cP\ed\{P_1, \ldots, P_{T}\}$, where
$T=2^{d/8}$, be this class. Recall that each $P_i$ is a spherical
Gaussian with unit variance. 
For a distribution $P$ over $\RR^d$ and $\classmeans\in\RR^d$, let $P+\classmeans$ be the
distribution $P$ shifted by $\classmeans$. 

We now choose $\classmeans_1, \ldots, \classmeans_k$'s \emph{extremely
well-separated}. The class of distributions we consider will be a
mixture of $k$ components, where the $j$th component is 
a distribution from $\cP$ shifted by $\classmeans_j$. Since the 
$\classmeans$'s will be well separated, we will use the results from
last section over each component. 

 For $i\in[T]$, and $j\in[k]$, $P_{ij}\ed P_i+\classmeans_j$.
Each $(i_1, \ldots, i_k)\in[T]^k$ corresponds to the mixture
\[
\frac1k(P_{i_11}+P_{i_22}+\ldots+P_{i_kk})
\]
of $k$ spherical Gaussians. We consider this class of $T^k = 2^{kd/8}$ 
distributions. 
By the Gilbert-Varshamov bound, for any $T\ge2$, there is 
a $T$-ary codes of length $k$, with minimum distance 
$\ge k/8$ and number of codewords $\ge 2^{k/8}$. 
This implies that among the $T^k=2^{dk/8}$ distributions, 
there are $2^{kd/64}$ distributions such that any two tuples $(i_1,
\ldots, i_k)$ and $(i_1',\ldots, i_k')$ corresponding to different
distributions differ in at least $k/8$ locations.

If we choose the $\classmeans$'s well separated, the 
components of any mixture distribution have very little overlap. 
For simplicity, we choose $\classmeans_j$'s satisfying 
\[
\min_{j_1\ne j_2}||\classmeans_{j_1}-\classmeans_{j_2}||_2\ge \left(\frac{2kd}{\epsilon}\right)^{100}.
\]

This implies that for $j\ne l$, $\norm{P_{ij}-P_{i'l}}_1< (\epsilon/2dk)^{10}$. 
Therefore, for two different mixture distributions,
\begin{align*}
&\norm{\frac1k(P_{i_11}+P_{i_22}+\ldots+P_{i_kk})-\frac1k(P_{i_1'1}+P_{i_2'2}+\ldots+P_{i_k'k})}_1\\
\stackrel{(a)}{\ge}&\frac 1k \sum_{j\in[k],  i_j, i_j'\in[T]}|P_{i_jj}-P_{i_j'j}|-k^2 (\epsilon/2dk)^{10}\\
\stackrel{(b)}{\ge}& \frac 18\frac{c\epsilon}4-k^2 (\epsilon/2dk)^{10}.
\end{align*}
where $(a)$ follows form the fact that two mixtures have overlap only
in the corresponding components, $(b)$ uses the fact that at least in
$k/8$ components $i_j\ne i_j'$, and then uses the lower bound from the
previous section. 

Therefore, the $\ell_1$ distance between any two 
of the $2^{kd/64}$ distributions is $\ge c_1\epsilon/32$
for $c_1$ slightly smaller than $c$. We take this as $\alpha$.

Now, to upper bound the KL divergence, we simply use the 
convexity, namely for any distributions $P_1\ldots P_k$
and $Q_1\ldots Q_k$, let $\bar{P}$ and $\bar{Q}$ be the 
mean distributions. Then, 
\[
D(\bar{P}||\bar{Q})\le \frac1k\sum_{i=1}^k D(P_i||Q_i).
\]
By the construction and from the previous section, for any $j$, 
\[
D(P_{i_jj}||P_{i_j'j})= D(P_i||P_{i'})\le 4c^2\epsilon^2.
\]
Therefore, we can take $\beta = 4c^2\epsilon^2$.

Therefore by the Fano's inequality, the $\ell_1$ error on the worst
distribution is at least 
\[
\frac{c_1\epsilon}{64}\Big(1- \frac{n4c^2\epsilon^2 +\log 2}{dk/64}\Big),
\]
which for $c_1=128, c=128.1$ and $n<\frac{dk}{8^8\epsilon^2}$ is at least $\epsilon$. 

\section{Proofs for $k$ spherical Gaussians}
We first state a simple  concentration result that helps us in other proofs.
\begin{Lemma}
\label{lem:distcconc}
Given $n$ samples from a set of Gaussian distributions, with probability $\geq 1-2\delta$,
for every pair of samples $\Xvec \sim N(\mupone,\sigma^2\II_d)$ and $\Yvec \sim N(\muptwo,\sigma^2 \II_d)$,
\begin{equation}
\label{eq:distcconc1}
  \norm{\Xvec - \Yvec}^2_2 \leq  2d\sigma^2  + 4\sigma^2\sqrt{d\log\frac{n^2}{\delta}}+ 
  \norm{\mupone-\muptwo}^2_2 + 4\sigma\norm{\mupone-\muptwo}_2\sqrt{\log\frac{n^2}{\delta}} + 4\sigma^2 \log\frac{n^2}{\delta}.
\end{equation}
and 
\begin{equation}
\label{eq:distcconc2}
  \norm{\Xvec - \Yvec}^2_2 \geq 2d\sigma^2  - 4\sigma^2\sqrt{d\log\frac{n^2}{\delta}}+ 
  \norm{\mupone-\muptwo}^2_2 -4\sigma\norm{\mupone-\muptwo}_2\sqrt{\log\frac{n^2}{\delta}}.
\end{equation}
\end{Lemma}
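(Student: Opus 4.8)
The plan is to reduce the statement to two standard tail bounds — one Gaussian, one chi-square — applied to a clean decomposition of $\norm{\Xvec-\Yvec}_2^2$, and then to close with a union bound over all sample pairs. First I would fix a pair of independent samples $\Xvec \sim N(\mupone,\sigma^2\II_d)$ and $\Yvec \sim N(\muptwo,\sigma^2\II_d)$ and note that their difference satisfies $\Zvec \ed \Xvec - \Yvec \sim N(\dvec, 2\sigma^2\II_d)$ with $\dvec = \mupone - \muptwo$. Writing $\Zvec = \dvec + \sqrt 2\,\sigma\,\mathbf{W}$ for $\mathbf{W}\sim N(\zerovec,\II_d)$, the squared norm expands as
\[
\norm{\Xvec-\Yvec}_2^2 = \norm{\dvec}_2^2 + 2\sqrt 2\,\sigma\,\langle\dvec,\mathbf{W}\rangle + 2\sigma^2\norm{\mathbf{W}}_2^2 .
\]
The first term is the deterministic quantity $\norm{\mupone-\muptwo}_2^2$ that appears in both inequalities, and the remaining two terms are the only sources of randomness, which I would control separately.

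For the cross term, $\langle\dvec,\mathbf{W}\rangle = \sum_i \Delta_i W_i$ is mean-zero Gaussian with variance $\norm{\dvec}_2^2$, so $2\sqrt 2\,\sigma\,\langle\dvec,\mathbf{W}\rangle$ is $N(0,8\sigma^2\norm{\dvec}_2^2)$. Applying the Gaussian tail bound of Lemma~\ref{lem:gaussbound} with $t = \sqrt{2\log(n^2/\delta)}$ gives, with probability $\ge 1-\delta/n^2$,
\[
\bigl|2\sqrt 2\,\sigma\,\langle\dvec,\mathbf{W}\rangle\bigr| \le 4\sigma\norm{\mupone-\muptwo}_2\sqrt{\log\tfrac{n^2}{\delta}},
\]
which accounts for the $\pm 4\sigma\norm{\mupone-\muptwo}_2\sqrt{\log(n^2/\delta)}$ contributions in both bounds. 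For the chi-square term, $\norm{\mathbf{W}}_2^2 = \sum_{i=1}^d W_i^2$ is a sum of $d$ squared standard Gaussians, so I would invoke Lemma~\ref{lem:tailchi} with dimension $d$, unit variance, and $t = \log(n^2/\delta)$. Multiplying its two one-sided conclusions by $2\sigma^2$ yields, each with probability $\ge 1-\delta/n^2$,
\[
2\sigma^2\norm{\mathbf{W}}_2^2 \le 2d\sigma^2 + 4\sigma^2\sqrt{d\log\tfrac{n^2}{\delta}} + 4\sigma^2\log\tfrac{n^2}{\delta}, \qquad 2\sigma^2\norm{\mathbf{W}}_2^2 \ge 2d\sigma^2 - 4\sigma^2\sqrt{d\log\tfrac{n^2}{\delta}},
\]
which are exactly the chi-square parts of \eqref{eq:distcconc1} and \eqref{eq:distcconc2}. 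Adding the deterministic, Gaussian, and chi-square pieces then reproduces both displayed bounds for the fixed pair.

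Finally I would remove the conditioning and take a union bound. Conditioning on which component each of the $n$ samples comes from makes distinct samples independent Gaussians, so the per-pair analysis applies verbatim to each of the at most $\binom n2 \le n^2/2$ pairs (the case $\dvec=\zerovec$, i.e.\ two samples from the same component, is covered automatically). Each pair contributes at most four failure events — the upper and lower deviations of the Gaussian and of the chi-square term — each of probability $\le \delta/n^2$, for a per-pair failure of $\le 4\delta/n^2$; summing over $\le n^2/2$ pairs gives total failure probability $\le 2\delta$, matching the claimed $1-2\delta$. I do not expect a genuine conceptual obstacle: the decomposition of $\norm{\Xvec-\Yvec}_2^2$ into a deterministic term plus an independent Gaussian and an independent chi-square term is the crux, and everything else is a direct application of the two cited tail bounds. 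The only points requiring care are choosing $t$ in each tail bound so the constants land precisely on $4\sigma^2\sqrt{d\log(n^2/\delta)}$, $4\sigma\norm{\mupone-\muptwo}_2\sqrt{\log(n^2/\delta)}$, etc., and the bookkeeping of failure events so the union bound closes at $2\delta$ rather than a larger multiple.
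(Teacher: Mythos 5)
Your proposal is correct and follows essentially the same route as the paper: the paper also writes $\norm{\Xvec-\Yvec}_2^2$ as the deterministic term $\norm{\mupone-\muptwo}_2^2$ plus a chi-square part and a Gaussian cross term (using the centered difference $\Zvec=\Xvec-\Yvec-(\mupone-\muptwo)\sim N(\zerovec,2\sigma^2\II_d)$ rather than your normalized $\mathbf{W}$, an immaterial rescaling), then applies Lemma~\ref{lem:tailchi} and Lemma~\ref{lem:gaussbound} with the same parameter choices and closes with a union bound over the $\binom{n}{2}$ pairs. The only cosmetic slip is calling the cross term and the chi-square term ``independent'' — they are both functions of $\mathbf{W}$ and are not independent — but nothing in your argument uses independence, only the union bound, so the proof stands as written.
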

\begin{proof}
We prove the lower bound, the proof for the upper bound is similar and omitted.
Since $\Xvec$ and $\Yvec$ are Gaussians, $\Xvec-\Yvec$ is distributed as $N(\mupone-\muptwo,2\sigma^2)$. Rewriting $ \norm{\Xvec - \Yvec}_2$
\[
  \norm{\Xvec - \Yvec}^2_2 =  \norm{\Xvec - \Yvec -(\mupone-\muptwo)}^2_2 + \norm{\mupone-\muptwo}^2_2 + 
  2(\mupone-\muptwo)\cdot(\Xvec - \Yvec -(\mupone-\muptwo)).
\]
Let $\Zvec = \Xvec - \Yvec -(\mupone-\muptwo)$, then $\Zvec \sim N(\zerovec,2\sigma^2\II_d)$.
Therefore by Lemma~\ref{lem:tailchi}, with probability $1 - \delta/n^{2}$,
\[
 \norm{\Zvec}^2_2 \geq  2 d\sigma^2 - 4\sigma^2\sqrt{d\log\frac{n^2}{\delta}}.
\]
Furthermore $(\mupone-\muptwo) \cdot \Zvec$ is sum of 
Gaussians and hence a Gaussian distribution. It has mean $0$ and variance $2\sigma^2\norm{\mupone-\muptwo}^2_2$.
Therefore, by Lemma~\ref{lem:gaussbound} with probability $ 1- \delta/n^2$, 
\[
 (\mupone-\muptwo)\cdot \Zvec \geq - 2\sigma\norm{\mupone-\muptwo}_2\sqrt{ \log\frac{n^2}{\delta}}.
\]
By the union bound with probability $1-2\delta/n^{2}$,
\begin{align*}
  \norm{\Xvec - \Yvec}^2_2 \geq 2d\sigma^2  - 4\sigma^2\sqrt{d\log\frac{n^2}{\delta}}+ 
  \norm{\mupone-\muptwo}^2_2 -4\sigma\norm{\mupone-\muptwo}_2\sqrt{ \log\frac{n^2}{\delta}}.
\end{align*}
There are ${n \choose 2}$ pairs and the lemma follows by the union bound.
\end{proof}

\subsection{Proof of Lemma~\ref{lem:sigconc}}
\label{app:sigconc}
We show that if Equations~\eqref{eq:distcconc1} and~\eqref{eq:distcconc2} are satisfied, then the lemma holds. The error probability is that
of Lemma~\ref{lem:distcconc} and is $\leq 2 \delta$.
Since the minimum is over $k+1$ indices, at least two samples are from the same component.
Applying Equations~\eqref{eq:distcconc1} and~\eqref{eq:distcconc2} for these two samples
 \[
 2d \wqsigma^2 \leq 2d\sigma^2 + 4 \sigma^2\sqrt{d\log\frac{n^2}{\delta}} + 4 \sigma^2\log\frac{n^2}{\delta}.
 \]
Similarly by Equations~\eqref{eq:distcconc1} and~\eqref{eq:distcconc2} for any two samples $\Xvec(a),\Xvec(b)$ in $[k+1]$,
\begin{align*}
  \norm{\Xvec(a) - \Xvec(b)}^2_2 &\geq 2d\sigma^2  - 4\sigma^2\sqrt{d\log\frac{n^2}{\delta}}+ 
  \norm{\mupi-\mupj}^2_2 -4\sigma\norm{\mupi-\mupj}_2\sqrt{ \log\frac{n^2}{\delta}}\\
  & \geq 2d\sigma^2  - 4\sigma^2\sqrt{d\log\frac{n^2}{\delta}} - 4\sigma^2 \log\frac{n^2}{\delta},
\end{align*}
where the last inequality follows from the fact that $\alpha^2 - 4\alpha \beta \geq -4\beta^2$.
The result follows from the assumption that $ d > 20\log n^2/\delta$.

\subsection{Proof of Lemma~\ref{lem:singlelinkage}}
\label{app:singlelinkage}
We show that if Equations~\eqref{eq:distcconc1} and~\eqref{eq:distcconc2} are satisfied, then the lemma holds. The error probability is that
of Lemma~\ref{lem:distcconc} and is $\leq 2 \delta$. Since Equations~\eqref{eq:distcconc1} and~\eqref{eq:distcconc2} are satisfied, by the proof of Lemma~\ref{lem:sigconc},  
$ |\wqsigma^2 -\sigma^2| \leq 2.5 \sigma^2 \sqrt{\frac{\log (n^2/\delta)}{d}}$.
 If two samples $X(a)$ and $X(b)$ are from the same component, by Lemma~\ref{lem:distcconc},
 \[
  \norm{\Xvec(a) - \Xvec(b)}^2_2 \leq 2d\sigma^2 + 4 \sigma^2\sqrt{d\log\frac{n^2}{\delta}} + 4 \sigma^2 \log \frac{n^2}{\delta} \leq  2d\sigma^2 + 5 \sigma^2\sqrt{d\log\frac{n^2}{\delta}}.
 \]
 By Lemma~\ref{lem:sigconc}, the above quantity is less than $2d\wqsigma^2 +  23\wqsigma^2\sqrt{d\log\frac{n^2}{\delta}}$.
Hence all the samples from the same component are in a single cluster.

Suppose there are two samples from different components in a cluster, then
by Equations~\eqref{eq:distcconc1} and~\eqref{eq:distcconc2},
\begin{align*}
 2d\wqsigma^2 + 23 \wqsigma^2\sqrt{d\log\frac{n^2}{\delta}} \geq 2d\sigma^2  - 4\sigma^2\sqrt{d\log\frac{n^2}{\delta}}+ \norm{\mupi-\mupj}^2_2 -4\sigma\norm{\mupi-\mupj}_2\sqrt{ \log\frac{n^2}{\delta}}.
\end{align*}
Relating $\wqsigma^2$ and $\sigma^2$ using Lemma~\ref{lem:sigconc}, 
\begin{align*}
2d\sigma^2 + 40 \sigma^2 \sqrt{d\log\frac{n^2}{\delta}} \geq
 2d\sigma^2  - 4\sigma^2\sqrt{d\log\frac{n^2}{\delta}}+ \norm{\mupi-\mupj}^2_2 -4\sigma\norm{\mupi-\mupj}_2\sqrt{ \log\frac{n^2}{\delta}}.
\end{align*}
Hence $\norm{\mupi-\mupj}_2  \leq 10\sigma \bigl(d \log\frac{n^2}{\delta}\bigr)^{1/4}$. 
There are at most $k$ components; therefore, any two components within the same cluster
are at a distance  $ \leq 10k\sigma \bigl(d \log\frac{n^2}{\delta}\bigr)^{1/4}$.

\subsection{Proof of Lemma~\ref{lem:reccluster}}
\label{app:reccluster}
The proof is involved and we show it in steps. We first show few concentration bounds which we use later to argue that the samples are clusterable when 
the sample covariance matrix has a large eigenvalue. 
Let $\wwi$ be the fraction of samples from component $i$. Let $\wmupi$ be the empirical average of samples from $\pveci$.
Let $\wmuave (C)$ be the empirical average of samples in cluster $C$. If $C$ is the entire set of samples we use $\wmuave$ instead of $\wmuave(C)$.
 We first show a concentration inequality that we use in rest of the calculations.
 
 \begin{Lemma}
  \label{lem:aux0}
   Given $n$ samples from a $k$-component Gaussian mixture with probability $\geq 1- 2\delta$, for every component $i$
 \begin{equation}
 \label{eq:aux0}
     \norm{\wmupi-\mupi}^2_2 \leq \biggl( d+ 3\sqrt{d \log \frac{2k}{\delta}} \biggr) \frac{\sigma^2}{n\wwi}  \text{ and } 
 |\wwi - \wi| \leq \sqrt{\frac{2\wi \log \frac{2k}{\delta}}{n}} + \frac{2}{3}\frac{\log \frac{2k}{\delta}}{n}.
\end{equation}
 \end{Lemma}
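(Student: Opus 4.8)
The plan is to establish the two inequalities of~\eqref{eq:aux0} separately and then combine them with a union bound. Since there are $k$ components, I will allocate failure probability $\delta/(2k)$ per component to the first (mean) bound and $\delta/k$ per component to the second (weight) bound, so that the total failure probability is at most $\tfrac12\delta+\delta=\tfrac32\delta\le2\delta$, matching the claim.

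For the weight estimate I would observe that the number of samples drawn from component $i$ is $n\wwi=\sum_{j=1}^n\indfnc\{\text{sample }j\text{ is from }i\}$, a sum of $n$ i.i.d.\ Bernoulli($\wi$) random variables, since each of the $n$ samples independently lands in component $i$ with probability $\wi$. Applying the Chernoff bound of Lemma~\ref{lem:eqchern} with error $\delta/k$ and bounding $(1-\wi)\le1$ inside the square root yields $|\wwi-\wi|\le\sqrt{2\wi\log(2k/\delta)/n}+\tfrac23\log(2k/\delta)/n$ with probability $\ge1-\delta/k$; a union bound over the $k$ components gives the second inequality with probability $\ge1-\delta$.

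For the mean estimate I would condition on the assignment of samples to components. Given that $m_i\ed n\wwi$ samples fall in component $i$, these are i.i.d.\ $N(\mupi,\sigma^2\II_d)$, so their average satisfies $\wmupi-\mupi\sim N(0,\tfrac{\sigma^2}{m_i}\II_d)$, and hence $\norm{\wmupi-\mupi}_2^2$ equals $\tfrac{\sigma^2}{m_i}$ times a $\chi^2_d$ variable. Applying the first tail bound of Lemma~\ref{lem:tailchi} with $t=\log(2k/\delta)$ gives $\norm{\wmupi-\mupi}_2^2\le(d+2\sqrt{dt}+2t)\tfrac{\sigma^2}{m_i}$ with probability $\ge1-\delta/(2k)$. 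I would then simplify $2\sqrt{dt}+2t\le3\sqrt{dt}$ using $2t\le\sqrt{dt}$, i.e.\ $4t\le d$, which holds because the standing assumption $d>20\log(n^2/\delta)$ together with $2k\le n^2$ forces $4\log(2k/\delta)\le d$. A union bound over the $k$ components then yields the first inequality with probability $\ge1-\delta/2$.

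The only delicate point is the conditioning step for the mean bound: the count $m_i=n\wwi$ is itself random and correlated with the samples, so one cannot directly treat $\wmupi$ as an average of a fixed number of Gaussians. The clean resolution is that, conditioned on the component assignment, the samples from component $i$ are exactly i.i.d.\ Gaussian, and the right-hand side of the $\chi^2$ tail bound scales as $1/m_i=1/(n\wwi)$ --- precisely the factor already appearing in~\eqref{eq:aux0}. Thus the conditional estimate is uniform over all assignments with $m_i\ge1$ and passes verbatim to the unconditional statement. (On the event that some $m_i=0$ the quantity $\wmupi$ is vacuous, and this is excluded once the weight bound guarantees a positive sample count for the components of interest.)
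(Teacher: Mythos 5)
Your proof is correct and follows essentially the same route as the paper: the $\chi^2$ tail bound of Lemma~\ref{lem:tailchi} for the means, the Chernoff bound of Lemma~\ref{lem:eqchern} for the weights, and a union bound over the $k$ components. The only difference is that you make rigorous the conditioning on the (random) sample-to-component assignment, a point the paper's one-line claim that $\wmupi-\mupi$ is distributed $N(0,\sigma^2\II_d/(n\wwi))$ silently elides.
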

\begin{proof}
 Since $\wmupi-\mupi$ is distributed $N(0,\sigma^2\II_d/n\wwi)$, by
Lemma~\ref{lem:tailchi} with probability $\geq 1- \delta/k$,
 \begin{align*}
\norm{\wmupi-\mupi}^2_2 
&\leq   \biggl(d + 2\sqrt{d \log \frac{2k}{\delta}} + 2\log \frac{2k}{\delta} \biggr)\frac{ \sigma^2}{n\wwi} \leq  \biggl( d+ 3\sqrt{d \log \frac{2k}{\delta}} \biggr) \frac{\sigma^2}{n\wwi}.
\end{align*}
The second inequality uses the fact that $d \geq 20 \log n^2/\delta$.
For bounding the weights, observe that by Lemma~\ref{lem:eqchern} with probability $\geq 1-\delta/k$,
\[
 |\wwi - \wi| \leq \sqrt{\frac{2\wi \log 2k/\delta}{n}} + \frac{2}{3}\frac{ \log 2k/\delta}{n}.
\]
By the union bound the error probability is $\leq 2k \delta/2k =\delta$.
\end{proof}
A simple application of triangle inequality yields the following lemma.
\begin{Lemma}
\label{lem:aux1}
 Given $n$ samples from a $k$-component Gaussian mixture if Equation~\eqref{eq:aux0} holds, then
 \[
  \norm{\sum^k_{i=1} \wwi(\wmupi-\mupi)(\wmupi-\mupi)^t} \leq \biggl( d+ 3\sqrt{d \log \frac{2k}{\delta}} \biggr) \frac{k\sigma^2}{n}.
 \]
\end{Lemma}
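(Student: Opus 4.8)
The plan is to exploit the subadditivity of the spectral norm together with the fact that each summand is a scaled rank-one matrix, so no probabilistic work is needed once Equation~\eqref{eq:aux0} is assumed. First I would apply the triangle inequality for $\norm{\cdot}$ to write
\[
\norm{\sum^k_{i=1} \wwi(\wmupi-\mupi)(\wmupi-\mupi)^t} \leq \sum^k_{i=1} \wwi \norm{(\wmupi-\mupi)(\wmupi-\mupi)^t}.
\]
Then I would use the elementary fact that for any vector $v$ the matrix $vv^t$ is positive semidefinite of rank one, with unique nonzero eigenvalue $\norm{v}^2_2$; hence its spectral norm equals $\norm{v}^2_2$. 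Applying this with $v=\wmupi-\mupi$ collapses each term to $\wwi\norm{\wmupi-\mupi}^2_2$.

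The second step is to substitute the hypothesis, Equation~\eqref{eq:aux0}, which bounds $\norm{\wmupi-\mupi}^2_2$ by $\bigl(d+3\sqrt{d\log\frac{2k}{\delta}}\bigr)\frac{\sigma^2}{n\wwi}$. The factor $\wwi$ sitting outside the norm cancels exactly the $1/\wwi$ inside this bound, so every term contributes the identical quantity $\bigl(d+3\sqrt{d\log\frac{2k}{\delta}}\bigr)\frac{\sigma^2}{n}$, independent of $i$. Summing the $k$ identical terms yields the claimed bound $\bigl(d+3\sqrt{d\log\frac{2k}{\delta}}\bigr)\frac{k\sigma^2}{n}$.

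There is no genuine obstacle here; the statement is deterministic once the event of Equation~\eqref{eq:aux0} is granted. The only point worth emphasizing is the cancellation of $\wwi$, which is precisely why the weighted sum telescopes into a clean bound that scales linearly in $k$ rather than picking up dependence on the individual (and possibly tiny) weights $\wwi$. This robustness against small weights is exactly what the later clustering analysis needs.
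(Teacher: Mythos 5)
Your proof is correct and is exactly the argument the paper intends: the paper dispenses with this lemma by noting that ``a simple application of triangle inequality'' suffices, namely the same combination of subadditivity of the spectral norm, the identity $\norm{vv^t}=\norm{v}^2_2$ for rank-one matrices, and the cancellation of $\wwi$ against the $1/\wwi$ in Equation~\eqref{eq:aux0}. Nothing is missing.
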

\begin{Lemma}
\label{lem:aux2}
Given $n$ samples from a $k$-component Gaussian mixture, if Equation~\eqref{eq:aux0} holds and the maximum distance between two components is $\leq 10k\sigma \bigl(d \log \frac{n^2}{\delta}\bigr)^{1/4}$, then
$
 \norm{\wmuave-\muave)}_2 \leq c \sigma\sqrt{\frac{d k \log \frac{n^2}{\delta}}{n}},
$
for a constant $c$.
\end{Lemma}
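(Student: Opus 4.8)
The plan is to split $\wmuave-\muave$ into a contribution coming from the per-component mean estimation error and a contribution coming from the weight estimation error, and to bound each separately. First I would note that the overall empirical mean decomposes \emph{exactly} as $\wmuave=\sum_{i=1}^k \wwi\,\wmupi$, since grouping the samples by their originating component turns the global average into the $\wwi$-weighted average of the per-component averages; meanwhile $\muave=\sum_{i=1}^k \wi\mupi$ by definition. Adding and subtracting $\sum_i\wwi\mupi$ gives
\[
\wmuave-\muave=\sum_{i=1}^k \wwi(\wmupi-\mupi)+\sum_{i=1}^k (\wwi-\wi)\mupi,
\]
so it suffices to bound the two sums by $c\sigma\sqrt{dk\log(n^2/\delta)/n}$.

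For the first sum I would apply the triangle inequality together with the mean-concentration bound in Equation~\eqref{eq:aux0}, which gives $\norm{\wmupi-\mupi}_2\le\sigma\sqrt{(d+3\sqrt{d\log(2k/\delta)})/(n\wwi)}$. Hence $\wwi\norm{\wmupi-\mupi}_2=\sqrt{\wwi}\cdot\sigma\sqrt{(d+3\sqrt{d\log(2k/\delta)})/n}$, and summing with Cauchy--Schwarz (using $\sum_i\sqrt{\wwi}\le\sqrt{k}$, since $\sum_i\wwi=1$) bounds the first sum by $\sigma\sqrt{k(d+3\sqrt{d\log(2k/\delta)})/n}$. Because $d\ge 20\log(n^2/\delta)$, the quantity under the root is at most $4d$, so this term is at most $2\sigma\sqrt{kd/n}$, well within the target.

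The second sum is where the real difficulty lies, since the means $\mupi$ are not individually bounded. Here I would use the identity $\sum_i(\wwi-\wi)=0$ to re-center, writing $\sum_i(\wwi-\wi)\mupi=\sum_i(\wwi-\wi)(\mupi-\muave)$, and then invoke the bounded-diameter hypothesis: because $\muave$ is a convex combination of the $\mupj$, we have $\norm{\mupi-\muave}_2\le\max_j\norm{\mupi-\mupj}_2\le 10k\sigma(d\log(n^2/\delta))^{1/4}$. The weight deviations are controlled by the Chernoff bound in Equation~\eqref{eq:aux0}, and summing $|\wwi-\wi|\le\sqrt{2\wi\log(2k/\delta)/n}+\tfrac{2}{3}\log(2k/\delta)/n$ over $i$, again with Cauchy--Schwarz, gives $\sum_i|\wwi-\wi|\le\sqrt{2k\log(2k/\delta)/n}+\tfrac{2k}{3}\log(2k/\delta)/n$. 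Multiplying the diameter bound by this yields a bound of order $\sigma k^{3/2}d^{1/4}\log^{3/4}(n^2/\delta)/\sqrt{n}$ for the second sum.

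Finally I would combine the two estimates. The first term scales like $\sqrt d$ while the second scales only like $d^{1/4}$, so the second is of strictly lower order in $d$; as long as $d$ is large relative to $k$ and the log factor (concretely $d\ge c'k^4\log(n^2/\delta)$, which holds throughout the high-dimensional regime of interest, where $k$ is treated as a constant), the second term is also at most $c\sigma\sqrt{dk\log(n^2/\delta)/n}$, and the lemma follows. The main obstacle is exactly this second term: one must resist bounding $\norm{\mupi}_2$ directly and instead exploit both the zero-sum structure of the weight errors and the bounded-diameter hypothesis, which together keep the mean-dependent contribution lower order in $d$.
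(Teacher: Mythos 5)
Your proof is correct and follows essentially the same route as the paper's: the identical decomposition $\wmuave-\muave=\sum^k_{i=1}\wwi(\wmupi-\mupi)+\sum^k_{i=1}(\wwi-\wi)(\mupi-\muave)$ (using $\sum_i(\wwi-\wi)=0$ to re-center), the same per-term bounds from Equation~\eqref{eq:aux0} plus the diameter hypothesis, and the same final comparison of the $\sqrt{d}$ versus $d^{1/4}$ scalings. If anything, you are slightly more careful at the last step: the paper closes under the condition $n\ge d\ge\max\bigl(k^4,20\log(n^2/\delta),1000\bigr)$, whereas your explicit condition $d\ge c'k^4\log(n^2/\delta)$ is what the comparison of the two terms actually requires.
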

\begin{proof}
Observe that
\begin{equation}
\label{eq6}
 \wmuave-\muave  = \sum^k_{i=1} \wwi \wmupi - \wi \mupi = \sum^k_{i=1} \wwi (\wmupi - \mupi) +(\wwi - \wi) \mupi 
 = \sum^k_{i=1} \wwi (\wmupi - \mupi) +(\wwi - \wi) (\mupi -\muave).
\end{equation}
Hence by Equation~\eqref{eq:aux0} and the fact that the maximum distance between two components is $\leq 10k\sigma \bigl(d \log \frac{n^2}{\delta}\bigr)^{1/4}$,
\begin{align*}
 \norm{\wmuave-\muave}_2
 \leq \sum^k_{i=1} \wwi  \sqrt{\biggl( d+ 3\sqrt{d \log \frac{2k}{\delta}} \biggr)} \frac{\sigma}{\sqrt{n\wwi}} + \biggl(  \sqrt{\frac{2\wi \log 2k/\delta}{n}} + \frac{2}{3}\frac{ \log 2k/\delta}{n}\biggr) 
 10k \biggl(d \log \frac{n^2}{\delta} \biggr)^{1/4}\sigma.
\end{align*}
For $n \geq d \geq \max(k^4,20\log n^2/\delta,1000)$, we get the above term is $\leq c\sqrt{\frac{kd\log n^2/\delta}{n}}\sigma$, for some constant $c$.
\end{proof}
We now make a simple observation on covariance matrices.
\begin{Lemma}
\label{lem:aux9}
Given $n$ samples from a $k$-component mixture, 
\begin{align*}	
& \norm{ \sum^k_{i=1} \wwi(\wmupi-\wmuave)(\wmupi-\wmuave)^t - \sum^k_{i=1} \wwi(\mupi-\muave)(\mupi-\muave)^t} \\
& \leq 2\norm{\wmuave-\muave}^2_2+  \sum^k_{i=1}  2\wwi \norm{\wmupi-\mupi}^2_2 + 
 2 \left(  \sqrt{k}\norm{\wmuave-\muave}_2 + \sum^k_{i=1}\sqrt{\wwi}\norm{\wmupi-\mupi}_2\right)  \max_j\sqrt{\wwj}\norm{\mupj-\muave}_2.
\end{align*}
\end{Lemma}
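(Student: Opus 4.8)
The plan is to expand both matrices around a common pair of centers and to control their difference term by term using the triangle inequality for the spectral norm. I would introduce the shorthand $\delta_i \ed \wmupi - \mupi$ and $\eta \ed \wmuave - \muave$, so that by adding and subtracting, $\wmupi - \wmuave = (\mupi - \muave) + (\delta_i - \eta)$ for every $i$. This identity is purely algebraic and needs no assumption on the exact meaning of $\muave$, so the whole lemma is a deterministic inequality valid for any realization of the samples.

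Writing $m_i \ed \mupi - \muave$ and $e_i \ed \delta_i - \eta$, the first matrix becomes $\sum^k_{i=1}\wwi (m_i + e_i)(m_i + e_i)^t$, and subtracting $\sum^k_{i=1}\wwi\, m_i m_i^t$ cancels the leading term, leaving $\sum^k_{i=1}\wwi\bigl(m_i e_i^t + e_i m_i^t + e_i e_i^t\bigr)$. Applying the triangle inequality together with the rank-one identities $\norm{m_i e_i^t} = \norm{m_i}_2 \norm{e_i}_2$ and $\norm{e_i e_i^t} = \norm{e_i}_2^2$ then yields the bound $\sum^k_{i=1}\wwi\bigl(2\norm{m_i}_2 \norm{e_i}_2 + \norm{e_i}_2^2\bigr)$, which I would split into a quadratic piece and a cross piece.

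For the quadratic piece, I would use $\norm{e_i}_2^2 \leq 2\norm{\delta_i}_2^2 + 2\norm{\eta}_2^2$ and $\sum^k_{i=1}\wwi = 1$ to recover exactly $2\norm{\wmuave - \muave}_2^2 + \sum^k_{i=1} 2\wwi\norm{\wmupi - \mupi}_2^2$, the first two terms of the claim. For the cross piece I would factor $\wwi \norm{m_i}_2 = \sqrt{\wwi}\cdot \sqrt{\wwi}\norm{m_i}_2 \leq \sqrt{\wwi}\,\max_j \sqrt{\wwj}\norm{\mupj - \muave}_2$ and bound $\norm{e_i}_2 \leq \norm{\delta_i}_2 + \norm{\eta}_2$; summing, and using Cauchy--Schwarz in the form $\sum^k_{i=1}\sqrt{\wwi} \leq \sqrt{k}$ to absorb the $\norm{\eta}_2$ contributions, produces the last term $2\bigl(\sqrt{k}\norm{\eta}_2 + \sum^k_{i=1}\sqrt{\wwi}\norm{\delta_i}_2\bigr)\max_j \sqrt{\wwj}\norm{\mupj - \muave}_2$.

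The argument is elementary and invokes no concentration bound; the only real care is bookkeeping. The step I expect to demand the most attention is the factoring in the cross term: the weight $\wwi$ must be split as $\sqrt{\wwi}\cdot\sqrt{\wwi}$ so that one factor combines with $\norm{m_i}_2$ into the quantity dominated by $\max_j$, while the other factor is left to pair (via Cauchy--Schwarz) with the $\sqrt{k}$ in front of $\norm{\eta}_2$ and to form the sum $\sum^k_{i=1}\sqrt{\wwi}\norm{\delta_i}_2$. Getting this split right is what makes the three target terms appear with exactly the stated constants.
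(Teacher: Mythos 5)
Your proof is correct and follows essentially the same route as the paper: the paper likewise expands $\uvec\uvec^t - \vvec\vvec^t = (\uvec-\vvec)(\uvec-\vvec)^t + \vvec(\uvec-\vvec)^t + (\uvec-\vvec)\vvec^t$ with $\uvec = \wmupi-\wmuave$, $\vvec = \mupi-\muave$, applies the triangle inequality, bounds $\norm{\uvec-\vvec}_2^2 \leq 2\norm{\wmupi-\mupi}_2^2 + 2\norm{\wmuave-\muave}_2^2$, and uses the same $\sqrt{\wwi}\cdot\sqrt{\wwi}$ split with $\sum_i \sqrt{\wwi} \leq \sqrt{k}$ to assemble the cross term. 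Your bookkeeping of the constants matches the paper exactly, so nothing is missing.
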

\begin{proof}
 Observe that for any two vectors $\uvec$ and $\vvec$,
 \begin{align*}
  \uvec \uvec^t - \vvec \vvec^t = \uvec(\uvec^t - \vvec^t) + (\uvec-\vvec)\vvec^t = (\uvec-\vvec)(\uvec-\vvec)^t + \vvec(\uvec-\vvec)^t+ (\uvec-\vvec)\vvec^t.
 \end{align*}
 Hence by triangle inequality,
 \begin{align*}
  \norm{\uvec \uvec^t - \vvec \vvec^t}  \leq \norm{\uvec-\vvec}^2_2 + 2 \norm{\vvec}_2\norm{\uvec-\vvec}_2.
 \end{align*}
 Applying the above observation to $\uvec = \wmupi-\wmuave$ and $\vvec = \mupi-\muave$, we get
 \begin{align*}
&   \sum^k_{i=1} \wwi \norm{ (\wmupi-\wmuave)(\wmupi-\wmuave)^t - (\mupi-\muave)(\mupi-\muave)^t} \\
&    \leq  \sum^k_{i=1} \left( \wwi \norm{\wmupi-\wmuave-\mupi-\muave}^2_2 +  2 \sqrt{\wwi}\norm{\mupi-\muave}_2\sqrt{\wwi}\norm{\wmupi-\wmuave-\mupi-\muave}_2 \right)\\
&  \leq  \sum^k_{i=1} \left(2 \wwi\norm{\wmupi-\mupi}^2_2 + 2 \wwi\norm{\wmuave-\muave}^2_2 +
    2 \max_j\sqrt{\wwj}\norm{\mupj-\muave}_2\left(\sqrt{\wwi}\norm{\wmupi-\mupi}_2 + \sqrt{\wwi}\norm{\wmuave-\muave}_2 \right) \right)\\ 
& \leq 2\norm{\wmuave-\muave}^2_2+  \sum^k_{i=1}  2\wwi \norm{\wmupi-\mupi}^2_2 + 
 2 \left(  \sqrt{k}\norm{\wmuave-\muave}_2 + \sum^k_{i=1}\sqrt{\wwi}\norm{\wmupi-\mupi}_2 \right) \max_j\sqrt{\wwj}\norm{\mupj-\muave}_2.
 \end{align*}
 The lemma follows from triangle inequality.
\end{proof}
The following lemma immediately follows from Lemmas~\ref{lem:aux2} and~\ref{lem:aux9}.
\begin{Lemma}
\label{lem:aux20}
Given $n$ samples from a $k$-component Gaussian mixture, if Equation~\eqref{eq:aux0} and the maximum distance between two components is $\leq 10k\sigma \bigl(d \log \frac{n^2}{\delta}\bigr)^{1/4}$, then
\begin{align*}	
\norm{ \sum^k_{i=1} \wwi(\wmupi-\wmuave)(\wmupi-\wmuave)^t - \sum^k_{i=1} \wwi(\mupi-\muave)(\mupi-\muave)^t}   
\leq\frac{ c \sigma^2 dk^2\log \frac{n^2}{\delta}}{n} + c \sigma\sqrt{\frac{dk^2\log \frac{n^2}{\delta}}{n}} \max_{i}\sqrt{ \wwi }\norm{\mupi-\muave}_2,
\end{align*}
for a constant $c$.
\end{Lemma}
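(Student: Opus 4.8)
The plan is to take the deterministic inequality of Lemma~\ref{lem:aux9} and substitute into it the two quantitative estimates already in hand: Lemma~\ref{lem:aux2} for $\norm{\wmuave-\muave}_2$ and Equation~\eqref{eq:aux0} for $\norm{\wmupi-\mupi}_2$. Lemma~\ref{lem:aux9} writes the quantity of interest as three groups of terms, so I would bound each group separately and show that the first two collapse into the additive term $\frac{c\sigma^2 dk^2\log\frac{n^2}{\delta}}{n}$ while the third produces the term proportional to $\max_i\sqrt{\wwi}\norm{\mupi-\muave}_2$.

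For the first term $2\norm{\wmuave-\muave}_2^2$, Lemma~\ref{lem:aux2} gives $\norm{\wmuave-\muave}_2^2 \le c^2\sigma^2\frac{dk\log\frac{n^2}{\delta}}{n}$, which already has the shape of the first additive term (with room to spare in the power of $k$). For the second term $\sum_i 2\wwi\norm{\wmupi-\mupi}_2^2$, the crucial point is that substituting Equation~\eqref{eq:aux0} cancels the $1/\wwi$ factor: $\wwi\norm{\wmupi-\mupi}_2^2 \le (d+3\sqrt{d\log\frac{2k}{\delta}})\frac{\sigma^2}{n}$, so summing over the $k$ components gives $\cO(\frac{\sigma^2 dk}{n})$ after absorbing the lower-order $\sqrt{d\log\frac{2k}{\delta}}$ into $d$ via the standing assumption $d\ge 20\log\frac{n^2}{\delta}$. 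This too fits inside the first additive term.

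For the third term I would first bound its scalar coefficient $\sqrt{k}\norm{\wmuave-\muave}_2 + \sum_i\sqrt{\wwi}\norm{\wmupi-\mupi}_2$. The same cancellation makes each summand $\sqrt{\wwi}\norm{\wmupi-\mupi}_2 \le \sigma\sqrt{(d+3\sqrt{d\log\frac{2k}{\delta}})/n}$, independent of $\wwi$, so the sum is $\cO(\sigma\sqrt{dk^2/n})$, while Lemma~\ref{lem:aux2} bounds $\sqrt{k}\norm{\wmuave-\muave}_2$ by $c\sigma\sqrt{dk^2\log\frac{n^2}{\delta}/n}$. Both pieces are dominated by $c'\sigma\sqrt{dk^2\log\frac{n^2}{\delta}/n}$, and multiplying by $\max_j\sqrt{\wwj}\norm{\mupj-\muave}_2$ yields exactly the second term in the statement. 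Collecting the three bounds and merging the constants into a single $c$ proves the claim.

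I do not expect a genuine obstacle, since the result is flagged as an immediate corollary. The only points requiring care are the two weight cancellations, $\wwi\cdot\frac{\sigma^2}{n\wwi}$ and $\sqrt{\wwi}\cdot\frac{\sigma}{\sqrt{n\wwi}}$, which remove the apparent $1/\wwi$ singularity and are precisely what make the bound uniform over arbitrarily small mixing weights, together with the routine absorption of $\sqrt{d\log\frac{2k}{\delta}}$ into $d$. The final constant $c$ is then just the maximum of the constants produced along the way.
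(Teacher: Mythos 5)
Your proposal is correct and is precisely the paper's argument: the paper states that Lemma~\ref{lem:aux20} ``immediately follows from Lemmas~\ref{lem:aux2} and~\ref{lem:aux9},'' and your write-up simply fills in that substitution, including the two weight cancellations $\wwi\cdot\frac{\sigma^2}{n\wwi}$ and $\sqrt{\wwi}\cdot\frac{\sigma}{\sqrt{n\wwi}}$ and the absorption of $\sqrt{d\log\frac{2k}{\delta}}$ into $\cO(d)$, which are exactly the routine steps the paper leaves implicit.
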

\begin{Lemma}
\label{lem:exp}
For a set of samples $\Xvec(1), \ldots \Xvec(n)$ from a $k$-component mixture,
\begin{align*}
\sum^n_{i=1} \frac{(\Xvec(i)-\wmuave)(\Xvec(i)-\wmuave)^t}{n}  =  \sum^k_{i=1} \wwi(\wmupi-\wmuave)(\wmupi-\wmuave)^t -  \wwi(\wmupi-\mupi)(\wmupi-\mupi)^t + 
  \sum_{j | \Xvec(j) \sim p_i}  \frac{(\Xvec(j)-\mupi)(\Xvec(j)-\mupi)^t}{n}.
\end{align*}
where $\wwi$ and $\wmupi$ are the empirical weights and averages of components $i$ and $\wmuave = \frac{1}{n}\sum^n_{i=1} \Xvec_i$.
\end{Lemma}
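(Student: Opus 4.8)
The plan is to verify this purely algebraic (deterministic, sample-by-sample) identity by two successive applications of the parallel-axis / variance-decomposition trick, exploiting the fact that centering a group of samples at their own empirical mean annihilates the matrix cross terms. Let $S_i = \{j : \Xvec(j)\sim p_i\}$ be the set of indices of samples drawn from component $i$, so that $|S_i| = n\wwi$ and, because $\wmupi$ is by definition the empirical average of $\{\Xvec(j) : j\in S_i\}$, we have the single key cancellation
\[
\sum_{j\in S_i}(\Xvec(j)-\wmupi)=\zerovec ,
\]
which I would state once at the outset and invoke in both steps below.

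First I would partition the sum defining the left-hand side according to the generating component and, within each $S_i$, write $\Xvec(j)-\wmuave=(\Xvec(j)-\wmupi)+(\wmupi-\wmuave)$. Expanding the outer product gives four terms; the two cross terms $(\Xvec(j)-\wmupi)(\wmupi-\wmuave)^t$ and its transpose each carry the factor $\sum_{j\in S_i}(\Xvec(j)-\wmupi)=\zerovec$ and therefore vanish, while the constant term contributes $|S_i|(\wmupi-\wmuave)(\wmupi-\wmuave)^t$. Summing over $i$ yields
\[
\frac1n\sum_{j=1}^n (\Xvec(j)-\wmuave)(\Xvec(j)-\wmuave)^t=\sum_{i=1}^k\wwi(\wmupi-\wmuave)(\wmupi-\wmuave)^t+\frac1n\sum_{i=1}^k\sum_{j\in S_i}(\Xvec(j)-\wmupi)(\Xvec(j)-\wmupi)^t .
\]

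Second I would apply the identical decomposition once more inside each $S_i$, this time recentering from the empirical mean $\wmupi$ to the true mean $\mupi$ through $\Xvec(j)-\mupi=(\Xvec(j)-\wmupi)+(\wmupi-\mupi)$. The same cancellation kills the cross terms, so that
\[
\frac1n\sum_{j\in S_i}(\Xvec(j)-\wmupi)(\Xvec(j)-\wmupi)^t=\frac1n\sum_{j\in S_i}(\Xvec(j)-\mupi)(\Xvec(j)-\mupi)^t-\wwi(\wmupi-\mupi)(\wmupi-\mupi)^t .
\]
Substituting this into the previous display and summing over $i$ reproduces exactly the three terms on the right-hand side of the lemma. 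There is no genuine obstacle here, since everything is an exact finite-sample identity; the only point requiring care is the matrix bookkeeping of the cross terms and confirming that the single cancellation $\sum_{j\in S_i}(\Xvec(j)-\wmupi)=\zerovec$ disposes of them in both recenterings.
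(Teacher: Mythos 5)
Your proof is correct and follows essentially the same route as the paper: partition the samples by generating component and apply the parallel-axis (bias--variance) decomposition twice, first recentering from $\wmuave$ to $\wmupi$ and then from $\wmupi$ to $\mupi$. The only cosmetic difference is that you expand the outer products and exhibit the cross-term cancellation $\sum_{j\in S_i}(\Xvec(j)-\wmupi)=\zerovec$ explicitly, whereas the paper invokes the same identity as a stated observation.
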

\begin{proof}
The given expression can be rewritten as
\[
\frac{1}{n}\sum^n_{i=1} (\Xvec(i)-\wmuave)(\Xvec(i)-\wmuave)^t  =
 \sum^k_{i=1} \wwi \sum_{j | \Xvec(j) \sim p_i}  \frac{1}{n\wwi} \Xvec(j)-\wmuave)(\Xvec(j)-\wmuave)^t.
\]
First observe that for any set of points $x_i$ and their average $\hat{x}$ and any value $a$,
\[
\sum_{i} (x_i-a)^2= \sum_{i} (x_i-\hat{x})^2 + (\hat{x}-a)^2.
\]
Hence for samples from a component $i$,
\begin{align*}
&\sum_{j | \Xvec(j) \sim p_i}  \frac{1}{n\wwi} (\Xvec(j)-\wmuave)(\Xvec(j)-\wmuave)^t \\
&= \sum_{j | \Xvec(j) \sim p_i}  \frac{1}{n\wwi} (\wmupi-\wmuave)(\wmupi-\wmuave)^t  + \sum_{j | \Xvec(j) \sim p_i}  \frac{1}{n\wwi}  (\Xvec(j)-\wmupi)(\Xvec(j)-\wmupi)^t \\
& = (\wmupi-\wmuave)(\wmupi-\wmuave)^t+ \sum_{j | \Xvec(j) \sim p_i}  \frac{1}{n\wwi}(\Xvec(j)-\wmupi)(\Xvec(j)-\wmupi)^t \\
& = (\wmupi-\wmuave)(\wmupi-\wmuave)^t+ \sum_{j | \Xvec(j) \sim p_i}  \frac{1}{n\wwi}(\Xvec(j)-\mupi)(\Xvec(j)-\mupi)^t
 - (\wmupi-\mupi)(\wmupi-\mupi)^t.
\end{align*}
Summing over all components results in the lemma.
\end{proof}
We now bound the error in estimating the eigenvalue of the covariance matrix.
\begin{Lemma}
\label{lem:covconc}
 Given $\Xvec(1), \ldots \Xvec(n)$, $n$ samples from a $k$-component Gaussian mixture, if Equations~\eqref{eq:distcconc1},~\eqref{eq:distcconc2}, and~\eqref{eq:aux0} hold, then 
with probability $\geq 1-2\delta$, 
\begin{equation}
\begin{aligned}
\label{eq:covconc}
&  \norm{\frac{1}{n}\sum^n_{i=1} (\Xvec(i)-\wmuave)(\Xvec(i)-\wmuave)^t -\wsigma^2 \II_d - \sum^k_{i=1} \wwi(\mupi-\muave)(\mupi-\muave)^t} 
  \\ 
  &\leq 
c(n) \ed c \sigma^2  \sqrt{\frac{d \log \frac{n^2}{\delta}}{n}}+ 
c \sigma^2 \frac{dk^2\log \frac{n^2}{\delta}}{n} + c \sigma\sqrt{\frac{dk^2\log \frac{n^2}{\delta}}{n}} \max_{i}\sqrt{ \wwi }\norm{\mupi-\muave}_2,
\end{aligned}
\end{equation}
for a constant $c$.
\end{Lemma}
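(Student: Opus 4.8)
The plan is to expand the sample second-moment matrix using the exact identity of Lemma~\ref{lem:exp} and then bound each resulting piece separately by the triangle inequality. Writing $\pveci$ for the (latent) component that generates a given sample $\Xvec(j)$ (so $\mupi$ is the true mean of that component), Lemma~\ref{lem:exp} lets me write the matrix whose norm must be controlled as the sum of four terms,
\[
\underbrace{\sum_{i=1}^k \wwi(\wmupi-\wmuave)(\wmupi-\wmuave)^t - \sum_{i=1}^k \wwi(\mupi-\muave)(\mupi-\muave)^t}_{A},
\]
\[
\underbrace{-\sum_{i=1}^k \wwi(\wmupi-\mupi)(\wmupi-\mupi)^t}_{B},\quad \underbrace{\tfrac1n\sum_{j=1}^n (\Xvec(j)-\mupi)(\Xvec(j)-\mupi)^t - \sigma^2\II_d}_{C},\quad \underbrace{(\sigma^2-\wsigma^2)\II_d}_{D}.
\]
A direct check confirms $A+B+C+D$ equals the matrix in the lemma, so it suffices to bound $\norm A,\norm B,\norm C,\norm D$ and sum them, absorbing the smaller contributions into the three terms defining $c(n)$.

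The main work — and the only source of the stated failure probability $2\delta$ — is the term $C$. The key observation is that, although the $\Xvec(j)$ are drawn from a mixture, the \emph{centered} vectors $\Xvec(j)-\mupi$, each centered at the true mean of its own component, are i.i.d.\ $N(0,\sigma^2\II_d)$, precisely because every component shares the same variance $\sigma^2$. Hence I can apply the random-matrix concentration bound of Lemma~\ref{lem:matconc} directly to these $n$ vectors with $\Sigma=\sigma^2\II_d$ and $\norm\Sigma=\sigma^2$; choosing the accuracy parameter so that the relative deviation is $\Theta\bigl(\sqrt{d\log(n^2/\delta)/n}\bigr)$ and the failure probability is at most $2\delta$ gives $\norm C \le c\sigma^2\sqrt{d\log(n^2/\delta)/n}$, which is exactly the first term of $c(n)$.

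The remaining three terms are deterministic once the assumed events hold. Term $A$ is bounded by Lemma~\ref{lem:aux20} (which uses Equation~\eqref{eq:aux0} together with the within-cluster separation $\le 10k\sigma(d\log\tfrac{n^2}{\delta})^{1/4}$ guaranteed after the coarse single-linkage step of Lemma~\ref{lem:singlelinkage}), and its bound is precisely the second and third terms of $c(n)$. Term $B$ is bounded by Lemma~\ref{lem:aux1} by $(d+3\sqrt{d\log\tfrac{2k}{\delta}})\tfrac{k\sigma^2}{n}=O(dk\sigma^2/n)$, which is dominated by the second term of $c(n)$. Finally, by Lemma~\ref{lem:sigconc} (valid under Equations~\eqref{eq:distcconc1}--\eqref{eq:distcconc2}) we have $\norm D=|\sigma^2-\wsigma^2|\le 2.5\sigma^2\sqrt{\log(n^2/\delta)/d}$; in the near-linear-sample regime used throughout (where $n=\tcO_{k,\epsilon}(d)$, hence $n$ is at most $d^2$ up to polylogarithmic factors) this is dominated by the $C$-term $c\sigma^2\sqrt{d\log(n^2/\delta)/n}$ and is absorbed into it. Summing the four bounds via the triangle inequality and enlarging the constant $c$ yields the claim, the $2\delta$ probability coming entirely from the matrix-concentration event for $C$. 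The one point requiring care is term $C$: one must verify that the conditional-i.i.d.\ structure of the centered samples is genuinely unaffected by centering at the \emph{true} means (rather than the estimated ones), and that the deviation/confidence trade-off in Lemma~\ref{lem:matconc} indeed delivers the $\sqrt{d\log(n^2/\delta)/n}$ rate at confidence $1-2\delta$.
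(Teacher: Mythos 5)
Your proposal is correct and follows essentially the same route as the paper's own proof: the identical decomposition via Lemma~\ref{lem:exp}, with Lemma~\ref{lem:aux20} for the between-component term, Lemma~\ref{lem:aux1} for the $(\wmupi-\mupi)$ term, Lemma~\ref{lem:matconc} for the centered-sample covariance, and Lemma~\ref{lem:sigconc} for the variance estimate (the paper merely keeps your $C$ and $D$ lumped together before splitting them with the same two lemmas). Your explicit remark that absorbing the $|\sigma^2-\wsigma^2|$ term into $c\sigma^2\sqrt{d\log(n^2/\delta)/n}$ requires $n\lesssim d^2$ is a point the paper glosses over (it invokes only $d\ge 20\log(n^2/\delta)$), so flagging it is a slight improvement rather than a deviation.
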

\begin{proof}
Since Equations~\eqref{eq:distcconc1},~\eqref{eq:distcconc2}, and~\eqref{eq:aux0} hold, conditions in Lemmas~\ref{lem:aux2} and~\ref{lem:aux20} are satisfied. By Lemma~\ref{lem:aux20},
\begin{align*}	
\norm{ \sum^k_{i=1} \wwi(\wmupi-\wmuave)(\wmupi-\wmuave)^t - \sum^k_{i=1} \wwi(\mupi-\muave)(\mupi-\muave)^t}   
= \cO \left(  \sigma^2 \frac{dk^2\log \frac{n^2}{\delta}}{n} + \sigma\sqrt{\frac{dk^2\log \frac{n^2}{\delta}}{n}} \max_{i}\sqrt{ \wwi }\norm{\mupi-\muave}_2 \right).
\end{align*}
Hence it remains to show,
\[
  \norm{\frac{1}{n}\sum^n_{i=1} (\Xvec(i)-\wmuave)(\Xvec(i)-\wmuave)^t - \sum^k_{i=1} \wwi(\wmupi-\wmuave)(\wmupi-\wmuave)^t} = \cO \left( \sqrt{\frac{kd \log \frac{5k^2}{\delta}}{n}} \sigma^2 \right) .
 \]
By Lemma~\ref{lem:exp}, the covariance matrix can be rewritten as
\begin{equation}
\label{eq5}
 \sum^k_{i=1} \wwi(\wmupi-\wmuave)(\wmupi-\wmuave)^t -  \wwi(\wmupi-\mupi)(\wmupi-\mupi)^t + 
  \sum^k_{i=1}  \sum_{j | \Xvec(j) \sim p_i}  \frac{1}{n}(\Xvec(j)-\mupi)(\Xvec(j)-\mupi)^t  -\wqsigma^2 \II_d.
  \end{equation}
We now bound the norms of second and third terms in the above equation.
Consider the third term, $  \sum^k_{i=1}  \sum_{j | \Xvec(j) \sim p_i}  \frac{1}{n}(\Xvec(j)-\mupi)(\Xvec(j)-\mupi)^t$.
Conditioned on the fact that $\Xvec(j) \sim p_i$, $\Xvec(j)-\mupi$ is distributed $N(0,\sigma^2\II_d)$,
therefore by Lemma~\ref{lem:matconc} and Lemma~\ref{lem:sigconc} ,with probability $\geq 1-2\delta$,
\[
\norm{\sum^k_{i=1}  \sum_{j | \Xvec(j) \sim p_i}  \frac{1}{n}(\Xvec(j)-\mupi)(\Xvec(j)-\mupi)^t - \wqsigma^2 \II_d}
\leq  c'  \sqrt{\frac{d \log \frac{2d}{\delta}}{n}}\sigma^2 + 2.5\sigma^2 \sqrt{\frac{\log\frac{ n^2}{\delta}}{d}}.
\]
The second term in Equation~\eqref{eq5} is bounded by Lemma~\ref{lem:aux1}.
Hence together with the fact that $d \geq 20 \log n^2/\delta$ we get that with probability $\geq 1-2 \delta$, the second and third terms are bounded by 
$
\cO \left( \sigma^2\sqrt{\frac{d k}{n} \log \frac{n^2}{\delta}} \right).
$
\end{proof}

\begin{Lemma}
\label{lem:bound}
Let $\uvec$ be the largest eigenvector of the sample covariance matrix and $n \geq 
c \cdot dk^2 \log \frac{n^2}{\delta}$.
If $\max_i \sqrt{\wwi}  \norm{\mupi-\muave}_2 = \alpha \sigma$ and Equation~\eqref{eq:covconc} holds,
then there exists $i$ such that $|\uvec \cdot (\mupi - \muave)| \geq \sigma(\alpha - 1 - 1/\alpha)/\sqrt{k}$.
\end{Lemma}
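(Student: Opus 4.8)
The plan is to compare the sample covariance with the idealized matrix $M \ed \sum_{i=1}^k \wwi(\mupi-\muave)(\mupi-\muave)^t$ appearing in Equation~\eqref{eq:covconc} and to show that its top eigenvector $\uvec$ must correlate well with at least one centered mean. First I would observe that subtracting the scalar matrix $\wsigma^2\II_d$ does not change eigenvectors, so $\uvec$ is simultaneously the top eigenvector of the raw sample covariance and of $S(C) \ed \frac1n\sum_{i=1}^n(\Xvec(i)-\wmuave)(\Xvec(i)-\wmuave)^t - \wsigma^2\II_d$; Equation~\eqref{eq:covconc} then reads exactly $\norm{S(C)-M}\le c(n)$. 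Next I would lower bound the top eigenvalue of $M$: taking the unit test vector $\vvec^\ast = (\mupj-\muave)/\norm{\mupj-\muave}_2$ for the index $j$ attaining $\max_i\sqrt{\wwi}\norm{\mupi-\muave}_2 = \alpha\sigma$ gives, by the Rayleigh quotient, $\lambda_1(M)\ge (\vvec^\ast)^t M\vvec^\ast \ge \wwj\norm{\mupj-\muave}_2^2 = \alpha^2\sigma^2$.

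The second step transfers this bound to $\uvec$. Since $\uvec$ maximizes the Rayleigh quotient of $S(C)$, I have $\uvec^t S(C)\uvec = \lambda_1(S(C)) \ge (\vvec^\ast)^t S(C)\vvec^\ast \ge \lambda_1(M) - c(n)$, where the last step uses $\norm{S(C)-M}\le c(n)$; applying the same perturbation once more gives $\uvec^t M\uvec \ge \uvec^t S(C)\uvec - c(n) \ge \alpha^2\sigma^2 - 2c(n)$. Working with Rayleigh quotients directly avoids the positive-semidefiniteness hypothesis of Lemma~\ref{lem:closeeigen}, which is convenient because $S(C)$ need not be PSD. Writing $\gamma_i \ed \uvec\cdot(\mupi-\muave)$, this says $\sum_{i=1}^k \wwi\gamma_i^2 \ge \alpha^2\sigma^2 - 2c(n)$. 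As this is a sum of at most $k$ nonnegative terms with each $\wwi\le 1$, a pigeonhole step yields $\max_i\gamma_i^2 \ge (\alpha^2\sigma^2 - 2c(n))/k$, which is exactly where the $\sqrt{k}$ in the statement enters.

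It remains to absorb the error term, and this is the step I expect to be the main obstacle, since it is pure bookkeeping of the three pieces of $c(n)$. Using $\max_i\sqrt{\wwi}\norm{\mupi-\muave}_2 = \alpha\sigma$, the last summand of $c(n)$ is $c\,\sigma\sqrt{dk^2\log(n^2/\delta)/n}\cdot\alpha\sigma$, while the first two are $\cO(\sigma^2)$; the hypothesis $n\ge c\,dk^2\log(n^2/\delta)$ makes each factor $\sqrt{dk^2\log(n^2/\delta)/n}$ and $dk^2\log(n^2/\delta)/n$ as small as desired by enlarging the constant, so that $2c(n)\le \sigma^2\bigl(2\alpha+1-2/\alpha-1/\alpha^2\bigr)$. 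Then $\alpha^2\sigma^2 - 2c(n) \ge \sigma^2\bigl(\alpha^2-2\alpha-1+2/\alpha+1/\alpha^2\bigr) = \sigma^2(\alpha-1-1/\alpha)^2$, and taking square roots in the pigeonhole bound gives $\max_i|\gamma_i|\ge \sigma(\alpha-1-1/\alpha)/\sqrt{k}$, as claimed. Note finally that when $\alpha\le(1+\sqrt5)/2$ the right-hand side is nonpositive and the statement is vacuous, so only the regime of large $\alpha$, which is exactly the regime in which the recursive-clustering step is invoked, actually requires the argument above.
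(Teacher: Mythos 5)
Your proposal is correct and takes essentially the same route as the paper's proof: the same test vector gives $\lambda_1\bigl(\sum_i \wwi(\mupi-\muave)(\mupi-\muave)^t\bigr)\ge\alpha^2\sigma^2$, Equation~\eqref{eq:covconc} is applied twice to transfer this bound to the top eigenvector $\uvec$, and the error $c(n)$ is absorbed using the assumed lower bound on $n$. The only difference is cosmetic: writing $M=\sum_i\wwi(\mupi-\muave)(\mupi-\muave)^t$, you bound the quadratic form $\uvec^t M\uvec\le k\max_i\bigl(\uvec\cdot(\mupi-\muave)\bigr)^2$ by pigeonhole, whereas the paper bounds $\norm{M\uvec}_2\le\sqrt{k}\,\alpha\sigma\max_j|\uvec\cdot(\mupj-\muave)|$ via the triangle and Cauchy--Schwarz inequalities; both variants absorb $c(n)$ in the same way and reach the same conclusion.
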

\begin{proof}
 Observe that $\norm{\sum_j w_j \vvec_j\vvec^t_j  }  \geq \norm{\sum_j w_j \vvec_j \vvec^t_j \frac{\vvec_i}{\norm{\vvec_i}}}_2 \geq w_i \norm{\vvec_i}^2_2$. Therefore
\begin{align*}
  \norm{ \sum^k_{i=1} \wwi(\mupi-\muave)(\mupi-\muave)^t} 
   \geq   \norm{ \sum^k_{j=1} \wwj(\mupj-\muave)(\mupj-\muave)^t (\mupi-\muave)/\norm{\mupi-\muave}}_2
  \geq \alpha^2 \sigma^2.
\end{align*}
Hence by Lemma~\ref{lem:covconc} and the triangle inequality, the largest eigenvalue of the sample-covariance matrix
is $\geq \alpha^2 \sigma^2 -  c(n)$.
Similarly by applying Lemma~\ref{lem:covconc} again we get,$
   \norm{ \sum^k_{i=1} \wwi(\mupi-\muave)(\mupi-\muave)^t \uvec}_2  \geq \alpha^2 \sigma^2 -   2c(n).
$
By triangle inequality and Cauchy-Schwartz inequality,
\begin{align*}
    \norm{ \sum^k_{i=1} \wwi(\mupi-\muave)(\mupi-\muave)^t \uvec}_2 &\leq   \sum^k_{i=1}\norm{  \wwi(\mupi-\muave)(\mupi-\muave)^t \uvec}_2 \\
    & \leq  \sum^k_{i=1}\wwi \norm{  (\mupi-\muave)}_2 \max_j |(\mupj-\muave)\cdot \uvec| \\
    & \leq \sqrt{\sum^k_{i=1}\wwi \norm{  (\mupi-\muave)}^2_2}  \max_j|(\mupj-\muave)\cdot \uvec| \\
    & \leq \sqrt{k} \alpha\sigma \max_j |(\mupj-\muave)\cdot \uvec|.
\end{align*}
Hence $\sqrt{k} \alpha\sigma \max_i |(\mupi-\muave)\cdot \uvec| \geq \alpha^2 \sigma^2 -  2c(n)$. The lemma follows by substituting the bound on $n$ in $c(n)$.
\end{proof}
We now make a simple observation on Gaussian mixtures.
\begin{Fact}
\label{fac:cluster}
The samples from a subset of components $A$ of the Gaussian mixture
 are distributed according to a Gaussian mixture of components $A$ with weights
 being $\wi' = \wi/(\sum_{j \in A} \wj)$.
\end{Fact}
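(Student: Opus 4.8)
The plan is to unwind the definition of a mixture through its latent-variable generative description and then invoke the definition of conditional probability. Recall that a sample from $\fvec = \sum_{i=1}^k \wi \pveci$ may be drawn in two stages: first a latent index $Z$ is chosen with $\Pr(Z=i)=\wi$, and then the observation $\Xvec$ is drawn from $\pveci$ conditioned on $\{Z=i\}$. The phrase ``samples from a subset of components $A$'' refers precisely to the law of $\Xvec$ conditioned on the event $\{Z\in A\}$, so the entire statement reduces to computing this one conditional law.

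First I would compute $\Pr(Z\in A)=\sum_{j\in A}\wj$ by additivity of the prior over the latent index. Then, for each $i\in A$, the posterior weight is $\Pr(Z=i\mid Z\in A)=\wi/\sum_{j\in A}\wj$ directly from the definition of conditional probability, and this is exactly the claimed renormalized weight $\wi'$. Next I would apply the law of total probability over $Z$: conditioned on $\{Z\in A\}$, the density of $\Xvec$ at a point $\xvec$ is $\sum_{i\in A}\Pr(Z=i\mid Z\in A)\,\pveci(\xvec)=\sum_{i\in A}\wi'\,\pveci(\xvec)$, which is precisely the density of the mixture of the components in $A$ with weights $\wi'$. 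Since the $\wi'$ are nonnegative and sum to one, this is a genuine mixture, completing the argument.

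I do not expect any real obstacle here: the fact is an immediate consequence of Bayes' rule applied to the two-stage sampling model, and nothing about the components being Gaussian (or even product) is used. The only point requiring care is the interpretation of ``samples from a subset of components,'' namely that it means conditioning on the latent index lying in $A$; this is the natural reading, and it is exactly how the subset-restricted sample sets arise as the clusters produced by the clustering steps of \textsc{Learn k-sphere}.
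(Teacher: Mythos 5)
Your proof is correct, and it is the standard argument: the paper states this Fact without any proof, treating it as a simple observation, and your latent-variable/Bayes'-rule derivation is precisely the implicit justification (conditioning on the latent index lying in $A$ renormalizes the weights and leaves the component densities untouched). Your closing caveat is also well placed: the Fact itself is purely generative, and the separate issue of whether the algorithm's data-dependent clusters really coincide with events of the form $\{Z \in A\}$ is handled elsewhere in the paper, by showing that no mis-clustering occurs with high probability.
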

We now prove Lemma~\ref{lem:reccluster}.
\begin{proof}[Proof of Lemma~\ref{lem:reccluster}]
Observe that we run the recursive clustering at most $n$ times.
At every step, the underlying distribution within a cluster is a Gaussian mixture.
Let Equations~\eqref{eq:distcconc1},~\eqref{eq:distcconc2} hold with probability $1-2\delta$.
Let Equations~\eqref{eq:aux0}~\eqref{eq:covconc} all hold with probability $\geq 1-\delta'$, where $\delta' = \delta/2n$ at each of $n$ steps.
By the union bound 
the total error is $\leq 2\delta + \delta'\cdot 2n \leq 3\delta$.
Since Equations~\eqref{eq:distcconc1},~\eqref{eq:distcconc2} holds, the conditions of Lemmas~\ref{lem:sigconc} and~\ref{lem:singlelinkage} hold.
Furthermore it can be shown that discarding at most $n\epsilon/4k$ samples at each step does not affect
the calculations.

We first show that if $\sqrt{\wi}\norm{\mupi-\muave(C)}_2 \geq  25  \sqrt{k^{3}\log (n^3/\delta)}\sigma$, then the algorithm 
gets into the loop. Let $\wi'$ be the weight of the component within the cluster and $n' \geq n\epsilon/5k$ be the number of samples in the cluster.
Let $\alpha = 25  \sqrt{k^{3}\log (n^3/\delta)}$. 
By Fact~\ref{fac:cluster}, the components
in cluster $C$ have weight $\wi' \geq \wi$.
Hence $\sqrt{\wi'}\norm{\mupi-\muave(C)}_2 \geq \alpha \sigma$.
Since $\sqrt{\wi'}\norm{\mupi-\muave(C)}_2 \geq \alpha \sigma$, and by Lemma~\ref{lem:singlelinkage} 
$\norm{\mupi-\muave(C)} \leq 10k \sigma (d\log n^2/\delta)^{1/4}$,
we have $\wi' \geq \alpha^2 /(100k^2 \sqrt{d\log n^2/\delta})$. 
Hence by lemma~\ref{lem:aux0}, $\wi' \geq \wi/2$
and $\sqrt{\wwi'}\norm{\mupi-\muave(C)}_2 \geq \alpha\sigma/\sqrt{2}$.
Hence by Lemma~\ref{lem:covconc} and triangle inequality the largest eigenvalue of $S(C)$ is
\[
\geq \alpha^2 \sigma^2/2 -   c(n') \\
\geq \alpha^2 \sigma^2/4 \geq \alpha^2 \wsigma^2/8 \geq 12 \wsigma^2 k^3\log n^2 /\delta' =12 \wsigma^2 k^3\log n^3 /\delta .
\]
Therefore the algorithm gets into the loop.

If $n' \geq n\epsilon/8k^2 \geq c \cdot dk^2 \log\frac{n^3}{\delta}$, then
 by Lemma~\ref{lem:bound},  there exists a component $i$ such that
$|\uvec \cdot (\mupi - \muave(C))| \geq \sigma(\alpha/\sqrt{2}- 1 - \sqrt{2}/\alpha)/\sqrt{k}$, where $\uvec$ is the top eigenvector of the first $n\epsilon/4k^2$ samples.

Observe that $\sum_{i \in C} \wi \uvec \cdot (\mupi - \muave(C)) = 0$ and 
$\max_i|\uvec \cdot (\mupi - \muave(C))| \geq \sigma(\alpha/\sqrt{2}- 1 - \sqrt{2}/\alpha)/\sqrt{k}$.
Let $\mupi$ be sorted according to their values of $\uvec \cdot (\mupi - \muave(C))$,
then 
\[
\max_i |\uvec \cdot (\mupi -\mupione)| \geq \sigma\frac{\alpha/\sqrt{2} - 1 - \sqrt{2}/\alpha}{k^{3/2}} \geq 12 \sigma \sqrt{\log \frac{n^3}{\delta}} \geq  9 \wsigma \sqrt{\log \frac{n^3}{\delta}},       
\]
where the last inequality follows from Lemma~\ref{lem:sigconc} and the fact that $d \geq 20 \log n^2/\delta$.
For a sample from component $\pveci$, similar to the proof of Lemma~\ref{lem:singlelinkage},
by Lemma~\ref{lem:gaussbound}, with probability  $\geq 1- \delta/n^2k$,
\[
\norm{u\cdot (\Xvec(i) - \mupi)} \leq \sigma\sqrt{2 \log (n^2k/\delta)}_2
\leq 2\wqsigma  \sqrt{\log (n^2k/\delta)},
\]
where the second inequality follows from Lemma~\ref{lem:sigconc}.
Since there are two components that are far apart by $\geq 9 \wsigma \sqrt{\log \frac{n^2}{\delta}}\wsigma$
and the maximum distance between a sample and its mean is  $\leq 2 \wqsigma \sqrt{\log (n^2k/\delta)}$
and the algorithm divides into at-least two non-empty 
clusters such that no two samples from the same distribution are clustered into two clusters.

For the second part observe that by the above concentration on $\uvec$, no two samples from the same component
are clustered differently irrespective of the mean separation. Note that we are using the fact that each sample
 is clustered at most $2k$ times to get the bound on
the error probability. The total error probability by the union bound is $\leq 4\delta$.
\end{proof}
%
\subsection{Proof of Lemma~\ref{lem:spancluster}}
\label{app:spancluster}
We show that if the conclusions in Lemmas~\ref{lem:reccluster} and~\ref{lem:aux0} holds, then the lemma is satisfied. We also assume that 
the conclusions in Lemma~\ref{lem:covconc} holds for all the clusters with error probability $\delta' = \delta/k$.
By the union bound the total error probability is $\leq 7 \delta$.

By Lemma~\ref{lem:reccluster} all the components within each cluster satisfy $\sqrt{\wi}\norm{\mupi - \muave(C)}_2 \leq 25  \sigma \sqrt{k^3\log (n^3/\delta)}$.
Let $n \geq c \cdot dk^{9} \epsilon^{-4}\log^2 d/\delta$. 
For notational convenience let $S(C) = \frac{1}{|C|} \sum^{|C|}_{i=1} (\Xvec(i) - \muave(C))(\Xvec(i) - \muave(C))^t -\wsigma^2 \II_d  $.
Therefore by Lemma~\ref{lem:covconc} for large enough $c$, 
\[
\norm{ S(C)- \frac{n}{|C|}\sum_{i \in C} \wwi(\mupi-\muave(C))(\mupi- \muave(C))^t} \leq \frac{\epsilon^{2}\sigma^2}{1000k^2} \frac{n}{|C|}.
\]
Let $\vvec_1,\vvec_2, \ldots \vvec_{k-1}$ be the top eigenvectors of $\frac{1}{|C|}\sum_{i \in C} \wi(\mupi-\muave(C))(\mupi- \muave(C))^t$.
Let  $\eta_i = \sqrt{\wwi'}\norm{\mupi-\muave(C)}_2 = \sqrt{\wwi}\sqrt{\frac{n}{|C|}}\norm{\mupi-\muave(C)}_2$.
Let $\dvec_i = \frac{\mupi-\muave(C))}{\norm{(\mupi-\muave(C))}_2}$.
Therefore,
\[
\sum_{i \in C} \frac{n}{|C|}\sum_{i \in C} \wwi(\mupi-\muave(C))(\mupi- \muave(C))^t = \sum_{i \in C} \eta^2_i \dvec_i \dvec^t_i.
\]
Hence by Lemma~\ref{lem:matspan}, the projection of $\dvec_i$ on the space orthogonal to top $k-1$ eigenvectors 
of $S(C)$ is
\[
\leq  \sqrt{ \frac{\epsilon^{2}\sigma^2}{1000k^2} \frac{n}{|C|}} \frac{1}{\eta_i} \leq \frac{\epsilon \sigma}{16\sqrt{\wwi} \norm{\mupi-\muave(C)}_2k} 
\leq \frac{\epsilon \sigma}{8\sqrt{2}\sqrt{\wi} \norm{\mupi-\muave(C)}_2k}.
\]
The last inequality follows from the bound on $\wwi$ in Lemma~\ref{lem:aux0}.

\subsection{Proof of Theorem~\ref{thm:ksphere}}
\label{app:thmksphere}
We show that the theorem holds if the conclusions in Lemmas~\ref{lem:spancluster} and~\ref{lem:aux2} holds with error
probability $\delta' = \delta/k$.
Since in the proof of Lemma~\ref{lem:spancluster}, the probability that Lemma~\ref{lem:reccluster} holds is included,
Lemma~\ref{lem:reccluster} also holds with the same probability. Since there
are at most $k$ clusters, by the union bound the total error probability is $\leq 9\delta$.

For every component $i$, we show that there is a choice of mean vector
and weight 
in the search step such that $\wi\lone{\pveci}{\wpveci} \leq \epsilon/2k$ and $|\wi-\wwi| \leq \epsilon/4k$.
That would imply that there is a $\wfvec$ during the search such that
\begin{align*}
\lone{\fvec}{\wfvec} & \leq \sum_{C} \sum_{i \in C} \wi \lone{\pveci}{\wpveci} +2\sum^{k-1}_{i=1} |\wi -\wwi| \leq \frac{\epsilon}{2k}+\frac{\epsilon}{2k} = \epsilon.
\end{align*}
Since the weights are gridded by $\epsilon/4k$, there exists a $\wwi$ such that  $|\wi-\wwi| \leq \epsilon/4k$.
We now show that there exists a choice of mean vector such that $\wi\lone{\pveci}{\wpveci} \leq \epsilon/2k$. 
%
Note that if a component has weight $\leq \epsilon/4k$, the above inequality follows immediately.
Therefore we only look at those components with $\wi \geq \epsilon/4k$, by Lemma~\ref{lem:aux0},
for such components $\wwi \geq \epsilon/5k$ and therefore we only look at clusters such that $|C| \geq n \epsilon/5k$.
By Lemmas~\ref{lem:l1bha} and for any $i$,
\begin{align*}
 \lone{\pveci}{\wpveci}^2 
& \leq 2 \sum^d_{j=1} \frac{(\mupij -\wmupij)^2}{\sigma^2} +8d \frac{(\sigma^2-\wsigma^2)^2}{\sigma^4}.
\end{align*}
Note that since we are discarding at most $n\epsilon/8k^2$ random samples at each step. A total number of $\leq n\epsilon/8k$
random samples are discarded. It can be shown that this does not affect our calculations 
and we ignore it in this proof.
By Lemma~\ref{lem:sigconc}, the first estimate of $\sigma^2$ satisfies 
$|\wqsigma^2- \sigma^2| \leq 2.5\sigma^2 \sqrt{\log n^2/\delta}$. Hence while searching over values
of $\wsigma^2$, there exist one such that $|\sigma'^2 - \sigma^2| \leq \epsilon\sigma^2/\sqrt{64dk^2}$.
Hence,
\begin{align*}
 \lone{\pveci}{\wpveci}^2 
& \leq 2 \frac{\norm{\mupi -\wmupi}^2_2}{\sigma^2} + \frac{\epsilon^2}{8k^2}.
\end{align*}
Therefore if we show that there is a mean vector $\wmupi$ during the search such that $\norm{\mupi-\wmupi}_2\leq \epsilon\sigma/\sqrt{16k^2\wwi}$,
that would prove the Lemma. By triangle inequality,
\[
\norm{\mupi - \wmupi}_2 \leq \norm{\muave(C)-\wmuave(C)}_2 + \norm{\mupi-\muave(C) - (\wmupi - \wmuave(C))}_2. 
\]
By Lemma~\ref{lem:aux2} for large enough $n$,
\[
\norm{\muave(C)-\wmuave(C)}_2 \leq c\sigma \sqrt{\frac{dk\log^2 n^2/\delta}{|C|}} \leq  \frac{\epsilon \sigma}{8k\sqrt{\wi}}.
\]
The second inequality follows from the bound on $n$ and the fact that $|C| \geq n \wwi$. Since $\wi \geq \epsilon/4k$, 
by Lemma~\ref{lem:aux0}, $\wwi \geq \wi/2$, we have 
\[
 \norm{\mupi - \wmupi}_2 \leq \norm{\mupi-\muave(C) -(\wmupi - \wmuave(C))}_2 + \frac{\epsilon \sigma}{8k\sqrt{\wi}}. 
\]
Let $\uvec_1 \ldots \uvec_{k-1}$ are the top eigenvectors the
sample covariance matrix of cluster $C$.
We now prove that during the search, there is a vector of the form $\sum^{k-1}_{j=1} g_j \epsilon_g \wsigma \uvec_j$ such that $\norm{\mupi-\muave(C) - 
\sum^{k-1}_{j=1} g_j \epsilon_g \wsigma \uvec_j}_2 \leq \frac{\epsilon \sigma}{8k\sqrt{\wi}}$,
during the search, thus proving the lemma. Let $\eta_i  = \sqrt{\wi} \norm{\mupi-\muave(C)}_2$.
By Lemma~\ref{lem:spancluster}, there are set of coefficients $\alpha_i$ such that
\[
\frac{\mupi-\muave(C)}{\norm{\mupi -\muave(C)}}_2 = \sum^{k-1}_{j=1} \alpha_j \uvec_j + \sqrt{1-\norm{\alpha}^2} \uvec',
\]
where $\uvec'$ is perpendicular to $\uvec_1 \ldots \uvec_{k-1}$ and $\sqrt{1-\norm{\alpha}^2} \leq \epsilon \sigma /(8\sqrt{2}\eta_i k)$.
Hence, we have 
\[
\mupi-\muave(C) = \sum^{k-1}_{j=1} \norm{\mupi -\muave(C)}_2 \alpha_j \uvec_j + \norm{\mupi -\muave(C)}_2 \sqrt{1-\norm{\alpha}^2_2} \uvec',
\]
Since $\wi \geq \epsilon/4k$ and by Lemma~\ref{lem:reccluster}, $\eta_i \leq 25\sqrt{k^{3}}\sigma \log (n^3/\delta)$, and $\norm{\mupi-\muave(C)}_2 \leq  100\sqrt{k^{4}\epsilon^{-1}} \sigma \log (n^3/\delta)$.
Therefore $\exists g_j$ such that $|g_j\wsigma-\alpha_j| \leq \epsilon_g \wsigma$ on each eigenvector. Hence,
\begin{align*}
\wi \norm{\mupi-\muave(C)  -  \sum^{k-1}_{i=1} g_j \epsilon_g \wsigma \uvec_j}^2_2 & \leq \wi k \epsilon^2_g \wsigma^2+  \wi\norm{\mupi -\muave(C)}^2_2 (1-\norm{\alpha}^2) \\
& \leq  k \epsilon^2_g \wsigma^2 +  \eta^2_i \frac{\epsilon^2 \sigma^2}{128 \eta^2_i k^2} \\
& \leq \frac{\epsilon^2\sigma^2}{128k^2} + \frac{\epsilon^2\sigma^2}{128k^2} \leq \frac{\epsilon^2\sigma^2}{64k^2}.
\end{align*}
The last inequality follows by Lemma~\ref{lem:sigconc} and the fact that $\epsilon_g \leq \epsilon/16k^{3/2}$, and hence the theorem.
The run time can be easily computed by retracing the steps of the algorithm and using an efficient implementation of single-linkage.
%
%
%

\end{document}